\def\BibTeX{{\rm B\kern-.05em{\sc i\kern-.025em b}\kern-.08em
    T\kern-.1667em\lower.7ex\hbox{E}\kern-.125emX}}
\newtheorem{hyp}{Hypothesis}
\newtheorem{thm}{Theorem}
\newtheorem{prop}[thm]{Proposition}
\newtheorem{lemma}[thm]{Lemma}
\newtheorem{cor}[thm]{Corollary}
\newtheorem{definition}{Definition}
\newcommand{\proposal}{LAM}
\newcommand{\proposalextended}{Local Attention Mechanism}
\title{Local Attention Mechanism: Boosting the Transformer Architecture for Long-Sequence Time Series Forecasting}
\author{
 Ignacio Aguilera-Martos \textsuperscript{*}, Andrés Herrera-Poyatos \textsuperscript{*}, Julián Luengo \textsuperscript{*},   Francisco Herrera \textsuperscript{*} \textsuperscript{$\dagger$}
}
\affil{\textsuperscript{*} Andalusian Institute of Data Science and Computational Intelligence (DaSCI), University of Granada, Spain. \\ Emails: \texttt{divadhp@ugr.es}, \texttt{jdrull@correo.ugr.es}, \texttt{rosana@ugr.es}, \texttt{herrera@decsai.ugr.es}, \texttt{andreshp@ugr.es}}
\affil{\textsuperscript{$\dagger$} ADIA Lab, AI Maryah Island, Abu Dhabi, United Arab Emirates}
\begin{document}

\maketitle

\begin{abstract}
Transformers have become the leading choice in natural language processing over other deep learning architectures. This trend has also permeated the field of time series analysis, especially for long-horizon forecasting, showcasing promising results both in performance and running time. 

In this paper, we introduce \proposalextended\ (\proposal), an efficient attention mechanism tailored for time series analysis. This mechanism exploits the continuity properties of time series to reduce the number of attention scores computed. We present an  algorithm for implementing \proposal\ in tensor algebra that runs in time and memory $\Theta(n \log n)$, significantly improving upon the $\Theta(n^2)$ time and memory complexity of traditional attention mechanisms. We also note the lack of proper datasets to evaluate long-horizon forecast models. Thus, we propose a novel set of datasets to improve the evaluation of models addressing long-horizon forecasting challenges.

Our experimental analysis demonstrates that the vanilla transformer architecture magnified with \proposal\ surpasses state-of-the-art models, including the vanilla attention mechanism. These results confirm the effectiveness of our approach and highlight a range of future challenges in long-sequence time series forecasting.
\end{abstract}

\keywords{time series, transformer, attention mechanism, long sequence time series forecasting, time series forecasting}

\section{Introduction}
\label{sec:introduction}

We are currently living in the information age, with vast amounts of data being continuously generated and requiring processing to maximise their utility. From this type of data, time series emerge as the natural choice for representation in several applications~\citep{lin2003symbolic}. Time series, which consist of sequentially related data, are commonly encountered in sectors such as finance~\citep{zhang2024deep}, industry~\citep{yan2024comprehensive}, telecommunications~\citep{di2024hybrid}, electricity~\citep{gulay2024forecasting}, and climatology~\citep{karevan2020transductive}. 

Time series forecasting involves predicting future time instances based on past information. An specific scenario of this problem is Long Sequence Time Series Forecasting (LSTF)~\citep{chen2023long}, which implies a longer prediction horizon and represents the most interesting and complex problem in time series forecasting. One of the main concerns of the LSTF problem is capturing long-term trends in the data. To adequately predict these long-term patterns, a substantial number of instances is required to model complex behaviour effectively.

Traditionally, the forecasting problem has been addressed using statistical methods like ARIMA~\citep{arima}. However, these models are prone to significant long-term error accumulation~\citep{arima-limitations}, prompting the exploration of new alternatives within the field of Deep Learning. Convolutional models for time series~\citep{8489399,electronics8080876} leverage the concept correlation in consecutive time steps. Various models have been developed based on this architecture, typically incorporating recurrent layers~\citep{HEWAMALAGE2021388} such as Long-Short Term Memory (LSTM)~\citep{10.1115/1.4056122,DUDUKCU2023109945,AGUILERAMARTOS2023223}. This type of network faces clear limitations due to the complexity of recurrent operations~\citep{lipton2015critical} and the lack of capacity to capture long-term time dependencies~\citep{al2023lstm}. 

Transformers, originally proposed for natural language processing~\citep{vaswani2017attention}, have been incorporated into the field of time series and forecasting to address these needs. Nonetheless, current transformer models present two main limitations. First of all, the computational evaluation of the attention mechanism requires significant time and memory due to its quadratic complexity on the length of the input. Secondly, the utilisation of attention mechanisms results in a significant loss of positional information, which poses a critical challenge for time series forecasting~\citep{zeng2023transformers}. As a first step to address the computational cost issue, an efficient attention mechanism based on probabilistic sampling of attentions has been proposed in~\citep{zhou2023informer}. However, to date, methodologies employing transformers for time-series forecasting have not sufficiently mitigated the loss of positional information~\citep{zeng2023transformers}. Consequently, emerges a necessity for an innovative attention mechanism that leverages the local and long-term temporal dependency properties of time series, does not incur information loss and minimises computational and memory overheads.

Previously existing transformer models within the field of time series do not exploit this conception of locality and present experimental studies with data sets of limited size, not only in terms of features but particularly in terms of instances. If the goal is capturing long-term dependencies, large datasets are needed to adequately train the model weights. The primary defining feature of a time series is the dependence of past time steps on the current time instant~\citep{bloomfield2004fourier}. This dependence is more likely to be higher in closer time steps, specially in the case where the time series comes from discretizising a continuous stochastic process.

This paper aims to achieve two primary objectives: 
\begin{itemize}
    \item We introduce \proposalextended\  (\proposal), a locality-based attention mechanism designed to enhance both the efficiency and performance of time series transformers. This novel attention mechanism reduces computational and memory complexity to $\Theta(n \log n)$ by excluding non-relevant computations in the attention matrix through the use of an additive mask. The key contribution here is a derivation of this attention mechanism that can be written in tensor algebra in such a way that only $\Theta(n \log n)$ memory is needed.
    \item We propose a more suitable benchmark for the LSTF problem, comprising four interdisciplinary and substantially larger datasets than previous studies. This new experimental environments encompass datasets related to taxi usage, electricity consumption and industrial real-world problems.
\end{itemize}

To validate the proposed approach, we have divided the experimental framework in three scenarios. First, \proposal\ will be compared within the state-of-the-art in transformer models and most used datasets. Secondly, Informer's attention mechanism (ProbAttention) and \proposal\ will be compared under the same architecture to provide a fair comparison. Finally, we propose a new dataset collection for the LSTF problem, requiring a wider prediction horizon and larger datasets. Thanks to this comprehensive comparison, we cover all possible cases and provide robust results.

The rest of this paper is structured as follows. In Section~\ref{sec:time-series-forecasting}, we discuss the background on transformers and the LSTF problem. In Section~\ref{sec:local-attention}, we detail \proposal\, introducing all relevant details. In Section~\ref{sec:informer-experimental-setup}, we compare our approach with the state-of-the-art methods using currently employed LSTF datasets. In Section~\ref{sec:informer-experimental-setup:ProbAttention}, we compare ProbAttention and \proposal\ within the vanilla transformer architecture. In Section~\ref{sec:lstf-experimental-setup}, we extend the datasets used to better represent the complexities of the LSTF problem and establish a more comprehensive benchmark. Finally, we present our conclusions and outline future work in Section~\ref{sec:conclusions-future-work}.

\section{Transformers for Long Sequence Time-series Forecasting}
\label{sec:time-series-forecasting}
\label{sec:time-series-forecasting:subsec:lstf}

A time series is a sequence of random vectors $X_t \in \mathbb{R}^d$ recorded at regular time intervals, typically denoted as $\{X_t\}_{t=1}^T$. We use $X_t$ to refer to the random vector and $x_t$ to represent the observation at time $t$~\citep{lai2018modeling}. Time series data is characterised by its temporal dependency, where each observation relies on previous values, potentially exhibiting long-term patterns~\citep{hyndman2018forecasting}. Such data often features distinct elements: trend (long-term data direction), seasonality (predictable periodic fluctuations) and cyclic patterns (less regular variations influenced by external factors). In this work we focus on time series forecasting, which corresponds to the problem of estimating $\mathbb{E} [ X_{t+1} \vert X_{1}=x_1, \ldots, X_{t}=x_t ]$. Specifically, we address LSTF problem~\citep{chen2023long}, where the goal is predicting long sequences of future data points in datasets with extensive historical records, requiring of the application of advanced techniques to understand complex patterns and trends.

In the rest of this section, we describe the elements of the vanilla transformer, which will serve as the base model for our experimentation, and the probabilistic attention mechanism of~\citep{zhou2023informer}. This section is organised as follows. In Section~\ref{sec:time-series-forecasting:subsec:forecasting} we present the background of time series forecasting using neural networks. In Section~\ref{sec:trans:full-attention} we briefly recapitulate the canonical attention mechanism introduced in~\citep{vaswani2017attention}. In Section~\ref{sec:trans:prob-attention} we summarise the concept of probabilistic attention and point out some of its weaknesses. Finally, in Section~\ref{sec:trans:encoder-decoder} we summarise the encoder-architecture of the vanilla transformer along with the positional encoding.

\subsection{Time series forecasting using neural networks}
\label{sec:time-series-forecasting:subsec:forecasting}

Neural networks, particularly one-dimensional convolutional neural networks (1D CNNs) and Long Short-Term Memory (LSTM) networks, offer black-box solutions for time series forecasting~\citep{KhanZulfiqar,KimTaeYoung}. 1D CNNs use convolutional layers to identify local patterns and hierarchies within time series data, but are unable to capture longer time dependencies \citep{al2023lstm}. Indeed, short-term prediction are greatly affected by locality, that is, the last few elements of the sequence greatly determine the trend of the model and, thus, the short-term prediction, unless seasonality and cyclic patterns play a role at that specific timestamp. LSTMs are able to take into account seasonality and cyclic patterns for medium-term dependencies thanks to the use memory cells, overcoming the vanishing gradient problem via gating mechanisms. However, the recursive nature of LSTMs and the memory cells limitations makes them less efficient and effective for long-term predictions~\citep{zhou2023informer}. 

Transformers have surpassed RNNs and LSTMs in several applications, such as NLP, thanks to their ability to model long-range dependencies, learn from large datasets, and parallelise computations efficiently. Motivated by these properties, transformers have been deemed as excellent candidates to the LSTF problem~\citep{li2019enhancing, zhou2023informer}. Indeed, unlike traditional methods that rely on fixed time windows, lagged observations and/or recursive neural networks, transformers can process long sequences efficiently, making them suitable for both short-term and long-term forecasting tasks \citep{lim2021temporal}.

Transformers are complex encoder-decoder architectures that encode the sequence of samples and use the extracted information to predict the several future elements of the sequence via a decoder. The key element to understand these architectures is the attention mechanism, see~\citep{vaswani2017attention}. In a nutshell, the main conclusion of~\citep{vaswani2017attention} is that performance in NLP applications can be significantly improved by eliminating the recurrence structure of deep learning models and instead focusing exclusively on exploiting the attention mechanism to extract relevant information from sequences of data. In this work we refer to this encoding-decoding model as the \emph{vanilla transformer}. 

When it comes to the application of transformers to time-series problems, progress has been significantly slower than in NLP. This is due to the fact that the attention mechanism presents two main limitations: loss of positional information, and computational and memory cost~\citep{li2019enhancing}. The former problem is solved in the application of the vanilla transformer to NLP problems via a positional encoding of the input, and residual connections. However, the validity of this approach in the case of time series remains controversial, see, for instance~\citep{zeng2023transformers}, suggesting that further research is needed. On the other hand, improving the computational and memory cost of the attention mechanism has actually been addressed in the time-series literature. The evaluation of the original attention mechanism has time and memory complexity $\Theta(n^2)$, where $n$ is the width of this layer. As a consequence of the memory bottleneck, not every data scientist has access to the computing resources needed to train and evaluate a transformer model with the full attention mechanism on large sequences of data, preventing transformers from becoming a machine learning state of the art in time series forecasting in practice. The most remarkable solution to accelerate the attention mechanism for time series forecasting is that of~\citep{zhou2023informer}, where the authors propose a probabilistic attention mechanism that runs in time and memory $\Theta(n \log n)$. Other proposed solutions to this problem involve applying other sorts of mechanisms to the sequence of data instead of the attention mechanism, see for instance~\citep{zhou2022fedformer}.

\subsection{Attention Mechanism}
\label{sec:time-series-forecasting:subsec:transformers-for-time-series-forecasting:subsubsec:attention}
\label{sec:trans:full-attention}

The attention mechanism is a layer of a neural network whose inputs consist of three sequences with $n$ elements: the lists of queries and keys, each entry being a vector of dimension $d_q$, and the list of values, each entry having dimension $d_v$. For example, when applied to time series as a first layer, these three sequences could be set to the input of model, that is, $n$ consecutive data points of the time series, $X = (x_1, x_2, \ldots, x_n)$. In this particular case, $d_q = d_v = d_{\operatorname{model}}$, so each of the three inputs of the attention layer is actually an $n \times d_{\operatorname{model}}$ matrix. In the general case, the queries, keys and values are matrices $Q$, $K$ and $V$ with dimensions $n \times d_q, n \times d_q$ and $n \times d_v$, respectively.

The \emph{attention matrix} of $Q$ and $K$ is defined as
\begin{equation}
    \operatorname{FullAttentionScores(Q, K)} = \operatorname{softmax}\left( \frac{QK^T}{\sqrt{d_q}}\right),
\end{equation}
which has dimension $n \times n$.  Note that the $i$-th row of $QK^T$ consists of the scalar products of the $i$-th query with all the keys. Provided that the scalar product of two vectors $a$ and $b$ satisfies $\langle a, b \rangle = \lVert a \rVert_2 \lVert b \rVert_2 \cos \theta$, where $\theta$ is the angle between the vectors $a$ and $b$. Thus, the $i$-th row of $Q K^T$ ranges from $-\lVert Q_i \rVert_2 \lVert K_j \rVert_2$ to $\lVert Q_i \rVert_2 \lVert K_j \rVert_2$. The softmax function here guarantees that attention matrix is stochastic, that is, each row consists of non-negative real numbers that add up to one. The entries of the attention matrix are called \emph{attention scores}. The canonical attention mechanism (refereed in this work as full attention mechanism) is then defined as 
\begin{equation}
    \operatorname{FullAttn(Q, K, V)} = \operatorname{softmax}\left( \frac{QK^T}{\sqrt{d_q}}\right) V,
\end{equation}
so the $i$-th output is a weighted average of the values. The scale factor $1 / 
\sqrt{d_q}$ does not change the output of $\operatorname{softmax}$; its purpose is to help scaling the gradients of $\operatorname{softmax}$ during training in order to avoid gradients too close to $0$, which may decrease the rate at which a transformer is able to learn from data.

The most common application of attention in transformers corresponds to the case $Q = K$. In this setting, the full attention mechanism resembles a smooth non-parametric regression~\citep{watson1964smooth, johnston1979smooth} of $V$ in terms of $Q = K$, ~\citep{nguyen2022fourierformer}, effectively providing a smoothing of $V$. We refer to \emph{self-attention} as the case when $Q = K = V$. Intuitively, self-attention weighs in the similarity of different elements in a sequence and smooths the input data. Empirically this process has been shown to be highly effective in  capturing both local and global patterns in data~\citep{feng2024attention}. However, this intuition has not been corroborated from a theoretical point of view, where mathematical results have only managed to unveil the Lipschitz constant of the attention mechanism~\citep{vuckovic2020mathematical}.

Transformers use an improvement of the attention mechanism, known as multi-head attention. Given an attention mechanism $\operatorname{Attn(\cdot, \cdot, \cdot)}$, which may be $\operatorname{FullAttn}$ or one of the other mechanisms that we will introduce later in this work. The idea here is applying linear projections (on the dimension of the characteristics) to learn $h$ representations of the inputs $Q$, $K$ and $V$. The weights of these linear projections will be optimised in the learning phase, aiming to learn representations that boost the performance of the attention mechanism. Then the attention mechanism is applied to each of the $h$ triples formed by the projections of $Q$, $K$ and $V$. Here $h$ is called the \emph{number of heads}. Formally, for positive integers $d_a$ and $h$, the \emph{multi-head attention layer} is defined as
\begin{equation*}
\begin{aligned}
    \operatorname{MultiHead}(Q, K, V) & = \operatorname{Concat}(head_1, \ldots, head_h) W^O, \\
    \text{where } head_i & = \operatorname{Attn}(Q W_i^Q, K W_i^K, V W_i^V),
\end{aligned}
\end{equation*}
and the matrices $W_i^Q$, $W_i^K$, $W_i^V$ and $W_i^O$ are parameter matrices of the model with $W_i^Q \in \mathbb{R}^{d_a \times d_q}$, $W_i^K\in \mathbb{R}^{d_a \times d_q}$, $W_i^V\in \mathbb{R}^{d_a \times d_v}$ and $W_i^O\in \mathbb{R}^{h d_a \times d_v}$. Here the $\operatorname{Concat}$ operator concatenates tensors $head_1, head_2, \ldots, head_h$ on the last dimension.

One usually picks $h$ and $d_a$ such that $h d_a = d_v$. In the time series literature  $h$ is typically set to the number of variables in the model, so $d_a = 1$. We highlight that the number of parameters of this layer is $d_a ( 2 d_q + (h+1) d_v)$, which does not depend on $n$. Thus, the multi-head attention mechanism allows us to add extra layers to the model without significantly increasing the number of parameters. 

A critique to the full attention mechanism is the potential loss of positional information. First of all, note that the output of $\operatorname{FullAttn(Q, K, V)}$ is a matrix whose $i$-th row is $\sum_{j = 1} a_{i,j} V_j$. This average $\sum_{j = 1} a_{i,j} V_j$ results in a partial loss of position as it may be the case that $a_{i,j}$ is large for a $V_j$ that is far away from $V_i$ in the input sequence. That is, we lose the sense of temporal locality of the values as we average values with entries far away in the input sequence. We formalise this loss of local information is in Proposition~\ref{prop:perm}, whose proof can be found in Appendix~\ref{appendix:theoretical-proofs}. First, let us introduce some notation. For a permutation $\pi \colon \{0, 1, \ldots, n-1\} \to \{0, 1, \ldots, n-1\}$, the \emph{permutation matrix} $P_\pi$ is the matrix with dimensions $n \times n$ defined as $P_\pi[i,j] = 1$ for $j = \pi(i)$ and $P_\pi[i,j] = 0$ otherwise. 

\begin{prop} \label{prop:perm}
In the case of the full attention mechanism, for any positive integers $h$ and $d_a$ and any permutation matrix $P_\pi$, we have 
\begin{equation*}
    \operatorname{MultiHead}(P_\pi Q, P_\pi K, P_\pi V) =  P_\pi \operatorname{MultiHead}(Q, K, V).
\end{equation*}
\end{prop}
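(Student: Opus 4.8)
The plan is to peel off the layers of $\operatorname{MultiHead}$ one at a time, reducing the claim to a single elementary fact about how $\operatorname{softmax}$ interacts with permutations. Throughout I will use two basic properties of a permutation matrix $P_\pi$: it is orthogonal, so $P_\pi^T P_\pi = I$, and left multiplication by $P_\pi$ (which permutes rows) commutes with any right multiplication acting on the feature dimension, i.e. $(P_\pi A) W = P_\pi (A W)$.

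First I would reduce to the single-head case. Since $\operatorname{Concat}$ acts on the feature (last) dimension, permuting the rows of every $head_i$ permutes the rows of the concatenation, so $\operatorname{Concat}(P_\pi head_1, \ldots, P_\pi head_h) = P_\pi \operatorname{Concat}(head_1, \ldots, head_h)$, and right multiplication by $W^O$ then commutes with $P_\pi$ by the remark above. Likewise, inside $head_i = \operatorname{Attn}(Q W_i^Q, K W_i^K, V W_i^V)$, replacing $Q, K, V$ by $P_\pi Q, P_\pi K, P_\pi V$ replaces the projected inputs by $P_\pi (Q W_i^Q)$, $P_\pi (K W_i^K)$, $P_\pi (V W_i^V)$. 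Hence it suffices to prove the equivariance of $\operatorname{FullAttn}$ itself: for matrices $\tilde Q, \tilde K, \tilde V$ of the appropriate shapes, $\operatorname{FullAttn}(P_\pi \tilde Q, P_\pi \tilde K, P_\pi \tilde V) = P_\pi \operatorname{FullAttn}(\tilde Q, \tilde K, \tilde V)$.

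For this last identity, write $M = \tilde Q \tilde K^T / \sqrt{d_q}$. Then $(P_\pi \tilde Q)(P_\pi \tilde K)^T / \sqrt{d_q} = P_\pi \tilde Q \tilde K^T P_\pi^T / \sqrt{d_q} = P_\pi M P_\pi^T$, whose $(i,j)$ entry is $M[\pi(i), \pi(j)]$. The key step is that row-wise $\operatorname{softmax}$ is equivariant under this simultaneous row-and-column permutation: each row $i$ of $P_\pi M P_\pi^T$ is the vector $(M[\pi(i), \pi(0)], \ldots, M[\pi(i), \pi(n-1)])$, i.e. a reordering by $\pi$ of the $\pi(i)$-th row of $M$, and $\operatorname{softmax}$ of a reordered vector is the reordering of its $\operatorname{softmax}$; therefore $\operatorname{softmax}(P_\pi M P_\pi^T) = P_\pi \operatorname{softmax}(M) P_\pi^T$. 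Multiplying on the right by $P_\pi \tilde V$ and using $P_\pi^T P_\pi = I$ gives $P_\pi \operatorname{softmax}(M) P_\pi^T P_\pi \tilde V = P_\pi \operatorname{softmax}(M) \tilde V = P_\pi \operatorname{FullAttn}(\tilde Q, \tilde K, \tilde V)$, as required.

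The only real obstacle is the $\operatorname{softmax}$ step, and it is a matter of careful bookkeeping rather than depth: one must remember that $\operatorname{softmax}$ is applied per row while conjugation by $P_\pi$ moves both rows and columns, so that the column permutation inside each row is exactly what lets $\operatorname{softmax}$ pass through, while the row permutation merely relocates the whole (already normalised) row. Once the index identity $(P_\pi M P_\pi^T)[i,j] = M[\pi(i),\pi(j)]$ is pinned down, every other step is immediate from the two commutation properties stated at the outset.
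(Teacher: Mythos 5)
Your proof is correct and follows essentially the same route as the paper's: reduce to the single-head case via the observation that $\operatorname{Concat}$ acts on the last dimension and the projections commute with row permutation, then show the attention scores of the permuted inputs are $a_{\pi(i),\pi(j)}$. The only difference is cosmetic — you phrase the key step as the conjugation identity $\operatorname{softmax}(P_\pi M P_\pi^T) = P_\pi \operatorname{softmax}(M) P_\pi^T$ together with $P_\pi^T P_\pi = I$, whereas the paper writes the same fact in index/summation form.
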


As we will see in Section~\ref{sec:trans:encoder-decoder}, usually only the last layer of the vanilla transformer is a fully-linear layer; the rest of layers are attention layers with projections acting on the dimension of the characteristics. Therefore, positional information is only exploited in these last fully-linear layers. However, during the repeated application of attention layers positional information is reduced due to the averages carried out in the full attention mechanism. This issue is addressed by the implementation of the positional encoding, which aims to minimise this reduction of positional information when applying attention layers. However, it seems that, in the case of time series, positional encoding is not enough. Indeed, in experiments, on average a vanilla transformer trained for a particular time series produces similar predictions on an original input and a permuted version of this input ~\citep{zeng2023transformers}.

\subsection{Probabilistic Attention}
\label{sec:time-series-forecasting:subsec:probabilistic-attention}
\label{sec:trans:prob-attention}

Among state-of-the-art models, Informer is distinguished as one of the most frequently cited and top-performing~\citep{zhou2023informer}. The authors propose a probabilistic attention mechanism that runs in time and memory $\Theta(n \log n)$. The key idea here is selecting $u$ queries, where $u = 5 \lceil \log n \rceil$, and setting the rest of the queries to zeroes. To select these queries, the authors introduce a sparsity measurement for the queries, which approximates the Kullback-Leibler divergence of the attention scores of a query and the uniform distribution. Finding the queries whose attention scores have largest sparsity measurement would require to compute the whole attention matrix to begin with. Instead, the authors propose a heuristic method: for each query $Q_i$, they sample $\Theta(\log n)$ scalar products of the form $\langle Q_i, K_j \rangle$ and approximate the sparsity measurement of the query $Q_i$ using these samples. Let us denote by $\operatorname{ProbSelect}(Q, K)$ the method that selects the top $u$ queries of $Q$ with this sampling heuristic and masks the rest of the queries to zeroes. Then, the probabilistic attention mechanism is 
\begin{equation*}
    \operatorname{ProbAttn(Q, K, V)} = \operatorname{softmax}\left( \frac{\operatorname{ProbSelect}(Q, K) K^T}{\sqrt{d_q}}\right) V.
\end{equation*}
The product $\operatorname{ProbSelect}(Q, K) K^T$ does not have to be fully computed. Indeed, for each query $Q_i$ that is not selected, the corresponding attention scores are uniform, that is, $a_{ij} = 1/n$, as $Q_i = 0$. Thus, we only have to carry out $n$ scalar products for each one of the $u = \Theta(\log n)$ rows selected to calculate $\operatorname{ProbAttn(Q, K, V)}$, leading to $\Theta(n \log n)$ complexity. 

Even though the ProbAttn method does significantly reduce the time and memory complexity of the attention mechanism and seems to be able to capture the most significant attention scores in practice according to the experiments carried out in~\citep{zhou2023informer}, it does lead to a significant loss of information. Indeed, the output of $\operatorname{ProbAttn(Q, K, V)}$ has $n - u$ entries that are equal to the average of the values. We believe this to be a major drawback of this attention mechanism, and this is corroborated in our experimentation. The authors of~\citep{zhou2023informer} try to fix this issue by applying residual connections in their model, as well as a complex architecture that exploits convolutions in conjunction with ProbAttn. However, our experiments show that this is not enough, as the vanilla transformer is able to significantly outperform Informer.

\subsection{The Encoder-Decoder architecture and Positional Encoding} \label{sec:trans:encoder-decoder}

The encoder-decoder architecture in transformers forms the backbone of many advanced models in natural language processing and beyond. In this architecture, the encoder processes the input sequence, transforming it into a rich, contextual representation. Here we briefly describe the architecture proposed in~\citep{vaswani2017attention}, which we use in our model in conjunction with \proposal. Each encoder layer consists of a multi-head self-attention mechanism, which allows the model to weigh the importance of different parts of the input relative to each other, and two linear projections acting on the dimension of the variables (i.e. the last dimension of the input) with activation functions. Formally, these linear projections act as the matrix $W^O$ in the definition of $\operatorname{MultiHead}$, they are represented as $d_{model}\times d_{model}$ matrices, but they have LeakyReLu activation functions on the outputs. The purpose of projection layers is to learn a domain representation that boosts the performance of the attention mechanism. 

The decoder layers are similar to the encoder layers. The main difference is that they apply two multi-head self-attention layers. In the first multi-head self-attention, the queries, keys and values are set to the input of the layer, whereas in the second one the keys and queries are set to the output of the encoder and the values to the input of the layer. This is illustrated in Figure~\ref{fig:vanilla-transformer}. This second attention layer that helps the decoder to focus on relevant parts of the input sequence, enhancing its ability to produce contextually relevant outputs. We highlight the presence of residual connections in the encoder and decoder layers. These residual connections consist in adding the input of the attention mechanism its the output, reducing the loss of positional information. 

Once the encoder and decoder layers are defined, the architecture of the vanilla transformer is relatively simple: the encoder consists of $N$ consecutive applications of the encoder layer and the decoder consists of $N$ consecutive applications of the decoder layer. The value of $N$ can be picked in each application, and will be provided it in each one of our experiments. We note that increasing $N$ does not significantly change the number of parameters of the model, as the feed forward layers applied in the encoder and decoder layers are linear projections, as described above, which have $d_{model}^2$ parameters. That is, the number of parameters in the encoder and the decoder layers is $\Theta(d_{model}^2)$.

Finally, the output of the last decoder layer goes through a linear layer. To reduce the number of parameters, we have implemented this layer as a projection on the time dimension, that is, if the input of the transformer has $n$ elements of the time series, this linear layer corresponds to a matrix $n \times m$ that $m$ computes linear combinations of the $n$ points produced in the last decoding layer, where $m$ is the number of points that we want to forecast. This is the output of our model, as we do not need to perform a softmax layer to compute probabilities. 

\begin{figure}[!hbt]
    \centering
    \includegraphics[scale=0.5]{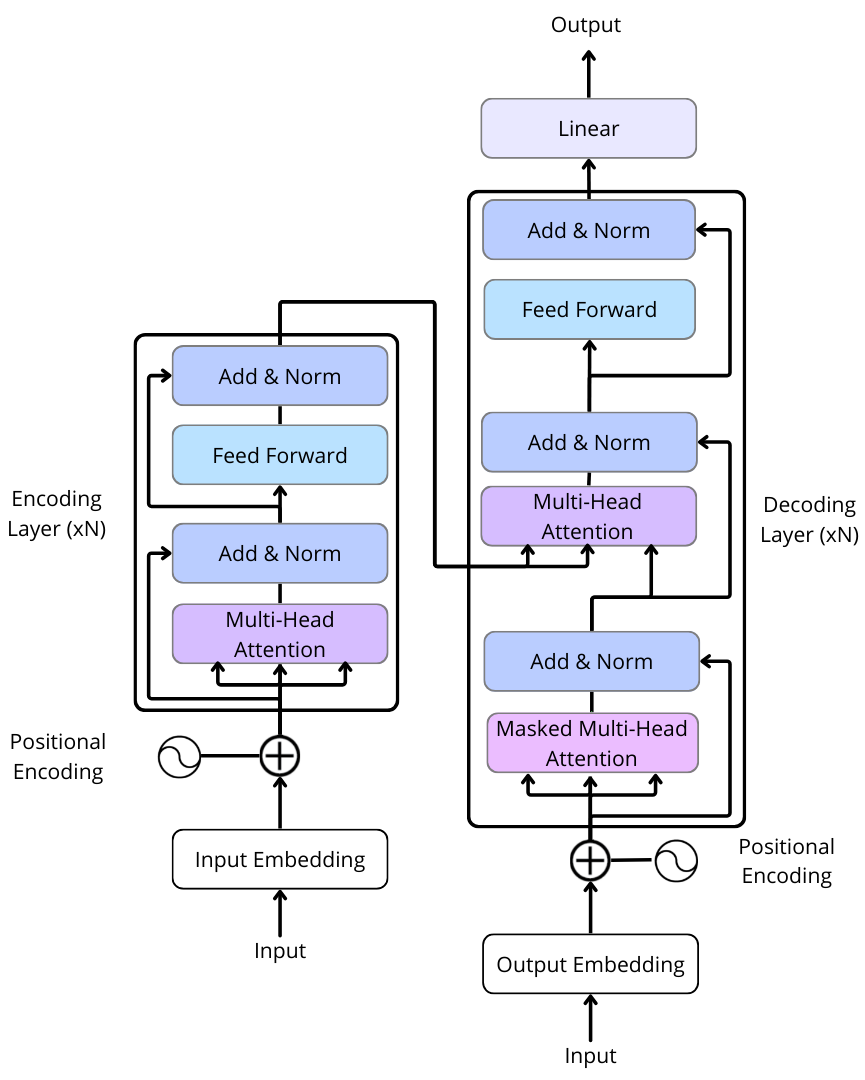}
    \caption{Transformer architecture used for LAM.}
    \label{fig:vanilla-transformer}
\end{figure}

Positional encoding in transformers is essential for providing sequence information to the model, which inherently lacks an understanding of order due to its self-attention mechanism (see Proposition~\ref{prop:perm}). Positional encoding involves adding a positional tensor to the input sequence, which is usually derived from sine and cosine functions, to indicate each entry relative or absolute position. In our experiments, we add to the input matrix the positional encoding employed in the vanilla model,
\begin{equation*}
    \operatorname{PE}(i,j) = \sin(i / 10000^{j / d_{model}}) + \cos(i / 10000^{j / d_{model}}).
\end{equation*}

\section{\proposalextended}
\label{sec:local-attention}

In this section we introduce \proposal\ and an efficient algorithm, based on expressing \proposal\ in tensor algebra, that leads to $\Theta(n \log n)$ memory and time complexity. This algorithm has been implemented in stock Pytorch and TensorFlow without the need for any low level CUDA code. 

\subsection{Theoretical definition}
\label{sec:local-attention:subsec:theoretical}

The high-level fundamental behind \proposal\ is exploiting locality in order to re-define the attention mechanism. Previous studies on time series have empirically shown that the rows of the attention scores matrix behave as long tail distributions~\citep{li2019enhancing, zhou2023informer}. This behaviour is present at \textit{any} layer independently of the depth. For example, the coefficients $a_{n-1,0}, a_{n-1,1}, \ldots, a_{n-1,n-1}$ of the last row of the score matrix are mostly increasing, with most of the probability mass function being allocated to the last few coefficients, which contrasts to the behaviour of the attention mechanism on language related tasks~\citep{galassi2020attention}. Even though both NLP and time series forecasting deal with sequences of data, the main difference relies on the nature of this data. In NLP words are embedded into a vector space so that words related in meaning have similar dot product. Moreover, words with related meaning may come up far apart in a paragraph whereas consecutive words may have small dot product. On the other hand, most variables in time series problems are continuous and, thus, data tends to be extremely similar around a particular timestamp, hence leading to the larger attention scores in that region. This motivates the following working hypothesis for this research.
\begin{hyp} \label{hyp:time-series}
    Let $X_t$ be a (mostly) continuous time series. On inputs $x_{t-n+1}, \ldots, x_{t}$, for any $j \in \{0,1, \ldots, n-1\}$, the attention scores of the row corresponding to $x_{t-n+1+i}$, denoted $a_{i,0}, a_{i,1}, \ldots, a_{i,n-1}$, are mostly concentrated around $a_{j,j}$.
\end{hyp}

Previous attempts have exploited this sparsity properties of self-attention in time series modelling to design selective masking strategies that, for each row of the attention scores matrix, set most entries to zero. For example, the LogSparse Transformer~\citep{li2019enhancing} (denoted LogTrans in our experimentation) uses, for each key with timestamp $t_0$, attention scores with those queries whose timestamp $t$ is in a local neighbourhood of $t_0$ and those queries whose timestamp $t$ satisfies $t_0 - t = 2^j$ for some positive integer $j$. The LongFormer~\citep{beltagy2020longformer} extends the above work to a more complicated sparse configuration in the realm of NLP. These masking strategies partially resolve the loss of positional information problem of the attention mechanism as the input is only partially combined at each attention layer and, thus, Proposition~\ref{prop:perm} does not hold for such mechanisms. However, it is not clear how to obtain an improvement in time and memory complexity, as these attention mechanisms have not been written in tensor algebra in an efficient way. Instead, sparse matrices are used to perform computations and represent the attention matrix, leading to minor memory and computational efficiency improvements. 

In this work, we propose the \proposalextended\ (\proposal), which only computes dot products on nearby inputs of the attention layer. As we will see, \proposal\ can be implemented efficiently in tensor algebra, leading to almost linear time and memory complexity in practice. 

\begin{definition}[Local Attention]\label{def:lam}
Let $L$ be a neighbourhood size (in our experiments $L =  4 \lceil \log n \rceil$) and let $M \in \{0,-\infty\}^{n \times n}$ be the square matrix with $M_{i,j} = 0$ when $i - L+1 \le j \le i$ and $M_{i,j} = -\infty$ otherwise. Then \proposal\ is defined as 

\begin{equation} \label{eq:lam}
    \operatorname{LAM}(Q, K, V) = \operatorname{softmax}\left( \frac{QK^T + M}{\sqrt{d_q}}\right) V.
\end{equation}    
\end{definition}

Note that adding the matrix $M$ and applying softmax leads to several zero entries in the matrix $\operatorname{softmax}\left( (QK^T + M)/\sqrt{d_q}\right)$. In fact, the scores corresponding to the scalar/dot product of the query $Q_i$ with the keys $K_j$ with $i -L+1 \le j \le i$ are the only ones that may not be zero. That is, the $j$-th output of $\operatorname{LAM}(Q, K, V)$ is a weighted average of $V_{j-L+1}, V_{j-L+2}, \ldots ,V_j$. This is visualised in Figure~\ref{fig:example:a}.

At first glance, the brute force implementation of \proposal\ by computing $QK^T$ gives quadratic time and memory complexity. Here we explain how to achieve $\Theta(n \log n)$ complexity in time and memory with a tensor algebra re-formulation of \proposal. For ease of exposition, assume that $L$ divides $n$ (the general case will be addressed at the end of this section). Our algorithm to compute \proposal\ is provided in Algorithm~\ref{alg:local}, although the reader may find the following mathematical description useful. 

Let $s = n/L$. The key idea is splitting $Q$ into $s$ blocks of dimensions $L \times d_q$ and determining the smallest submatrices of $K^T$ that we have to multiply these blocks by so that the end results contain the dot products 
\begin{equation} \label{eq:dot-products:lam}
\begin{aligned}
\langle Q_i, K_j \rangle \text{ for all } & r \in \{0, 1, \ldots, s-1\}, i \in \{rL, r(L+1)-1\} \\ & \text{ and } j \text{ with } i - L +1 \le j \le i. 
\end{aligned}
\end{equation}
Note that these dot products are exactly the ones not masked by $M$ in the definition of \proposal. In our algorithm, we compute $2L-1$ dot products for each row of $Q$ instead of the $L$ dot products described in Eq.~\eqref{eq:dot-products:lam}. The extra number of dot products computed will allow us to write \proposal\ in tensor algebra, without affecting the time and memory complexity. We formally introduce this ``splitting'' idea in Definition~\ref{def:TQ}.

\begin{definition}[The tensor $T_Q$] \label{def:TQ}
    Let $s = n/L$. We define $T_Q$ as the tensor with dimension $s \times L \times d_q$ such that $T_Q[r,i_1,t] = Q[rL+i_1, t]$ for all $r \in \{0,1,\ldots, s-1\}$, $i_1 \in \{0, 1, \ldots, L-1\}$ and $t \in \{0,1,\ldots, d_q-1\}$.
\end{definition}

 Let $r \in \{0,1,\ldots, s-1\}$ be the index of the block of $Q$ under consideration. We claim that it suffices to compute the dot products 
 \begin{equation*}
 \begin{aligned}
     \langle Q_{rL+i_1}, K_j \rangle = \langle T_Q[r,i_1,\cdot], K_j \rangle & \text{ for all }  r \in \{0, 1, \ldots, s-1\},\\ & i_1 \in \{0,1,\ldots, L-1\} \text{ and } \\ &  j \text{ with } (r-1)L+1 \le j \le (r+1)L-1.      
 \end{aligned}
 \end{equation*}
 Indeed, with the change of variables $i = r L + i_1$, using $i_1 \in \{0,1,\ldots, L-1\}$ we find that $rL \le i \le (r+1)L-1$ and, thus,
\begin{equation} \label{eq:inclusion}
    [i - L +1, i] \subseteq [(r-1)L+1, (r+1)L-1].
\end{equation}
This motivates the following definition.
\begin{definition}[The tensor $T_K$] \label{def:TK}
    Let $s = n/L$. We define $T_K$ as the tensor with dimensions $s \times (2L-1) \times d_q$ such that, for each for all $r \in \{0, \ldots, s-1\}$, $j_1 \in \{0, 1, \ldots, 2L-2\}$ and $t \in \{0, 1, \ldots, d_{q}-1\}$, $T_K[r, j_1, t] = K[(r-1)L+1+j_1, t]$ when $(r-1)L+1+j_1 \ge 0$ and $T_K[0, j_1, t] = 0$ otherwise. 
\end{definition}

We note that $(r-1)L+1+j_1 < 0$ only when $j_1 < (1-r)L-1$, which only occurs in the case $r = 0$ and $j_1 \le L-2$. Indeed, for $r = 0$ in Eq.~\eqref{eq:dot-products:lam}, we only have to compute the dot product $\langle Q_{i_1}, K_j\rangle$ for $0\le j \le i_1$. The extra zeroes added in $T_k$ for $r= 0$ allow us to maintain the dimension of $T_K[r, \cdot, \cdot]$ across all values of $r$. Lemma~\ref{lem:TM} allows us to go back and forth between the indexes $(i,j)$ and $(i_1, j_1)$ for the changes of variables $i = rL + i_1$ and $j = (r-1)L + 1 + j_1$. The proof of this lemma can be found in Appendix~\ref{appendix:theoretical-proofs}.

\begin{lemma} \label{lem:TM}
 Let $r \in \{0,1,\ldots,s-1\}$. For each $i_1 \in \{0, 1, \ldots, L-1\}$ and  $j_1 \in \{0, 1, \ldots, 2L-2\}$, let $i = rL+i_1$ and $j = (r-1)L+1+j_1$. We have $i-L+1 \le j \le i$ if and only if $i_1 \le j_1 \le i_1+L-1$.
\end{lemma}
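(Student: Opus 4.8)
The plan is to reduce the claimed equivalence to a single linear inequality in the auxiliary indices by computing the difference $j - i$. Substituting $i = rL + i_1$ and $j = (r-1)L + 1 + j_1$, the block index $r$ cancels and one obtains $j - i = j_1 - i_1 - (L-1)$, an expression depending only on $i_1$, $j_1$ and $L$. This is the key observation: the ``local window'' condition is translation-invariant across blocks, which is precisely why the tensorised formulation of \proposal\ works.

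Next I would rewrite the two-sided bound $i - L + 1 \le j \le i$ in the equivalent centred form $-(L-1) \le j - i \le 0$. Plugging in $j - i = j_1 - i_1 - (L-1)$ and adding $L-1$ throughout yields $0 \le j_1 - i_1 \le L-1$, i.e.\ $i_1 \le j_1 \le i_1 + L - 1$, which is exactly the claimed right-hand condition. Since every step is an equivalence (all manipulations are additions of a constant to a chain of inequalities), this establishes the ``if and only if'' directly, in both directions at once.

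The only things requiring care are purely bookkeeping: getting the off-by-one in $j = (r-1)L + 1 + j_1$ right when forming $j - i$, and checking the arithmetic of the $\pm(L-1)$ shift. There is no substantive obstacle; the statement is a change-of-variables identity whose content is that the window $[i - L + 1, i]$ maps onto the fixed window $[i_1, i_1 + L - 1]$ inside the block of width $2L-1$ indexed by $r$. I would not separately verify that the endpoints respect the ranges $i_1 \in \{0,\dots,L-1\}$ and $j_1 \in \{0,\dots,2L-2\}$, since the lemma only asserts the equivalence of the two inequalities, not their simultaneous satisfiability; that consistency was already observed in the discussion preceding Definition~\ref{def:TK} via the inclusion in Eq.~\eqref{eq:inclusion}.
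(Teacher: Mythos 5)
Your proof is correct and follows essentially the same route as the paper's: both compute the difference $j-i$ (equivalently, $j-i+L-1 = j_1-i_1$ and $i-j = i_1+L-1-j_1$) and translate the window condition by an additive shift, the only difference being that you handle the two bounds in one chain while the paper treats the lower and upper bounds separately. The remark that the lemma asserts only the equivalence of the inequalities, not range consistency, is also in line with the paper's treatment.
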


Now we can formally prove that our definition of $T_Q$ and $T_K$ capture all the dot products needed in the definition of \proposal\ (Definition~\ref{def:lam}), which is also visualised in Figure~\ref{fig:example}.

\begin{lemma} \label{lem:TQ-TK}
    For each $r \in \{0, 1, \ldots, s-1\}$, the matrix defined as $T_A[r, \cdot, \cdot] = T_Q[r,\cdot,\cdot] T_K[r,\cdot,\cdot]^T$ has dimensions $L \times (2L-1)$. For each $i \in \{0,1,\ldots, n-1\}$ and $j$ with $i-L+1\le j \le i$, setting $i_1 = i \pmod L$, $r = (i -i_1)/L$ and $j_1 = j-(r-1)L-1$, we have $r \in \{0, 1, \ldots, s-1\}$, $i_1 \in \{0, 1, \ldots, L-1\}$ and $j_1 \in \{0,1,\ldots,2L-2\}$. Moreover, the equality $\langle Q_i, K_j \rangle = T_A[r, i_1, j_1]$ holds.
\end{lemma}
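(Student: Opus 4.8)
The plan is to verify the three assertions of the lemma in turn, all of which reduce to elementary index bookkeeping once the changes of variables $i = rL + i_1$ and $j = (r-1)L + 1 + j_1$ are in place. First I would dispatch the dimension claim: by Definitions~\ref{def:TQ} and~\ref{def:TK}, $T_Q[r,\cdot,\cdot]$ is an $L \times d_q$ matrix and $T_K[r,\cdot,\cdot]$ is a $(2L-1)\times d_q$ matrix, so $T_A[r,\cdot,\cdot] = T_Q[r,\cdot,\cdot]\,T_K[r,\cdot,\cdot]^T$ has dimensions $L \times (2L-1)$, for every $r \in \{0,1,\ldots,s-1\}$.

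Next I would check the membership claims for $r$, $i_1$, $j_1$. Since $i_1 = i \bmod L$, we have $i_1 \in \{0,1,\ldots,L-1\}$ by definition of the residue, and then $r = (i - i_1)/L = \lfloor i/L \rfloor$, which lies in $\{0,1,\ldots,\lfloor (n-1)/L\rfloor\} = \{0,1,\ldots,s-1\}$ because $n = sL$. For $j_1$, substituting $i = rL + i_1$ into the hypothesis $i - L + 1 \le j \le i$ and then into $j_1 = j - (r-1)L - 1$ gives $i_1 \le j_1 \le i_1 + L - 1$ by a direct computation (this is exactly the content of Lemma~\ref{lem:TM}); combined with $0 \le i_1 \le L-1$ this yields $j_1 \in \{0,1,\ldots,2L-2\}$.

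For the equality $\langle Q_i, K_j \rangle = T_A[r,i_1,j_1]$, I would expand $T_A[r,i_1,j_1] = \sum_{t} T_Q[r,i_1,t]\,T_K[r,j_1,t]$. By Definition~\ref{def:TQ}, $T_Q[r,i_1,t] = Q[rL+i_1,t] = Q[i,t]$. The one step that is not pure substitution is showing that we are not in the zero-padding branch of Definition~\ref{def:TK}, i.e. that $T_K[r,j_1,t] = K[(r-1)L+1+j_1,t] = K[j,t]$: the zero branch occurs only when $(r-1)L+1+j_1 < 0$, which for $r \ge 1$ is impossible since $(r-1)L+1+j_1 \ge 1$, and for $r = 0$ would require $j_1 \le L-2$; but when $r = 0$ we have $j = j_1 - L + 1$, and $j \ge 0$ (as $j$ indexes a key) forces $j_1 \ge L-1$, ruling out the zero branch. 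Hence $T_A[r,i_1,j_1] = \sum_t Q[i,t]K[j,t] = \langle Q_i, K_j\rangle$.

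The only genuinely non-mechanical point is this last case analysis ensuring the index $(r-1)L+1+j_1$ into $K$ is nonnegative, so that the padding introduced in $T_K$ for $r=0$ is never triggered on the dot products we care about; everything else is direct substitution together with the index identities already recorded in Lemma~\ref{lem:TM}.
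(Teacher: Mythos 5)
Your proof is correct and follows essentially the same route as the paper's: the same dimension count, the same index bounds for $r$, $i_1$, $j_1$ via the substitutions $i = rL+i_1$, $j = (r-1)L+1+j_1$, and the same observation that the padding branch of $T_K$ is never hit because $(r-1)L+1+j_1 = j \ge 0$ (the paper states this in one line where you do a short case analysis, and it does the $j_1$-bound computation inline where you cite Lemma~\ref{lem:TM}, but these are cosmetic differences).
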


\begin{figure}[H]
    \centering
    \begin{subfigure}{.48\textwidth} 
        \includegraphics[width=\textwidth]{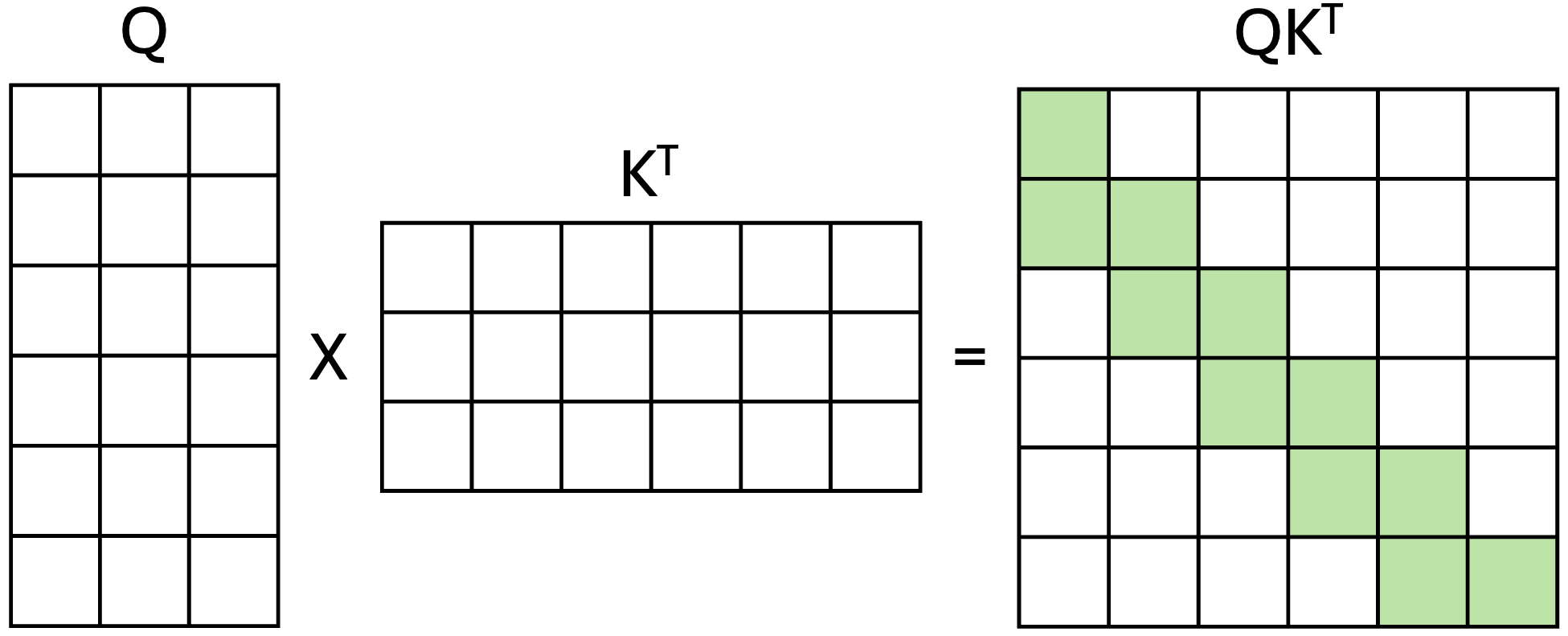}    
        \caption{Visualisation of the product $Q K^T$. In green, we highlight the dot products needed in \proposal, as the rest are set to $-\infty$ when adding $M$ in Eq.~\eqref{eq:lam}.}
        \label{fig:example:a}
    \end{subfigure}
    \hfill
    \begin{subfigure}{.48\textwidth}
        \includegraphics[width=\textwidth]{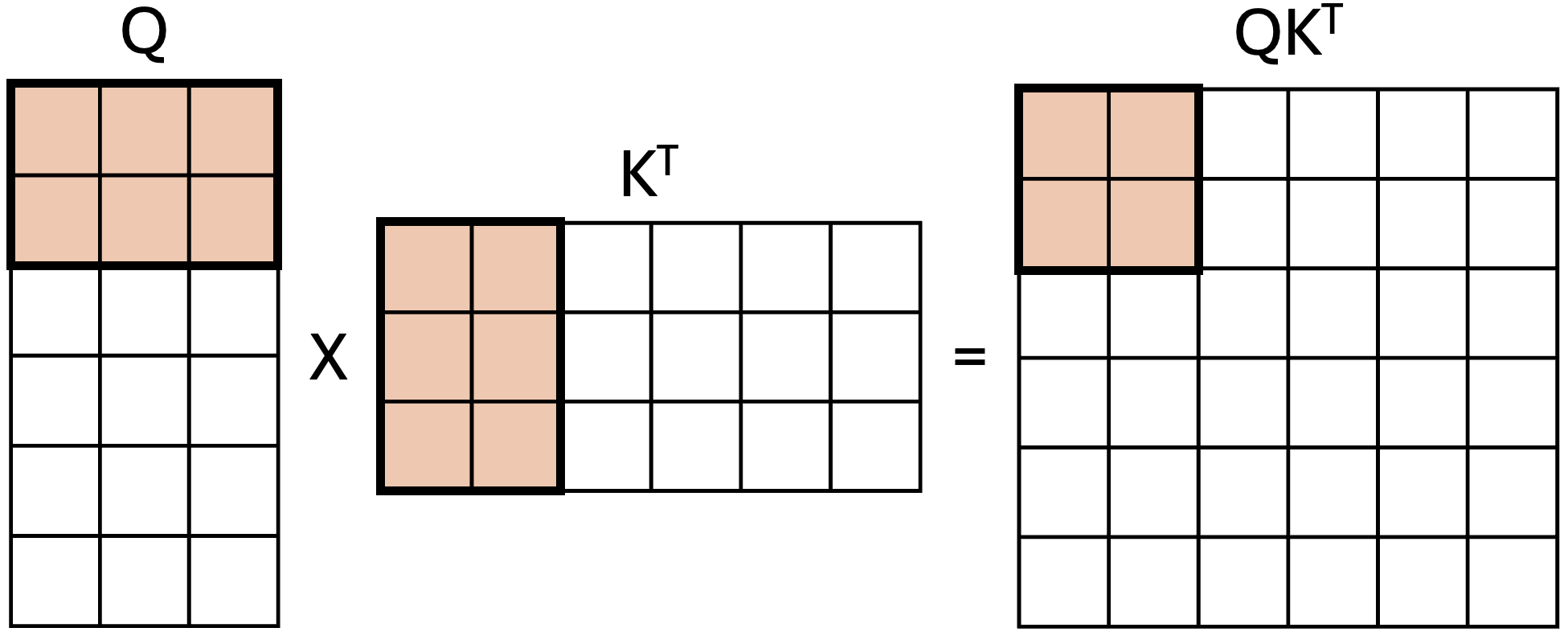}
        \caption{In light brown we have the block $T_Q[0, \cdot, \cdot]$ of $Q$, the non-zero elements of the block $T_K[0, \cdot, \cdot]^T$ of $K^T$, and the dot products corresponding to $T_Q[0,\cdot,\cdot] T_K[0,\cdot,\cdot]^T$.}
        \label{fig:example:b}
    \end{subfigure}
    \begin{subfigure}{.48\textwidth}
        \includegraphics[width=\textwidth]{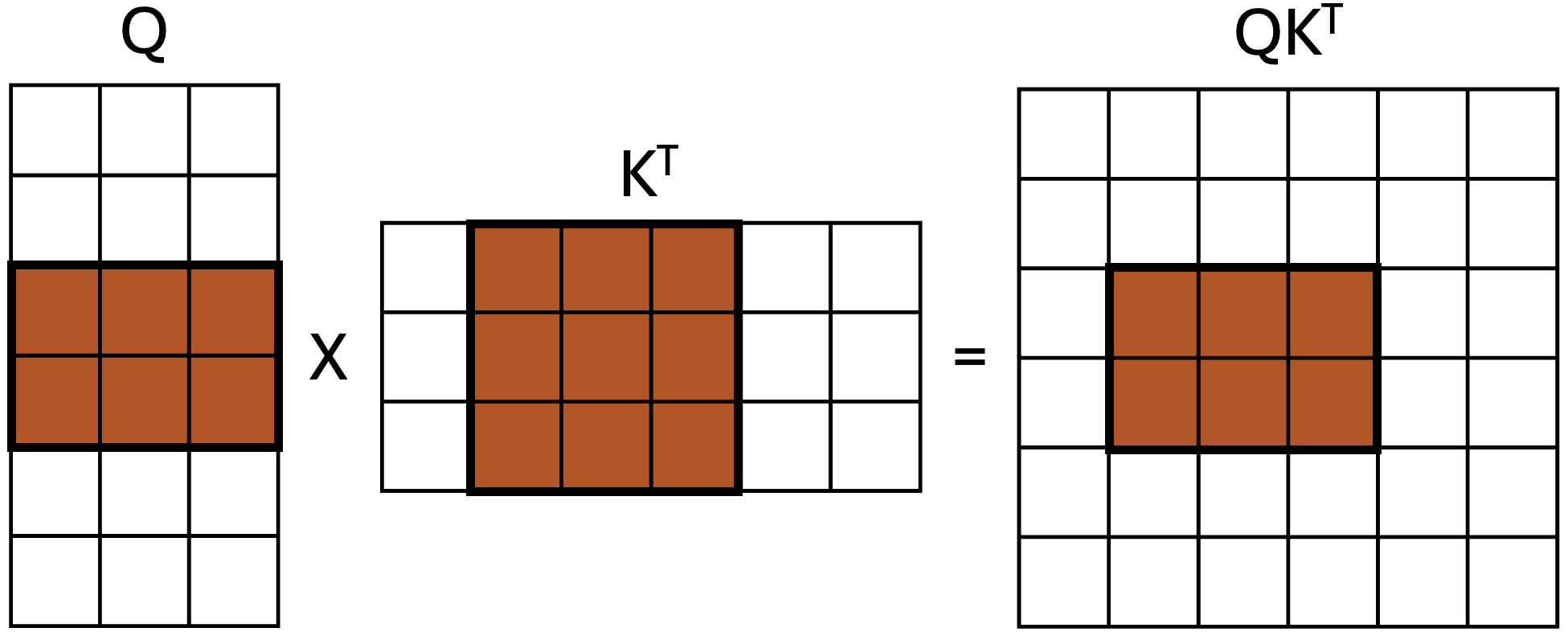}
        \caption{In brown we have the block $T_Q[1, \cdot, \cdot]$ of $Q$, the block $T_K[1, \cdot, \cdot]^T$ of $K^T$, and the dot products corresponding to  $T_Q[1,\cdot,\cdot] T_K[1,\cdot,\cdot]^T$.}
        \label{fig:example:c}
    \end{subfigure}
    \hfill
    \begin{subfigure}{.48\textwidth}
        \includegraphics[width=\textwidth]{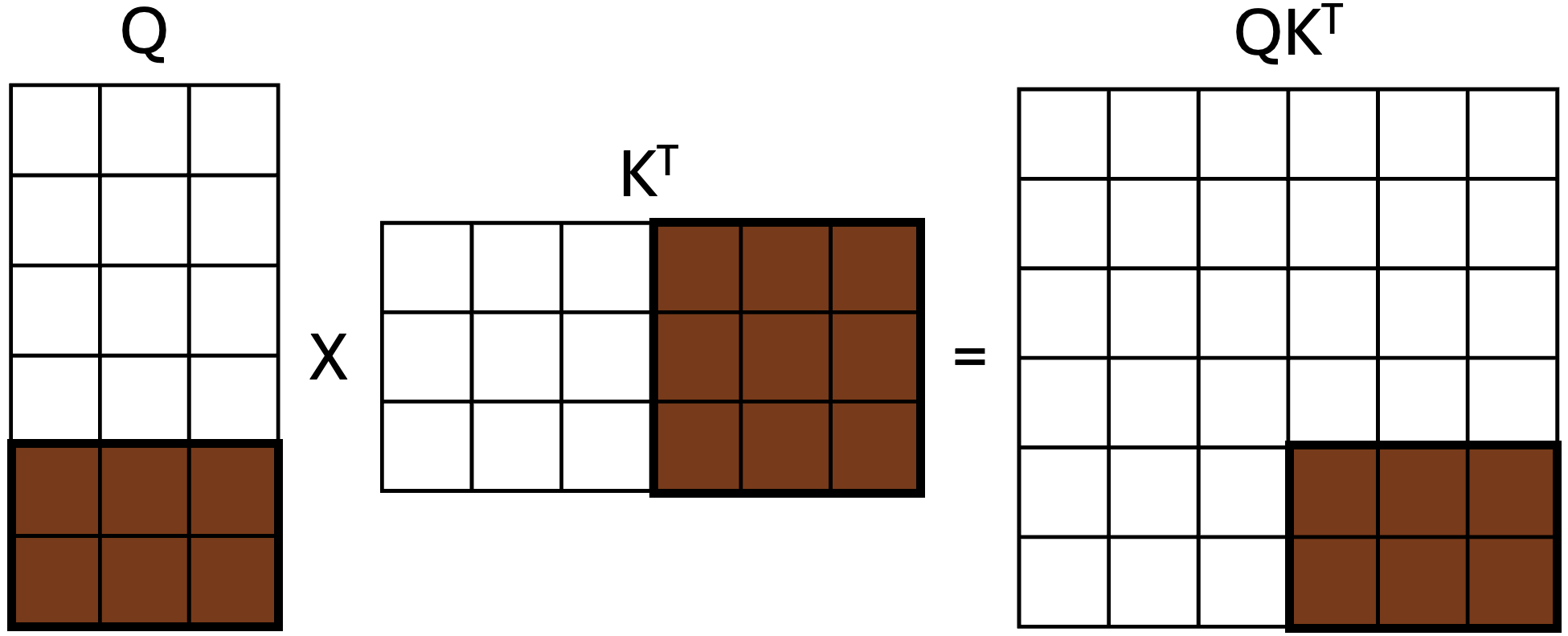}
        \caption{In dark brown we have the block $T_Q[2, \cdot, \cdot]$ of $Q$, the block $T_K[2, \cdot, \cdot]^T$ of $K^T$, and the dot products corresponding to  $T_Q[2,\cdot,\cdot] T_K[2,\cdot,\cdot]^T$.}
        \label{fig:example:d}
    \end{subfigure}
    \caption{Example of our algorithm to compute the dot products of $QK^T$ needed in \proposal\ for $n = 6$, $d_{q} = 3$ and $L=2$. We note that $s = n / L = 3$. The dot products needed in \proposal\ are highlighted in green in (a), the dot products computed in our algorithm are highlighted in a shade of brown in $Q K^T$ in Subfigures (b), (c), (d). We note that every dot product highlighted in green in (a) is in brown in either (b), (c) or (d), as Lemma~\ref{lem:TQ-TK} claims.}
    \label{fig:example}
\end{figure}

At this point we describe how we express $\operatorname{softmax}\left( (QK^T + M)/ \sqrt{d_q}\right)$ in terms of $T_A[r, \cdot, \cdot] = T_Q[r,\cdot,\cdot] T_K[r,\cdot,\cdot]^T$. First, we mask the undesired dot products using a tensor of dimension $s \times L \times (2L-2)$ that emulates the matrix $M$ of Definition~\ref{def:lam}. 

\begin{definition}[The tensor $T_M$] \label{def:TM}
    Let $s = n/L$. We define $T_M$ as the tensor with dimension $s \times L \times (2L-2)$ such that $T_M[r,i_1,j_1] = 0$ when $i_1 \le j_1 \le i_1+L-1$ and $T_M[r,i_1,j_1] = -\infty$ otherwise.
\end{definition}

In view of Lemma~\ref{lem:TM}, adding $T_M$ to $T_A$ acts as the same masking as adding $M$ to $QK^T$, we are just operating on the variables $(i_1, j_1)$ instead of $(i,j)$. Figure~\ref{fig:adding-M} visualises the computation $\operatorname{softmax}(T_A+T_M)$.

\begin{figure}[H]
    \centering
    \begin{subfigure}{\textwidth} 
        \includegraphics[width=\textwidth]{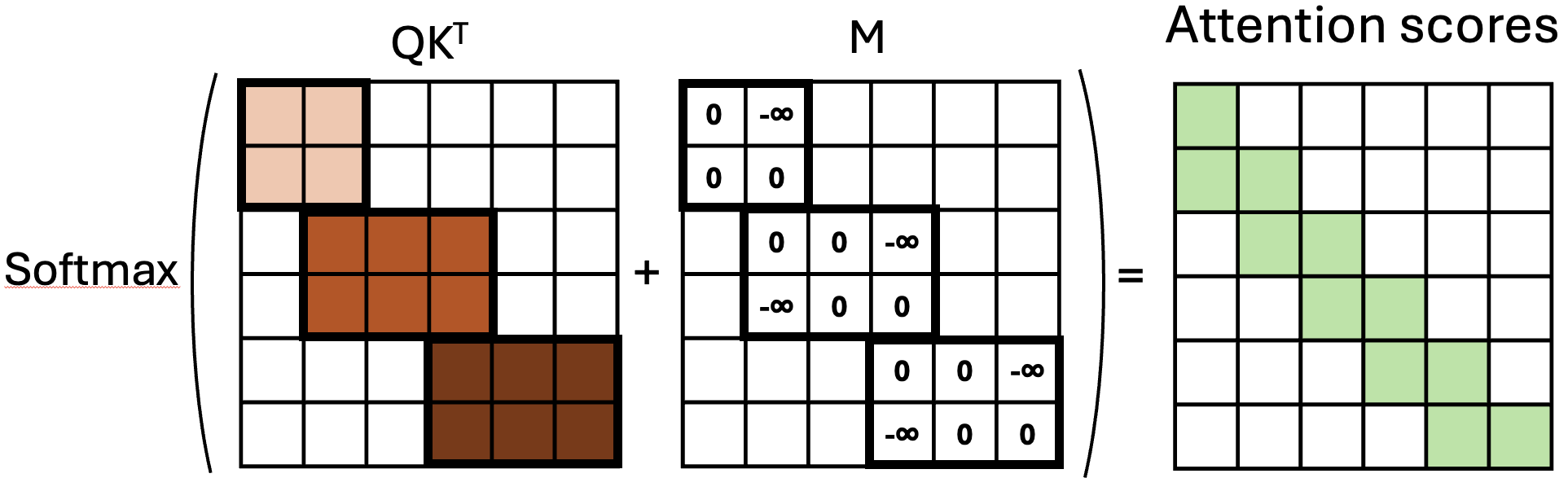}
    \end{subfigure}
    \caption{Visualisation of computation of the attention scores for \proposal. For the matrix $QK^T$ we show in brown the dot products computed in Figure~\ref{fig:example} (that is, those dot products computed for $T_A$), where light brown corresponds to Figure~\ref{fig:example:b}, medium brown corresponds to Figure~\ref{fig:example:c} and dark brown corresponds to Figure~\ref{fig:example:d}. The attention scores of \proposal\ are computed after adding $T_M$ (the sub-blocks of $M$ given in Definition~\ref{def:TM}), to $T_A$ and applying the Softmax function to each row. 
    In green, we highlight the attention scores computed by \proposal.}
    \label{fig:adding-M}
\end{figure}

We can now prove that $T_S = \operatorname{softmax}((T_Q T_K^T + T_M) / \sqrt{d_q})$ contains the attention scores of $\operatorname{softmax}\left( (QK^T + M)/\sqrt{d_q}\right)$, other than the extra zeroes that we have not computed.

\begin{lemma} \label{lem:T_S}
    The tensor defined as $T_S = \operatorname{softmax}((T_Q T_K^T + T_M) / \sqrt{d_q})$, where $\operatorname{softmax}$ is applied on the last dimension, has dimensions $s \times L \times (2L-1)$. Let $S$ be the matrix $\operatorname{softmax}\left( (QK^T + M)/\sqrt{d_q}\right)$. For each  $i \in \{0,1,\ldots, n-1\}$ and $j$ with $i-L+1\le j \le i$, setting $i_1 = i \pmod L$, $r = (i -i_1)/L$ and $j_1 = j-(r-1)L-1$, we have $S[i,j] = T_S[r, i_1, j_1]$. Moreover, all the other entries of $S$ and $T_A$ are zero.
\end{lemma}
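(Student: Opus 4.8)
The statement bundles three claims: the shape of $T_S$, the entrywise identification $S[i,j] = T_S[r,i_1,j_1]$, and the vanishing of all the remaining entries. The shape is immediate, since $\operatorname{softmax}$ acts entrywise along the last axis and therefore does not change the shape of its argument: $T_S$ has the shape of $T_Q T_K^T + T_M$, which is the shape $s \times L \times (2L-1)$ of $T_A$ (the last axis of $T_M$ matching that of $T_A$). For the remaining two claims, the plan is to fix a block index $r$ and show that, under the reindexing $i = rL + i_1$, $j = (r-1)L + 1 + j_1$, the $(r,i_1)$-th slice of $(T_Q T_K^T + T_M)/\sqrt{d_q}$ along the last axis is merely a relabelling of the $i$-th row of $(QK^T + M)/\sqrt{d_q}$ as a vector in $\mathbb{R} \cup \{-\infty\}$; since $\operatorname{softmax}$ sends every $-\infty$ coordinate to $0$ and otherwise depends on a vector only through the exponentials of its finite coordinates and their sum, this relabelling survives the application of $\operatorname{softmax}$.

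Concretely, I would first record the elementary fact that if $v$ is a vector with $v_k = -\infty$ for $k \notin \mathcal{S}$, then $\operatorname{softmax}(v)_k = 0$ for $k \notin \mathcal{S}$ and $\operatorname{softmax}(v)_k = e^{v_k}/\sum_{k' \in \mathcal{S}} e^{v_{k'}}$ for $k \in \mathcal{S}$; in particular it depends only on $(v_k)_{k \in \mathcal{S}}$. Applying this to the $i$-th row of $(QK^T + M)/\sqrt{d_q}$, whose support is $\{j' : i-L+1 \le j' \le i\}$ by the definition of $M$, and to the $(r,i_1)$-th slice of $(T_Q T_K^T + T_M)/\sqrt{d_q}$, whose support is $\{j_1' : i_1 \le j_1' \le i_1 + L - 1\}$ by the definition of $T_M$, I would invoke Lemma~\ref{lem:TM} to identify the map $j' \mapsto j_1' = j' - (r-1)L - 1$ as a bijection between these two supports, and Lemma~\ref{lem:TQ-TK} to obtain $T_A[r,i_1,j_1'] = \langle Q_i, K_{j'} \rangle = (QK^T)[i,j']$ for the corresponding pairs. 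Hence the two $\operatorname{softmax}$ outputs coincide coordinate-for-coordinate, which yields $S[i,j] = T_S[r,i_1,j_1]$. For the last claim, the entries of $S$ with $j \notin \{i-L+1,\dots,i\}$ have $M_{i,j} = -\infty$ and hence are $0$ after $\operatorname{softmax}$, and symmetrically the entries of $T_S$ with $j_1 \notin \{i_1,\dots,i_1+L-1\}$ have $T_M[r,i_1,j_1] = -\infty$ and hence vanish.

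The one step requiring genuine care is the first block, $r = 0$. There the reindexed column $j' = j_1' - L + 1$ is negative for $j_1' < L - 1$, and precisely those columns of $T_K$ were filled with zeros rather than with genuine keys; when such a column lies in the support $\{i_1,\dots,i_1+L-1\}$ of the $(0,i_1)$-th slice — which happens exactly when $i_1 \le L-2$ — it contributes a spurious term $e^{\langle Q_{i_1}, 0\rangle/\sqrt{d_q}} = 1$ to the $\operatorname{softmax}$ normaliser, whereas the $i_1$-th row of $S$ has no column of negative index and never sees it. The clean resolution is to have $T_M$ also mask these padding columns in the first block, i.e. to set $T_M[0,i_1,j_1] = -\infty$ for $j_1 < L-1$ (equivalently, to attach $-\infty$ to the zero-padding inserted into $T_K$ for $r = 0$); with this adjustment the support bijection above becomes $\{j_1' : \max(i_1, L-1) \le j_1' \le i_1+L-1\} \leftrightarrow \{j' : \max(0, i-L+1) \le j' \le i\}$ and the argument of the previous paragraph goes through verbatim for every $r$. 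Alternatively, one may establish the identity for $i \ge L-1$ and handle the first $L-1$ outputs of \proposal\ separately. Modulo this boundary bookkeeping, the proof is a direct unwinding of Lemmas~\ref{lem:TM} and~\ref{lem:TQ-TK} together with the behaviour of $\operatorname{softmax}$ on masked coordinates.
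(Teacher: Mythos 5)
Your argument follows the same route as the paper's proof: identify the finite (unmasked) entries of the $i$-th row of $(QK^T+M)/\sqrt{d_q}$ with those of the slice $(T_A+T_M)[r,i_1,\cdot]/\sqrt{d_q}$ via the index correspondence of Lemma~\ref{lem:TM}, use Lemma~\ref{lem:TQ-TK} to see that the finite values agree, and observe that $\operatorname{softmax}$ sends $-\infty$ coordinates to $0$ and otherwise depends only on the finite coordinates. The dimension claim is handled identically in both proofs.

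Your boundary remark about the block $r=0$ is, however, a genuine catch rather than optional bookkeeping: the paper's proof asserts that both softmaxes run over ``a total of $L$ entries'' which ``are the same in both cases'', and this is false for the rows $i\le L-2$. For such a row, $S[i,\cdot]$ has only $i+1$ unmasked entries (those with $j\in\{0,\dots,i\}$), whereas the slice $(T_A+T_M)[0,i_1,\cdot]$ has $L$ unmasked entries, of which $L-1-i_1$ come from the zero-padded keys of $T_K$ and contribute finite scores $\langle Q_{i_1},0\rangle=0$ to the softmax normaliser. Concretely, at $i=j=0$ one gets $S[0,0]=1$ but $T_S[0,0,L-1]=e^{a}/\bigl(e^{a}+L-1\bigr)$ with $a=\langle Q_0,K_0\rangle/\sqrt{d_q}$, so the stated equality fails on the first $L-1$ rows. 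Your proposed repair --- extend the mask so that $T_M[0,i_1,j_1]=-\infty$ whenever $(r-1)L+1+j_1<0$ (equivalently, attach $-\infty$ rather than $0$ scores to the padding columns), or else prove the identity for $i\ge L-1$ and treat the first $L-1$ outputs separately --- is exactly what is needed for the lemma, and for the downstream Lemma~\ref{lem:T_V}, to hold verbatim; the paper's proof silently skips this case. Apart from making that adjustment explicit, your proof is complete and matches the paper's.
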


Up to now, we have managed to determine the attention scores of \proposal\ while avoiding to compute all the entries of the product $QK^T$. It remains to multiply the attention scores matrix by the values tensor $V$. This operation can be similarly performed by extracting blocks of $V$, which motivates Definition~\ref{def:TV}, and can be visualised in Figure~\ref{fig:example-2}. 

\begin{definition}[The tensor $T_V$] \label{def:TV}
    Let $s = n/L$. We define $T_V$ as the tensor with dimensions $s \times (2L-1) \times d_{model}$ such that, for each for all $r \in \{0, \ldots, s-1\}$, $j_1 \in \{0, 1, \ldots, 2L-2\}$ and $t \in \{0, 1, \ldots, d_{model}-1\}$, $T_V[r, j_1, t] = V[(r-1)L+1+j_1, t]$ when $(r-1)L+1+j_1 \ge 0$ and $T_V[0, j_1, t] = 0$ otherwise. 
\end{definition}

\begin{figure}[H]
    \centering
    \begin{subfigure}{.48\textwidth}
        \includegraphics[width=\textwidth]{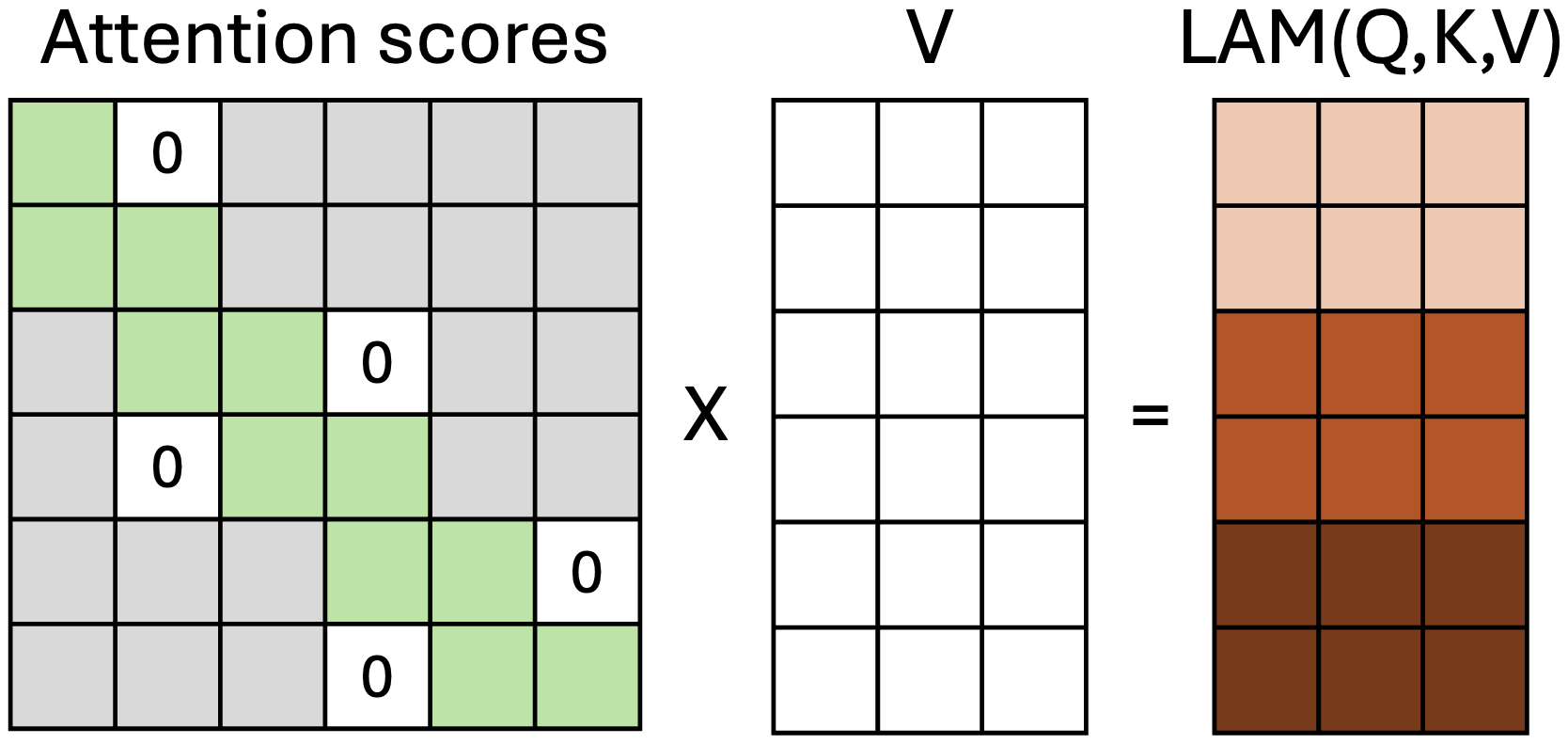}
        \caption{Visualisation of the product $A \cdot V$. In green we have the attention scores computed in $T_S$, which correspond to the non-zero attention scores of $S$ (Lemma~\ref{lem:T_S}).}
        \label{fig:example-2:a}
    \end{subfigure}
    \hfill
    \begin{subfigure}{.48\textwidth} 
        \includegraphics[width=\textwidth]{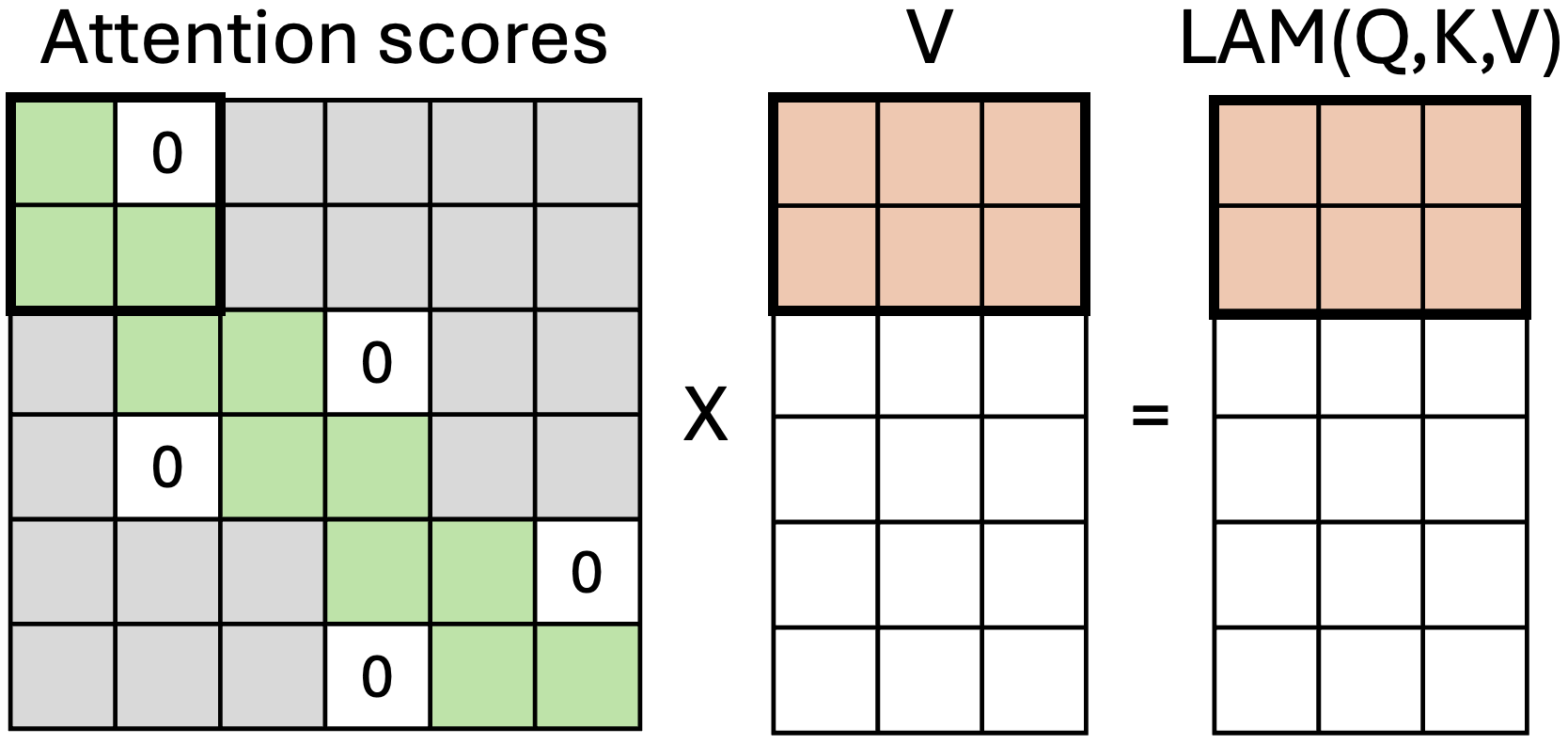}    
        \caption{In light brown we have the block $T_V[0, \cdot, \cdot]$ and the block $\proposal(Q,K,V)[0:L, \cdot]$. We note that $\proposal(Q,K,V)[0:L, \cdot]$ equals the product $\qquad$ $A[0:L,0:L] T_V[0, \cdot, \cdot]$.}
        \label{fig:example-2:b}
    \end{subfigure}
    \begin{subfigure}{.48\textwidth}
        \includegraphics[width=\textwidth]{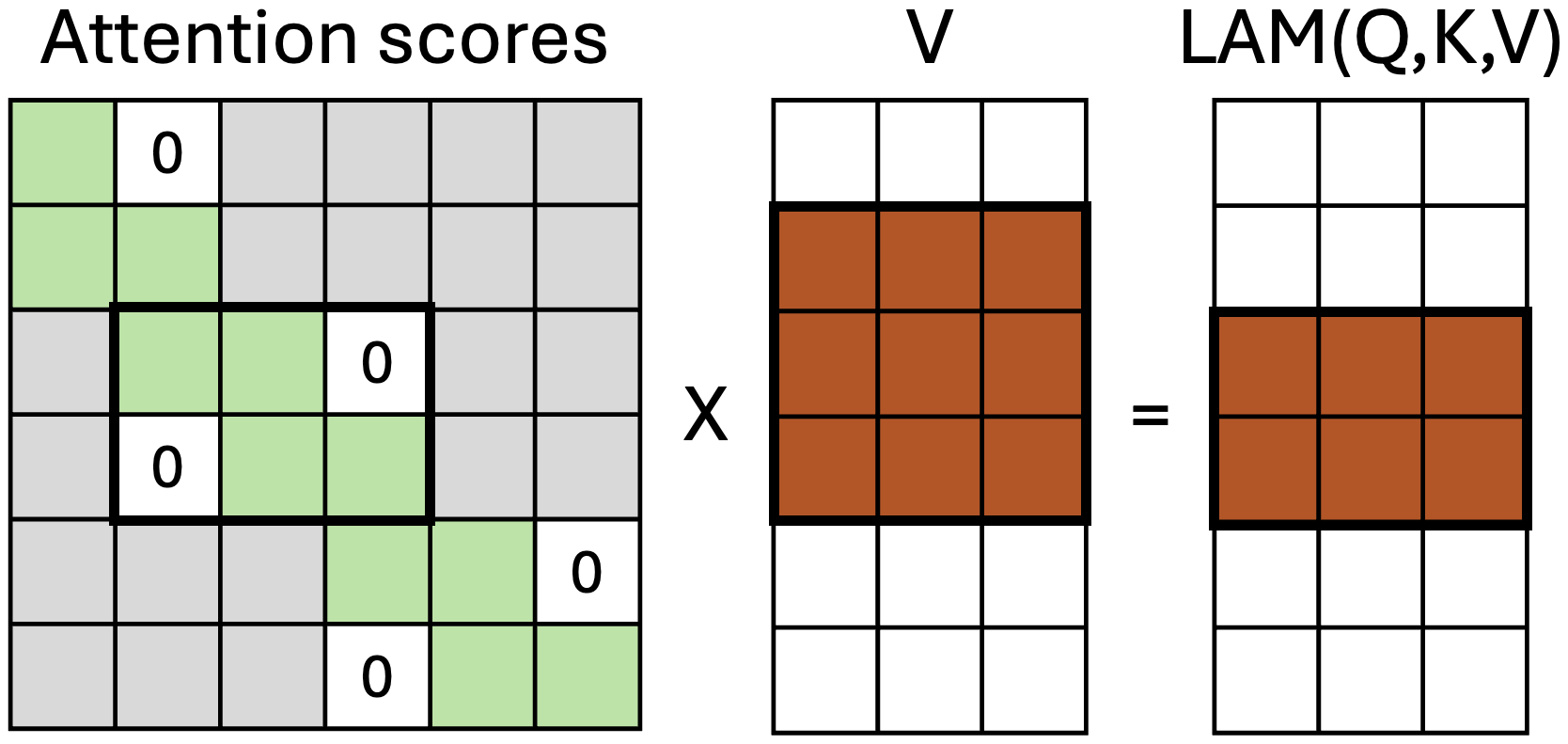}
        \caption{In brown we have the block $T_V[1, \cdot, \cdot]$ and the block $\proposal(Q,K,V)[L:2L, \cdot]$. We note that $\proposal(Q,K,V)[L:2L, \cdot]$ equals the product $\quad$ $A[L:2L,L-1:2L] T_V[1, \cdot, \cdot]$.}
        \label{fig:example-2:c}
    \end{subfigure}
    \hfill
    \begin{subfigure}{.48\textwidth}
        \includegraphics[width=\textwidth]{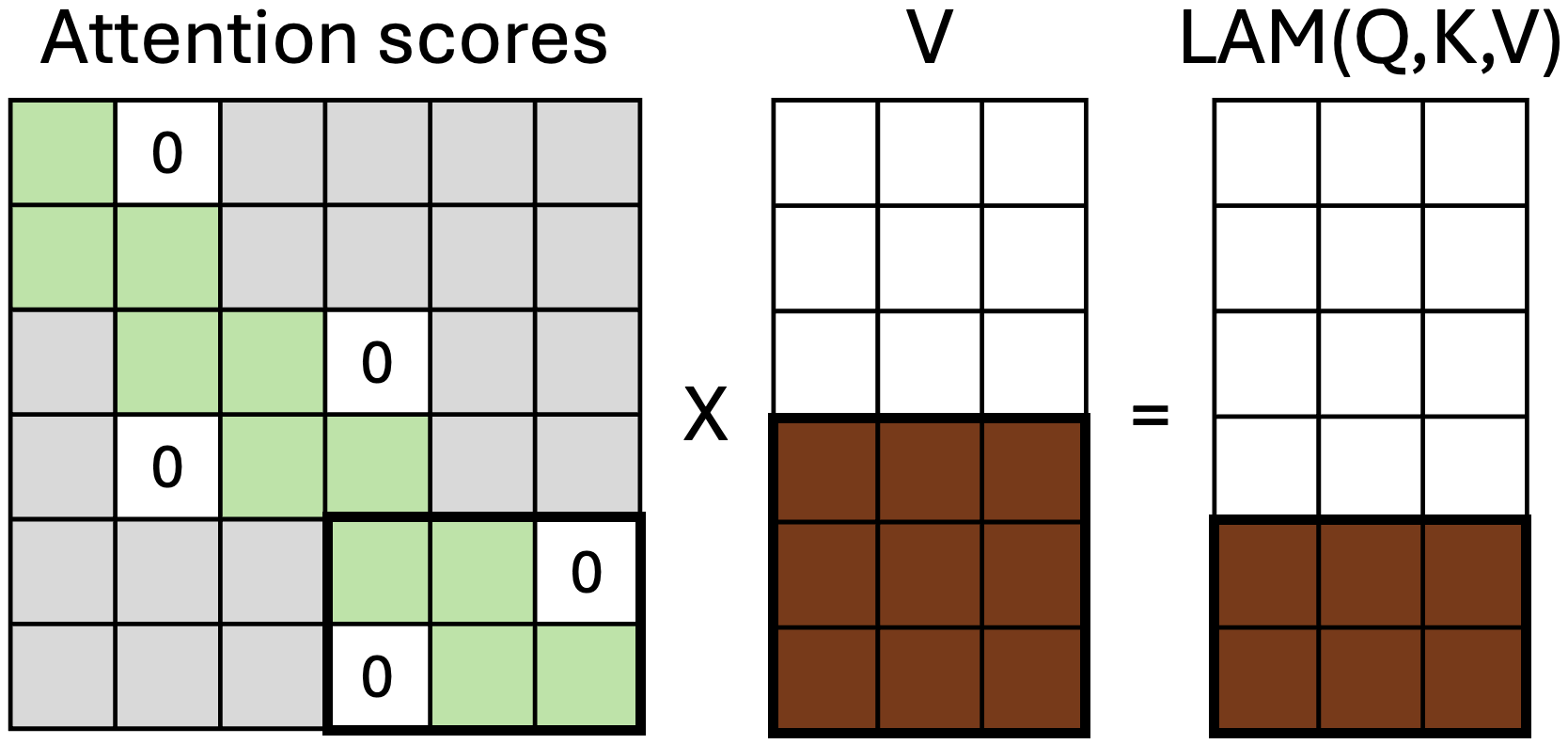}
        \caption{In dark brown we have the blocks $T_V[2, \cdot, \cdot]$ and $\proposal(Q,K,V)[2L:3L, \cdot]$. We note $\proposal(Q,K,V)[2L:3L, \cdot]$ equals the product $\quad$ $A[2L:3L,2L-1:3L] T_V[1, \cdot, \cdot]$.}
        \label{fig:example-2:d}
    \end{subfigure}
    \caption{Example of our algorithm to compute the dot products of $S V$ needed in \proposal\ for $n = 6$, $d_{model} = 3$ and $L=2$, where $S = \operatorname{softmax}\left( (QK^T + M)/\sqrt{d_q}\right)$. We note that $s = n / L = 3$. The values tensor is divided $s$ in blocks, with a different shadow of brown, in Subfigures (b), (c), (d).}
    \label{fig:example-2}
\end{figure}

\begin{lemma} \label{lem:T_V}
    The tensor defined as $T_{LAM} = \operatorname{softmax}((T_Q T_K^T + T_M) / \sqrt{d_q})T_V$, where $\operatorname{softmax}$ is applied on the last dimension, has dimensions $s \times L \times d_{model}$. For each $i \in \{0,1,\ldots, n-1\}$, setting $i_1 = i \pmod L$ and $r = (i -i_1)/L$, we have $\proposal(Q,K,V)[i, \cdot] = T_{LAM}[r, i_1, \cdot]$.
\end{lemma}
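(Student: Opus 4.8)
The plan is to compute both $\operatorname{LAM}(Q,K,V)[i,\cdot]$ and $T_{LAM}[r,i_1,\cdot]$ as explicit sums and match them term by term, using the index correspondence already established in Lemma~\ref{lem:T_S}. For the dimension claim: by Lemma~\ref{lem:T_S} the tensor $T_S = \operatorname{softmax}((T_Q T_K^T + T_M)/\sqrt{d_q})$ has dimensions $s \times L \times (2L-1)$, and by Definition~\ref{def:TV} the tensor $T_V$ has dimensions $s \times (2L-1) \times d_{model}$; hence the batched matrix product $T_{LAM} = T_S T_V$, performed blockwise over the index $r$, has dimensions $s \times L \times d_{model}$, and the entry $T_{LAM}[r,i_1,t]$ equals $\sum_{j_1=0}^{2L-2} T_S[r,i_1,j_1]\, T_V[r,j_1,t]$.

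Next, fix $i \in \{0,\ldots,n-1\}$ and set $i_1 = i \bmod L$, $r = (i-i_1)/L$. Since $\operatorname{LAM}(Q,K,V) = S V$ with $S = \operatorname{softmax}((QK^T+M)/\sqrt{d_q})$ (Definition~\ref{def:lam}), the $i$-th row is $\sum_{j=0}^{n-1} S[i,j]\, V[j,\cdot]$; by Lemma~\ref{lem:T_S}, $S[i,j]=0$ unless $\max(0,i-L+1) \le j \le i$ (the bound $j \le n-1$ being automatic), so this reduces to $\sum_{j=\max(0,i-L+1)}^{i} S[i,j]\, V[j,\cdot]$. On the tensor side, the analysis inside Lemma~\ref{lem:T_S} (through Lemma~\ref{lem:TM}) gives $T_S[r,i_1,j_1]=0$ unless $i_1 \le j_1 \le i_1+L-1$, so $T_{LAM}[r,i_1,\cdot] = \sum_{j_1=i_1}^{i_1+L-1} T_S[r,i_1,j_1]\, T_V[r,j_1,\cdot]$.

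The key step is then to apply the change of variables $j = (r-1)L+1+j_1$. By Lemma~\ref{lem:TM}, $j_1$ ranges over $\{i_1,\ldots,i_1+L-1\}$ precisely when $j$ ranges over $\{i-L+1,\ldots,i\}$, and by Lemma~\ref{lem:T_S} we have $T_S[r,i_1,j_1] = S[i,j]$ for each such pair. For indices with $j \ge 0$, Definition~\ref{def:TV} yields $T_V[r,j_1,\cdot] = V[j,\cdot]$, so the term equals $S[i,j]\,V[j,\cdot]$, matching a term of the row of $SV$. For indices with $j < 0$ — which can occur only when $r=0$ and $j_1 < L-1$ — Definition~\ref{def:TV} sets $T_V[0,j_1,\cdot] = 0$, so these terms vanish, and they have no counterpart in the first sum, whose lower limit is $\max(0,i-L+1)$. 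The two sums therefore coincide term by term, giving $\operatorname{LAM}(Q,K,V)[i,\cdot] = T_{LAM}[r,i_1,\cdot]$.

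I expect the main obstacle to be the bookkeeping for the boundary block $r = 0$: one must check that the zero-padding built into $T_V$ in Definition~\ref{def:TV} exactly compensates for the ``phantom'' indices $j = (r-1)L+1+j_1 < 0$ introduced by the change of variables, so that no spurious terms are added and none are lost relative to the matrix product $SV$. The remainder is a direct unwinding of the definitions together with Lemmas~\ref{lem:TM} and~\ref{lem:T_S}.
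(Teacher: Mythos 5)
Your proof is correct and follows essentially the same route as the paper's: both obtain the dimensions from Lemma~\ref{lem:T_S} and Definition~\ref{def:TV}, and then identify $T_S[r,i_1,\cdot]\,T_V[r,\cdot,\cdot]$ with $S[i,\cdot]\,V$ via the change of variables $j=(r-1)L+1+j_1$ supplied by Lemmas~\ref{lem:TM} and~\ref{lem:T_S}. The only difference is that you make the $r=0$ boundary bookkeeping explicit (the zero-padded rows of $T_V$ annihilating the phantom indices $j<0$), a point the paper's terser proof leaves implicit.
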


\subsection{\proposal\ implementation}
\label{sec:local-attention:subsec:implementation}

Our algorithm to compute $\proposal(Q,K,V)$ can now be inferred from Lemma~\ref{lem:T_V} and is explicitly given in Algorithm~\ref{alg:local}.

\begin{algorithm}[H]
  \begin{algorithmic}[1]

    \caption{Efficient implementation of \proposal\  with tensor algebra}  \label{alg:local}

    \REQUIRE The query and key matrices $Q$ and $K$, and the value matrix $V$.

    \ENSURE $Q$ and $K$ have dimensions $n \times d_q$ and $V$ has dimensions $n \times d_v$. Let $s = n / L$. Let $T_M$ be the tensor with dimensions $s \times L \times(2 L-1)$ such that $T_M[r, i_1, j_1] = 0$ when $i_1 \le j_1 \le i_1+L-1$ and $T_M[r, i_1, j_1] = -\infty$ otherwise. Here  the tensor $T_M$ is a local mask, that will be applied to each one of the $s$ ''splits" or blocks of $QK^T$ that we compute, emulating the work of the mask $M$ in Eq.~\eqref{eq:lam}. This tensor is pre-computed.
    
    \STATE Let $T_Q$ be the tensor with dimension $s \times L \times d_q$ such that $T_Q[r, i_1, t] = Q[rL+i_1, t]$.

    \STATE Let $T_K$ be the tensor with dimension $s \times (2L-1) \times d_q$ such that $T_K[r, j_1, t] = 0$ when $(r-1)L+1+j_1 < 0$ and $T_K[r, j_1, t] = K[(r-1)L+1+j_1, t]$ otherwise.

    \STATE Let $T_V$ be the tensor with dimension $s \times (2L-1) \times d_{model}$ such that $T_V[r, j_1, t] = -\infty$ when $(r-1)L+1+j_1 < 0$ and $T_V[r, j_1, t] = V[(r-1)L+1+j_1, t]$ otherwise.

    \STATE $T_A = T_Q T_K^T$. Note that $T_A$ has dimensions $s \times L \times (2L-1)$ and that the matrix multiplication is on the last two dimensions of the tensor.

    \STATE $T_S = \operatorname{SoftMax}(T_A + T_M)$.

    \STATE We compute the tensor $T_R = T_S \cdot T_V$, which has dimensions $s \times L \times d_{model}$.

    \RETURN Concatenate $T_R[0, \cdot, \cdot], \ldots, T_R[s-1, \cdot, \cdot]$ on the dimension corresponding to $L$, obtaining a tensor with dimensions $n \times d_{model}$. This is the output of the algorithm.
\end{algorithmic}
\end{algorithm}

The computation of $T_Q$, $T_K$ and $T_V$ can be performed in linear time on the sizes of $Q, K$ and $V$. In order to do so, we pre-compute a tensor with the indexes that we want to extract from $Q, K$ and $V$ for each one of the $s$ blocks of $T_Q, T_K$ and $T_V$. This allows us to implement Algorithm~\ref{alg:local} in any environment for deep learning based on tensor algebra, see the repository \footnote{https://github.com/ari-dasci/S-LAM} for our Pytorch implementation. The case when $L$ does not divide $n$ is analogous, with the difference that the block $s-1$ of $Q$, $K$ and $V$ is treated separately in the code to account for the extra number of entries. That is, we use Algorithm~\ref{alg:local} to compute the first $(s-1)L$ rows of $\proposal(Q,K,V)$, and compute the last $n -(s-1)L$ rows of $\proposal(Q,K,V)$ as a separate tensor computation. As this extra rows are most $L-1$, the complexity of the algorithm does not change. We refer to our code for details.

We can now provide the main theoretical result of this work on combining the lemmas proved in this section and counting how many steps are performed in Algorithm~\ref{alg:local}. All proofs from this section are available in Appendix~\ref{appendix:theoretical-proofs}.

\begin{thm} \label{thm:main}
    Algorithm 1 computes $\operatorname{\proposal}(Q, K, V)$ in time and memory $\Theta(n L)$, where time complexity is measured in the number of dot products of vectors performed.
\end{thm}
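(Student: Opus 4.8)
The plan is to verify that each of the seven steps in Algorithm~\ref{alg:local} is correct and to bound its cost, then combine these bounds. Correctness is essentially already established: Lemmas~\ref{lem:TQ-TK}, \ref{lem:T_S} and \ref{lem:T_V} show that $T_Q$, $T_K$, $T_M$, $T_V$ encode exactly the unmasked dot products of $QK^T$, that $T_S = \operatorname{softmax}((T_Q T_K^T + T_M)/\sqrt{d_q})$ reproduces the non-zero attention scores of $S = \operatorname{softmax}((QK^T+M)/\sqrt{d_q})$, and that $T_{LAM} = T_S T_V$ satisfies $\proposal(Q,K,V)[i,\cdot] = T_{LAM}[r,i_1,\cdot]$ under the change of variables $i_1 = i \bmod L$, $r = (i-i_1)/L$. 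The concatenation in the \textbf{RETURN} step inverts this change of variables (since $i \mapsto (r,i_1)$ is a bijection between $\{0,\ldots,n-1\}$ and $\{0,\ldots,s-1\}\times\{0,\ldots,L-1\}$), so the output is exactly $\proposal(Q,K,V)$. When $L \nmid n$, the separate treatment of the last $n-(s-1)L < L$ rows only adds an $O(L)$ tensor computation, which does not affect the asymptotics.

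For the complexity bound I would go line by line. Recall $s = n/L$. Steps 1--3: building $T_Q$, $T_K$, $T_V$ is a gather operation using a pre-computed index tensor, touching $\Theta(sL d_q) = \Theta(n d_q)$ and $\Theta(s(2L-1)d_q) = \Theta(n d_q)$ entries respectively (similarly with $d_{model}$ for $T_V$), so $\Theta(nL)$ memory in the sense of the theorem's accounting and no dot products. Step 4: $T_A = T_Q T_K^T$ is $s$ independent matrix products of an $L \times d_q$ matrix by a $d_q \times (2L-1)$ matrix, i.e. $s \cdot L \cdot (2L-1) = \Theta(nL)$ dot products, and $T_A$ has $\Theta(nL)$ entries. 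Step 5: adding $T_M$ and applying softmax along the last axis costs $\Theta(nL)$ elementary operations on the $\Theta(nL)$ entries of $T_A$ (no dot products, or $\Theta(nL)$ if one counts the row-normalisations as such). Step 6: $T_R = T_S T_V$ is again $s$ matrix products, each $L \times (2L-1)$ by $(2L-1) \times d_{model}$, giving $s \cdot L \cdot d_{model}$ dot products of vectors of length $2L-1$; since $d_{model}$ is a fixed model hyperparameter independent of $n$, this is $\Theta(sL) = \Theta(n)$ dot products, dominated by Step~4. Step 7: concatenation copies $\Theta(n d_{model}) = \Theta(n)$ entries. Summing, the total is $\Theta(nL)$ dot products and $\Theta(nL)$ memory, and since in our experiments $L = 4\lceil \log n\rceil = \Theta(\log n)$, this is $\Theta(n\log n)$.

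The main subtlety — and the step I would be most careful about — is making precise what ``$\Theta(nL)$ in the number of dot products'' means and ensuring no hidden super-linear cost sneaks in through $d_q$ or $d_{model}$. The matrix products in Steps~4 and~6 are the only places dot products are performed, and the count there is genuinely $\Theta(nL)$ only because the ``block width'' $2L-1$ and the feature dimensions $d_q, d_{model}$ are treated as the natural unit (a dot product of two vectors counts as one operation regardless of their length, and the number of parameters of the attention layer, hence $d_q$, does not scale with $n$). For memory, the largest intermediate tensors are $T_A$ and $T_S$ with $s \cdot L \cdot (2L-1) = \Theta(nL)$ entries, and $T_K$, $T_V$ with $\Theta(nL \cdot \max(d_q,d_{model}))$ entries; as long as $d_q$ and $d_{model}$ are $O(1)$ in $n$ this is $\Theta(nL)$, which is the claimed bound. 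Once these accounting conventions are stated, the proof is just the line-by-line tally above together with the correctness already furnished by Lemmas~\ref{lem:TQ-TK}--\ref{lem:T_V}.
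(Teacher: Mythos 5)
Your proof is correct and takes essentially the same route as the paper's: correctness is delegated to Lemma~\ref{lem:T_V} (which already rests on Lemmas~\ref{lem:TQ-TK} and~\ref{lem:T_S}), and the cost is the count $s\,L\,(2L-1)=\Theta(nL)$ dot products from forming $T_A=T_Q T_K^T$, with memory $\Theta(sL^2)=\Theta(nL)$, your version merely adding the line-by-line tally, the $L \nmid n$ remark, and the explicit convention that $d_q$ and $d_{model}$ are $O(1)$ in $n$. One minor slip: $T_K$ and $T_V$ have $\Theta(n\,d_q)$ and $\Theta(n\,d_{model})$ entries (since $s(2L-1)=\Theta(n)$), not $\Theta(nL\cdot\max(d_q,d_{model}))$, but this overestimate only loosens an upper bound that the $\Theta(nL)$-sized tensors $T_A$ and $T_S$ already dominate, so the conclusion stands.
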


\begin{cor}
    For $L = \lceil 4\log n \rceil$, Algorithm 1 computes $\operatorname{\proposal}(Q, K, V)$ in time and memory $\Theta(n \log n)$.
\end{cor}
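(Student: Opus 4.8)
The plan is to simply specialise Theorem~\ref{thm:main} to the choice $L = \lceil 4 \log n \rceil$. Theorem~\ref{thm:main} already establishes that Algorithm~\ref{alg:local} correctly computes $\operatorname{\proposal}(Q,K,V)$ and runs in time and memory $\Theta(nL)$, where the time count is in units of vector dot products; all that remains is to verify that substituting this particular $L$ yields the claimed $\Theta(n \log n)$ bound.

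First I would record the elementary estimate $4 \log n \le \lceil 4 \log n \rceil < 4 \log n + 1$, which gives $\lceil 4 \log n \rceil = \Theta(\log n)$ as $n \to \infty$. Multiplying through by $n$ and invoking Theorem~\ref{thm:main} then gives time and memory $\Theta(nL) = \Theta(n \lceil 4 \log n \rceil) = \Theta(n \log n)$, as desired. I would also briefly note that for this choice of $L$ one has $L \le n$ once $n$ is moderately large, so the block decomposition underlying Algorithm~\ref{alg:local} (into $s = \lceil n/L \rceil$ blocks) is well-defined; and that the non-divisibility case $L \nmid n$ was already handled in the discussion preceding Theorem~\ref{thm:main}, where the at most $L-1$ leftover rows were shown not to affect the asymptotics.

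There is no real obstacle here: the statement is a direct corollary of Theorem~\ref{thm:main}, and the only thing to check is the harmless effect of the ceiling function on the asymptotic order, which is immediate from the two-sided bound above. If anything, the one point worth stating explicitly is that the constant $4$ in $L = \lceil 4\log n\rceil$ plays no role in the asymptotics — any $L = \Theta(\log n)$ would give the same $\Theta(n\log n)$ complexity — so the corollary should be read as making a concrete choice of the free parameter $L$ from Definition~\ref{def:lam} rather than as requiring new argument.
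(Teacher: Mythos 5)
Your proof is correct and follows exactly the paper's route: the corollary is obtained by instantiating $L = \lceil 4\log n\rceil$ in Theorem~\ref{thm:main}, with $\Theta(nL) = \Theta(n\log n)$ since $\lceil 4\log n\rceil = \Theta(\log n)$. The extra remarks about the ceiling, the non-divisibility case, and the irrelevance of the constant $4$ are harmless elaborations of the same argument.
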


\section{\proposal\ empirical comparison to the state of the art}
\label{sec:informer-experimental-setup}

In this section, we present the experimental setup and results yielded by the vanilla transformer, fuelled by \proposal, and compare these to the state of the art in time-series forecasting via transformers. In this section we follow  the evaluation framework proposed in the Informer~\citep{zhou2023informer} paper. We will begin with a brief explanation of the datasets utilised and the algorithms involved in Section~\ref{sec:informer-experimental-setup:setup}, moreover, we will provide details regarding the hardware used in our experiments and the open-access repository where the code is published. In Section~\ref{sec:informer-experimental-setup:results}, we analyse the results obtained including a Bayes test analysis. In Section~\ref{sec:informer-experimental-setup:ProbAttention} we perform a comparison between the attention mechanisms \proposal\ and ProbAttn utilising the vanilla transformer architecture. Finally, in Section~\ref{sec:informer-experimental-setup:conclusions} we argue that the datasets that are used in the literature for benchmarking of models for the LSTF problem are not sufficient, which motivates Section~\ref{sec:lstf-experimental-setup}.

\subsection{Experimental Details} \label{sec:informer-experimental-setup:setup}

\textbf{Datasets:} For the sake of fairness we first follow the experimental methodology of Informer~\citep{zhou2023informer}, we use four subsets of ETT: ETTh1 and ETTh2 (with hourly sampling), and ETTm1 and ETTm2 (with a 15-minute granularity). The ECL dataset records electricity consumption data for 321 clients, converted into hourly consumption, with 'MT 320' as the target variable. The Weather dataset consists of climatological data from approximately 1,600 locations in the United States, collected hourly over four years, featuring 'wet bulb' as the target variable along with 11 other climate features. Each dataset was divided into training, validation, and test sets. Details of the datasets can be observed in Table~\ref{tab:datasets-informer-description}.

\begin{table}[!hbt]
\centering
\begin{tabular}{llll}
\toprule
\textbf{Dataset} & \textbf{Instances} & \textbf{Features} & \textbf{Sampling rate}   \\
\midrule
ETTh1   & 8,640     & 7        & 1 hour     \\
ETTh2   & 8,640     & 7        & 1 hour     \\
ETTm1   & 34,560    & 7        & 15 minutes \\
ETTm2   & 34,560    & 7        & 15 minutes \\
Weather & 24,544    & 12       & 1 hour     \\
ECL     & 18,412    & 321      & 1 hour    \\
\bottomrule
\end{tabular}
\caption{Datasets description. Number of instances, features and sampling rates are displayed in the table.}
\label{tab:datasets-informer-description}
\end{table}

\textbf{Multivariate prediction, error metrics and validation:} The primary objective of this comparison is to determine which model produces the least error in multivariate predictions on the datasets described above. Multivariate prediction involves using a fixed-size input sequence to predict a fixed-size output sequence across all variables in the dataset. The problem of multi-variable forecasting is significantly more challenging than single-variable forecasting. Incorporating more than one input variable allows for the extraction of more valuable information by capturing dependencies between variables. However, forecasting multiple variables simultaneously requires more sophisticated modelling, as errors can accumulate across each component. Furthermore, including more variables increases the model's complexity, posing a challenge to assigning meaningful values to the weights. Consequently, the results obtained from multi-variable forecasting carry more significance and provide a more comprehensive evaluation of model performance. To ensure the validity and representativeness of the results, we have incorporated Bayesian tests~\citep{benavoli2014bayesian}, which serve to facilitate and formalise the evaluation process.

Data has been standardised to have a mean of zero and a standard deviation of one. Errors were assessed using reconstruction error, with Mean Absolute Error (MAE) and Mean Squared Error (MSE) serving as the evaluation metrics.

Given the long training times needed in the LSTF problem, we present the results of one training execution, as it is standard in the literature. Due to the nature of time series, the methodology employed consists in dividing each dataset into training, validation, and test sets to monitor training and assess performance effectively. Specifically, the division is 12/4/4 months for ETT, 15/3/4 months for ECL, and 28/10/10 months for WTH  as in \citep{zhou2023informer}.

\textbf{Algorithms: } We have maintained the same comparative framework as presented in the Informer paper, utilising the five algorithms selected by the original authors, being a representative sample of algorithms involving statistical methods and recurrent networks. This selection comprises ARIMA~\citep{ariyo2014stock}, Prophet~\citep{taylor2018forecasting}, LSTMa~\citep{bahdanau2014neural}, LST-Net~\citep{lai2018modeling}, and DeepAR~\citep{salinas2020deepar}. To better represent the state-of-the-art in time series forecasting, we have also included Reformer~\citep{kitaev2020reformer}, Autoformer~\citep{wu2021autoformer} and LogSparse self-attention~\citep{li2019enhancing}, alongside the Informer model~\citep{zhou2023informer}. All these models have undergone hyperparameter optimisation to determine the most suitable configuration for each dataset.

\textbf{Hardware and Code: } The experiments were conducted on an NVIDIA DGX-A100 workstation, which includes 1TB of RAM, two AMD Rome 7742 processors, and eight NVIDIA A100 graphics cards, each equipped with 40GB of memory. For implementation and experimentation, we developed a Python-based codebase, utilising the PyTorch framework for the development and training of neural networks. To ensure transparency and reproducibility, we have made the source code available through a dedicated GitHub repository within the ARI-DaSCI organisation\footnote{Link to our GitHub repository: \url{https://github.com/ari-dasci/S-LAM}}.

\subsection{\proposalextended\ compared to the state-of-the-art} \label{sec:informer-experimental-setup:results}

First, we will analyse a comparison between Informer and the number of parameters of the vanilla transformer incorporating  \proposal\ as attention mechanism, in order to test the improvement that arises from incorporating \proposal\ on vanilla architecture. It is important to highlight that the architecture accompanying our attention mechanism is exactly the vanilla architecture, based on Vaswani's~\citep{vaswani2017attention} original paper, without any modifications to the encoding-decoding framework or the positional encoding. The only difference between both models is the inclusion of \proposal\ instead of the full attention mechanism. In the experimentation presented in this section the number of encoding and decoding layers in the vanilla transformer is 3, as opposed to 4 encoding layers and 2 decoding layers in the Informer architecture~\citep{zhou2023informer}. 

\begin{table}[!hbt]
\centering
\resizebox{\textwidth}{!}{\begin{tabular}{ccccc}
\toprule
\textbf{Model}  & \textbf{Seq\_len} & \textbf{Parameters} & \multicolumn{1}{l}{\textbf{Ratio}} & \textbf{Num. encoding/decoding layers} \\
\midrule
Informer        & \multirow{2}{*}{48} & 12,455,745 & \multirow{2}{*}{2.1} & 4/2\\
Local Attention &  & 5,900,000 & & 6\\
\hline
Informer        & \multirow{2}{*}{168} & 12,455,745 & \multirow{2}{*}{2.1} & 3/3\\
Local Attention &  & 5,900,000 & & 6\\
\hline
Informer        & \multirow{2}{*}{336} & 12,455,745 & \multirow{2}{*}{2} & 4/2 \\
Local Attention &  & 6,000,000 & & 6\\
\hline
Informer        & \multirow{2}{*}{720} & 12,455,745 & \multirow{2}{*}{1.9} & 3/3\\
Local Attention &  & 6,400,000 & & 6\\
\hline
Informer        & \multirow{2}{*}{960} & 12,455,745 & \multirow{2}{*}{1.8} & 4/2\\
Local Attention &  & 6,800,000 & & 6\\
\bottomrule
\end{tabular}}
\caption{Number of parameters for Informer and \proposal\ model over the ECL dataset. The ratio is computed as \proposal\ model parameters divided by Informer parameters.}
\label{tab:parameters}
\end{table}

Table~\ref{tab:parameters} presents the size of both models on a fixed dataset while varying the size of the input window. Our model exhibits sensitivity to this variation, whereas Informer does not. The reason for this is that Informer has a fixed architecture that accepts an input window of size at most 512 time-steps. For smaller input window sizes, the input is extended with zeroes until reaching the mentioned size. Note that all the parameters in the Informer model are still used in the model independently of the size of the input window. In the optimal scenario, we achieve a parameter reduction factor of $2.1$, and in the worst scenario, a factor of $1.9$. Therefore, the model incorporating \proposal\ is significantly more compact in terms of the number of parameters compared to Informer, and consequently, in terms of memory usage. This compactness could allow us to concatenate more layers of our attention mechanism, thereby more accurately capturing long-term temporal dependencies. However, to highlight the effect of our attention mechanism in prediction accuracy, we have chosen to have a similar number of encoding and decoding layers as Informer. 

\begin{table*}[!hbt]
\centering
\fontsize{9pt}{9pt}\selectfont
\resizebox{\textwidth}{!}{\begin{tabular}{c|c|cc|cc|cc|cc|cc|cc|cc}
\toprule
\multicolumn{2}{c}{Methods} & \multicolumn{2}{|c}{\proposal} & \multicolumn{2}{|c}{Informer} & \multicolumn{2}{|c}{LogTrans} & \multicolumn{2}{|c}{Reformer} & \multicolumn{2}{|c}{Autoformer} & \multicolumn{2}{|c}{LSTMa} & \multicolumn{2}{|c}{LSTnet} \\
\midrule
\multicolumn{2}{c|}{Metric} & MSE & MAE & MSE & MAE & MSE & MAE & MSE & MAE & MSE & MAE & MSE & MAE & MSE & MAE \\
\midrule
\multirow{5}{*}{\rotatebox{90}{ETTh$_1$}} & 24 & \textbf{0.471} & \textbf{0.448} & 0.577 & 0.549 & 0.686 & 0.604 & 0.991 & 0.754 & 0.672 & 0.548 & 0.650 & 0.624 & 1.293 & 0.901 \\
    & 48 & \textbf{0.560} & \textbf{0.545} & 0.685 & 0.625 & 0.766 & 0.757 & 1.313 & 0.906 & 0.802 & 0.606 & 0.702 & 0.675 & 1.456 & 0.960 \\
    & 168 & 1.011 & 0.807 & \textbf{0.931} & \textbf{0.752} & 1.002 & 0.846 & 1.824 & 1.138 & 1.003 & 0.684 & 1.212 & 0.867 & 1.997 & 1.214 \\
    & 336 & \textbf{0.923} & \textbf{0.756} & 1.128 & 0.873 & 1.362 & 0.952 & 2.117 & 1.280 & 1.183 & 0.787 & 1.424 & 0.994 & 2.655 & 1.369 \\
    & 720 & \textbf{0.993} & \textbf{0.732} & 1.215 & 0.896 & 1.397 & 1.291 & 2.415 & 1.520 & 1.254 & 0.853 & 1.960 & 1.322 & 2.143 & 1.380 \\
\midrule
\multirow{5}{*}{\rotatebox{90}{ETTh$_2$}} & 24 & \textbf{0.588} & \textbf{0.543} & 0.720 & 0.665 & 0.828 & 0.750 & 1.531 & 1.613 & 0.923 & 0.998 & 1.143 & 0.813 & 2.742 & 1.457 \\
    & 48 & \textbf{0.780} & \textbf{0.692} & 1.457 & 1.001 & 1.806 & 1.034 & 1.871 & 1.735 & 1.530 & 1.207 & 1.671 & 1.221 & 3.567 & 1.687 \\
    & 168 & \textbf{3.209} & \textbf{1.479} & 3.489 & 1.515 & 4.070 & 1.681 & 4.660 & 1.846 & 3.956 & 1.958 & 4.117 & 1.674 & 3.242 & 2.513 \\
    & 336 & 6.289 & 2.192 & \textbf{2.723} & \textbf{1.340} & 3.875 & 1.763 & 4.028 & 1.688 & 4.016 & 1.897 & 3.434 & 1.549 & 2.544 & 2.591 \\
    & 720 & \textbf{2.835} & \textbf{1.204} & 3.467 & 1.473 & 3.913 & 1.552 & 5.381 & 2.015 & 4.235 & 2.004 & 3.963 & 1.788 & 4.625 & 3.709 \\
\midrule
\multirow{5}{*}{\rotatebox{90}{ETTm$_1$}} & 24 & \textbf{0.264} & \textbf{0.301} & 0.323 & 0.369 & 0.419 & 0.412 & 0.724 & 0.607 & 0.867 & 0.594 & 0.621 & 0.629 & 1.968 & 1.170 \\
    & 48 & 0.714 & 0.561 & \textbf{0.494} & \textbf{0.503} & 0.507 & 0.583 & 1.098 & 0.777 & 1.093 & 0.692 & 1.392 & 0.939 & 1.999 & 1.215 \\
    & 96 & \textbf{0.487} & \textbf{0.511} & 0.678 & 0.614 & 0.768 & 0.792 & 1.433 & 0.945 & 0.746 & 0.583 & 1.339 & 0.913 & 2.762 & 1.542 \\
    & 288 & \textbf{0.522} & \textbf{0.522} & 1.056 & 0.786 & 1.462 & 1.320 & 1.820 & 1.094 & 1.007 & 0.697 & 1.740 & 1.124 & 1.257 & 2.076 \\
    & 672 & \textbf{0.774} & \textbf{0.658} & 1.192 & 0.926 & 1.669 & 1.461 & 2.187 & 1.232 & 1.119 & 0.771 & 2.736 & 1.555 & 1.917 & 2.941 \\
\midrule
\multirow{5}{*}{\rotatebox{90}{Weather}} & 24 & \textbf{0.307} & \textbf{0.349} & 0.335 & 0.381 & 0.435 & 0.477 & 0.655 & 0.583 & 0.564 & 0.502 & 0.546 & 0.570 & 0.615 & 0.545 \\
    & 48 & 0.409 & \textbf{0.450} & \textbf{0.395} & 0.459 & 0.426 & 0.495 & 0.729 & 0.666 & 0.588 & 0.544 & 0.829 & 0.677 & 0.660 & 0.589 \\
    & 168 & \textbf{0.527} & \textbf{0.537} & 0.608 & 0.567 & 0.727 & 0.671 & 1.318 & 0.855 & 0.716 & 0.637 & 1.038 & 0.835 & 0.748 & 0.647 \\
    & 336 & \textbf{0.582} & \textbf{0.576} & 0.702 & 0.620 & 0.754 & 0.670 & 1.930 & 1.167 & 0.771 & 0.678 & 1.657 & 1.059 & 0.782 & 0.683 \\
    & 720 & \textbf{0.762} & \textbf{0.670} & 0.831 & 0.731 & 0.885 & 0.773 & 2.726 & 1.575 & 0.848 & 0.761 & 1.536 & 1.109 & 0.851 & 0.757 \\
\midrule
\multirow{5}{*}{\rotatebox{90}{ECL}} & 48 & \textbf{0.294} & \textbf{0.379} & 0.344 & 0.393 & 0.355 & 0.418 & 1.404 & 0.999 & 0.352 & 0.459 & 0.486 & 0.572 & 0.369 & 0.445 \\
    & 168 & \textbf{0.282} & \textbf{0.375} & 0.368 & 0.424 & 0.368 & 0.432 & 1.515 & 1.069 & 0.397 & 0.492 & 0.574 & 0.602 & 0.394 & 0.476 \\
    & 336 & \textbf{0.287} & \textbf{0.375} & 0.381 & 0.431 & 0.373 & 0.439 & 1.601 & 1.104 & 0.456 & 0.547 & 0.886 & 0.795 & 0.419 & 0.477 \\
    & 720 & \textbf{0.368} & \textbf{0.391} & 0.406 & 0.443 & 0.409 & 0.454 & 2.009 & 1.170 & 0.46 & 0.529 & 1.676 & 1.095 & 0.556 & 0.565 \\
    & 960 & \textbf{0.417} & \textbf{0.497} & 0.460 & 0.548 & 0.477 & 0.589 & 2.141 & 1.387 & 0.601 & 0.610 & 1.591 & 1.128 & 0.605 & 0.599 \\
\midrule
\multicolumn{2}{c}{Count} & \multicolumn{2}{|c}{43} & \multicolumn{2}{|c}{7} & \multicolumn{2}{|c}{0} & \multicolumn{2}{|c}{0} & \multicolumn{2}{|c}{0} & \multicolumn{2}{|c}{0} & \multicolumn{2}{|c}{0} \\
\bottomrule

\end{tabular}}
\caption{Multivariate LSTF results on the datasets described in Section~\ref{sec:informer-experimental-setup:setup}.}
\label{tab:exp.multivarResults}
\end{table*}

After comparing the sizes of the models, we proceed to examine the results of multivariate predictions. A comprehensive comparison of all algorithms for the ETTh1, ETTh2, ETTm1, Weather, and ECL datasets is presented in Table~\ref{tab:exp.multivarResults}. The most notable aspect of the table is the superior performance of \proposal. Out of 50 combinations of results and metrics, \proposal\ ranks highest in 43, followed by Informer, which leads in 7. When compared to Reformer, Autoformer and LogTrans, our approach significantly outperforms their results. This superiority is even more pronounced in comparison to LSTMa and LST-Net, particularly with longer prediction horizons, where the performance gap is the largest. This can be attributed to the capability of our architecture to capture long-term dependencies. While LSTM-based algorithms perform well with short prediction horizons, \proposal\ demonstrates substantial improvement and superior performance with longer prediction horizons, supporting the hypothesis that transformers are better equipped for LSTF problems.

Focusing on a direct comparison between \proposal\ and Informer allows for a more detailed case-by-case analysis. Overall, \proposal\ outperforms Informer, particularly for large forecast horizons in the ETT datasets. For example, in the case of ETTh2 with a forecast window of $720$, \proposal\ achieves an MSE of $2.8$ compared to Informer's $3.4$. Although \proposal\ also demonstrates superior performance in the ECL and Weather datasets, the margin of superiority is not as pronounced as in the ETT datasets.

\begin{figure}[!hbt]
	\centering
	\begin{subfigure}{0.45\linewidth}
		\includegraphics[width=\linewidth]{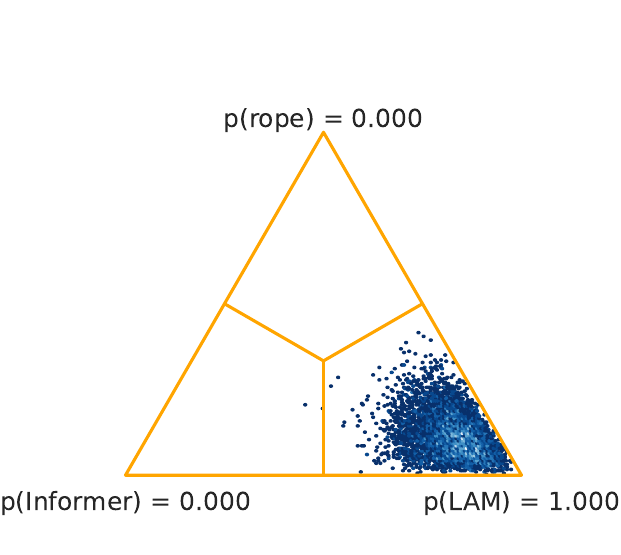}
		\caption{MSE Bayesian test.}
		\label{fig:mse-bayesian}
	\end{subfigure}
	\begin{subfigure}{0.45\linewidth}
		\includegraphics[width=\linewidth]{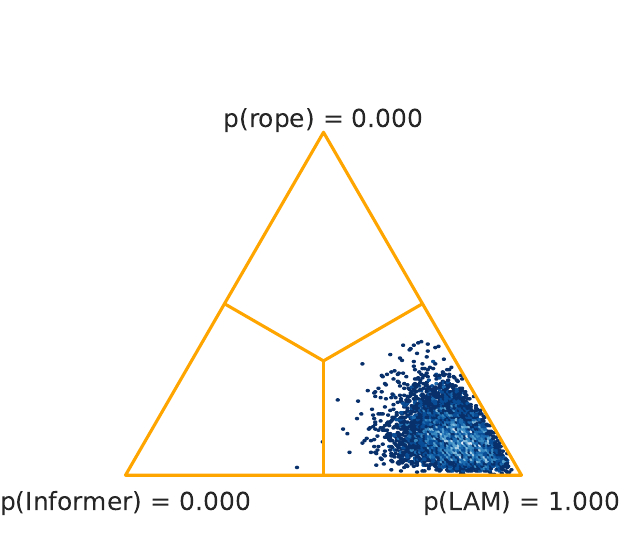}
		\caption{MAE Bayesian test.}
		\label{fig:mae-bayesian}
	\end{subfigure}
	\caption{Bayesian tests  for Informer vs \proposal.}
	\label{fig:bayesian-informer-leam}
\end{figure}

To better support this comparison, we conducted two Bayesian tests: one focusing on the MSE and the other on the MAE, using a $0.05$ ROPE. As illustrated in Figure~\ref{fig:mse-bayesian} and Figure~\ref{fig:mae-bayesian}, there is statistically significant superiority of \proposal\ with respect to the rest of algorithms considered. These significant differences underscore the robustness and effectiveness of our model compared to existing LSTF algorithms. Consequently, this provides strong validation for our experimentation methodology and supports the reliability of the conclusions drawn from our study.

\section{Comparing \proposal\ and ProbAttn within the same architecture framework} \label{sec:informer-experimental-setup:ProbAttention}

Following the previous comparison, it is pertinent to consider whether the two attention mechanisms, \proposal\ and Informer’s attention mechanism (ProbAttn) are competing on equal terms. In this Section we will conduct an experimental comparison between ProbAttention and \proposal\ within the transformer vanilla architecture. The same validation schema from the previous section is maintained. The results derived from this section will help us have an empirical comparison focusing exclusively on attention mechanisms.

\subsection{Experimental results}
\label{sec:informer:experimental-setup:probattention:results}

Informer is a significantly more complex model than the vanilla transformer integrating \proposal. The Informer model utilises its own positional encoding based on various time frequencies (seconds, minutes, hours, weeks, etc.) and incorporates layers such as one-dimensional convolutions to facilitate feature extraction from the dataset. Therefore, before proceeding with further experimentation, it is essential to compare \proposal\ and ProbAttn by aligning the frameworks under which both attention mechanisms are implemented. To this end, we will compare the results obtained by ProbAttn and \proposal\ within the vanilla transformer architecture, to isolate the effects of Informer's attention mechanism with respect to the characteristics of its architecture. This involves using only fully connected layers and the sine-based positional encoding from the original proposal by Vaswani et al~\citep{vaswani2017attention}, instead of the complex architecture of Informer.

\begin{table}[!hbt]
\centering
\resizebox{0.6\textwidth}{!}{\begin{tabular}{c|c|cc|cc}
\toprule
\multicolumn{2}{c}{Attention} & \multicolumn{2}{|c}{\proposal} & \multicolumn{2}{|c}{ProbAttn} \\
\midrule
\multicolumn{2}{c|}{Metric} & MSE & MAE & MSE & MAE \\
\midrule
\multirow{5}{*}{\rotatebox{90}{ETTh$_1$}} & 24 & \textbf{0.471} & \textbf{0.448} & 0.498 & 0.507 \\
                          & 48 & 0.560 & 0.545 & \textbf{0.501} & \textbf{0.515} \\
                          & 168  & \textbf{1.011} & 0.807 & 1.071 & \textbf{0.804} \\
                          & 336 & \textbf{0.923} & \textbf{0.756} & 1.002 & 0.796  \\
                          & 720 & \textbf{0.993} & \textbf{0.732} & 1.470 & 0.963  \\
\midrule
\multirow{5}{*}{\rotatebox{90}{ETTh$_2$}} & 24 & \textbf{0.588} & \textbf{0.543} & 0.742 & 0.683 \\
                          & 48 & \textbf{0.780} & \textbf{0.692} & 0.881 & 0.715 \\
                          & 168  & 3.209 & 1.479 & \textbf{2.652} & \textbf{1.380} \\
                          & 336 & 6.289 & 2.192 & \textbf{1.487} & \textbf{0.950} \\
                          & 720 & \textbf{2.835} & \textbf{1.204} & 3.234 & 1.314 \\
\midrule
\multirow{5}{*}{\rotatebox{90}{ETTm$_1$}} & 24 & \textbf{0.264} & \textbf{0.301} & 0.543 & 0.495 \\
                          & 48  & 0.714 & 0.561 & \textbf{0.698} & \textbf{0.560} \\
                          & 96  & \textbf{0.487} & \textbf{0.511} & 0.547 & 0.522 \\
                          & 288  & 0.522 & 0.522 & \textbf{0.502} & \textbf{0.505} \\
                          & 672 & 0.774 & \textbf{0.658} & \textbf{0.737} & 0.661 \\
\midrule
\multirow{5}{*}{\rotatebox{90}{ETTm$_2$}} & 24 & \textbf{0.453} & 0.507 & 0.482 & \textbf{0.502} \\
                          & 48  & 0.874 & 0.661 & \textbf{0.521} & \textbf{0.535} \\
                          & 96  & \textbf{0.520} & \textbf{0.526} & 0.788 & 0.641 \\
                          & 288  & \textbf{1.137} & \textbf{0.799} & 1.639 & 1.045 \\
                          & 672 & \textbf{1.340} & \textbf{0.933} & 10.35 & 2.609 \\
\midrule
\multirow{5}{*}{\rotatebox{90}{WTH}}  & 24 & \textbf{0.307} & \textbf{0.349} & 0.375 & 0.406 \\
                          & 48  & 0.409 & 0.450 & \textbf{0.406} & \textbf{0.449} \\
                          & 168  & 0.527 & 0.537 & \textbf{0.524} & \textbf{0.533} \\
                          & 336 & 0.582 & 0.576 & \textbf{0.563} & \textbf{0.563} \\
                          & 720 & \textbf{0.762} & \textbf{0.670} & 0.854 & 0.832 \\
\midrule
\multirow{5}{*}{\rotatebox{90}{ECL}} & 48  & 0.294 & 0.379 & \textbf{0.279} & \textbf{0.372} \\
                          & 168 & \textbf{0.282} & \textbf{0.375} & 0.288 & 0.382 \\
                          & 336 & \textbf{0.287} & \textbf{0.375} & 0.333 & 0.417 \\
                          & 720 & \textbf{0.368} & \textbf{0.391} & 0.406 & 0.443 \\
                          & 960 & \textbf{0.417} & \textbf{0.497} & 0.468 & 0.567 \\
\midrule
\multicolumn{2}{c}{Count}       & \multicolumn{2}{|c}{\textbf{37}}                 & \multicolumn{2}{|c}{23} \\
\multicolumn{2}{c|}{Accumulated} & \textbf{28.979} & \textbf{19.976} & 34.841 & 21.666 \\
\bottomrule

\end{tabular}}
\caption{MSE and MAE comparison between \proposal\ and ProbAttn mechanism. Best results in bold.}
\label{tab:informer-probatention}
\end{table}

The results presented in Table~\ref{tab:informer-probatention} provide a clear illustration of the performance differences. Notably, \proposal\ outperforms ProbAttn in 37 instances, while ProbAttn prevails in 23. However, it is crucial to also examine the cumulative error across all datasets and scenarios to establish a comprehensive metric. Specifically, the cumulative MSE for \proposal\ is $28.979$, compared to $34.841$ for ProbAttn. This indicates that, overall, \proposal\ exhibits a lower cumulative error than ProbAttn. Similarly, for MAE, \proposal\ achieves an accumulated value of $19.976$, whereas ProbAttn records $21.666$. Additionally, it is noteworthy that, on average, \proposal\ reduces the error of ProbAttn by approximately 25\% for longer prediction horizons. This finding corroborates the improvement and stability of our model's results under the LSTF problem.

\begin{figure}[!hbt]
	\centering
	\begin{subfigure}{0.45\linewidth}
		\includegraphics[width=\linewidth]{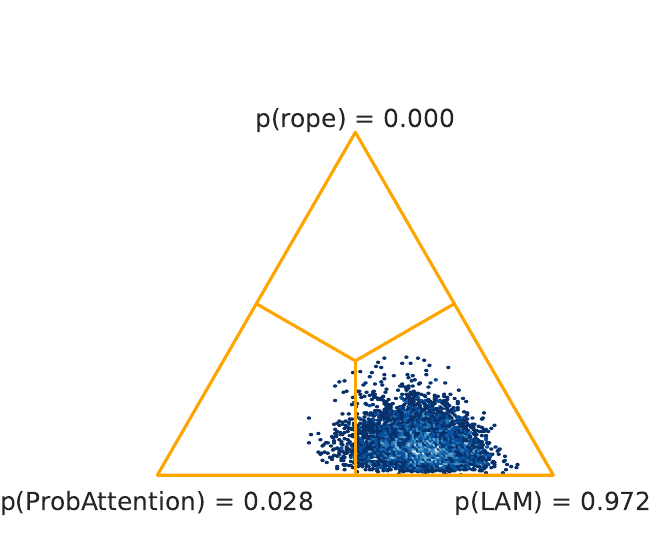}
		\caption{MSE Bayesian test.}
		\label{fig:mse-bayesian2}
	\end{subfigure}
	\begin{subfigure}{0.45\linewidth}
		\includegraphics[width=\linewidth]{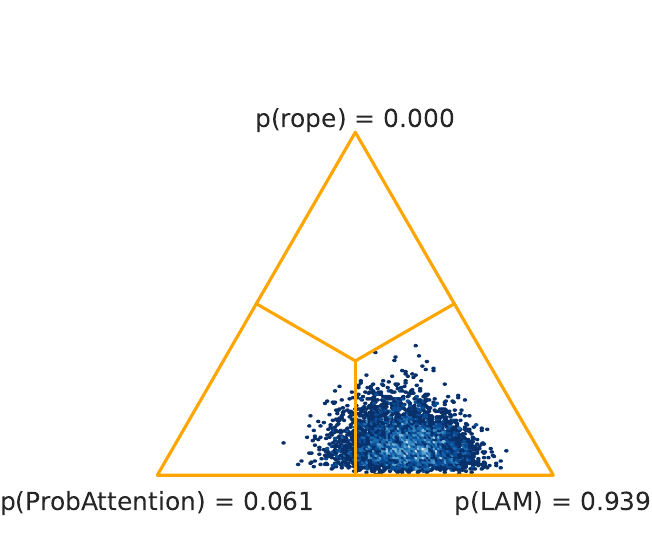}
		\caption{MAE Bayesian test.}
		\label{fig:mae-bayesian2}
	\end{subfigure}
	\caption{Bayesian tests ProbAttn vs \proposal.}
	\label{fig:bayesian-probattention-leam}
\end{figure}

To corroborate the data presented in the table, we include the corresponding Bayesian tests, as illustrated in Figure~\ref{fig:bayesian-probattention-leam}. These tests reveal that the differences are significant in favour of \proposal\, with a probability of 97.2\% for MSE and a probability of 93.9\% for MAE of \proposal\ being better than ProbAttn. These results support the effectiveness of \proposal\ within the same comparison framework.

In view of the results obtained in this experimentation, we argue that the complex architecture used in Informer~\citep{zhou2023informer} is not properly justified. 
Therefore, the question of determining a transformer architecture that successfully exploits the structure of long-term time series forecasting problems remains open. We note that in the paper that introduced Informer there is no comparison to the vanilla transformer and, thus, no empirical justification of the changes introduced at the architecture level -- the study focuses on comparing ProbAttn with FullAttention within their own architecture and comparing Informer to other models specifically designed for time series forecasting. For completeness, we also include comparisons between \proposal\ and FullAttention in Section~\ref{sec:lstf-experimental-setup}.

\subsection{On the suitability of benchmark models for the LSTF problem} \label{sec:informer-experimental-setup:conclusions}

As a conclusion to both experimentations, we have identified several areas for improvement in the literature. Firstly, all datasets utilised have a relatively small number of samples; none exceeds 100,000 instances (Table~\ref{tab:datasets-informer-description}). Transformer models for long-sequence time series forecasting, being highly complex architectures in terms of the number of weights, require a substantial amount of data to effectively train all parameters. The current datasets are insufficient in terms of sample size for this task, which produces results that are not sufficiently conclusive. Additionally, the datasets do not always encompass a large set of features. An increase in the number of variables or features inherently increases the complexity of multivariate prediction for LSTF, leading to more interesting problems. 

Analysing the ETT datasets in greater detail, we observe that there are four variants of the same dataset. Specifically, the nature of the data for ETTh1 and ETTm1 is the same as for ETTh2 and ETTm2; they represent the same problem but are derived from different sources. Additionally, each dataset has been sampled at 15-minute intervals (ETTm) and one-hour intervals (ETTh). In the context of long-term time series forecasting, it is crucial to provide as much granularity as possible to extend the prediction horizon effectively and meaningfully as otherwise the model will essentially be trained to predict averages. Thus, including two variants of the same problem with different sampling intervals does not appear to be justified.

Recall that the WTH dataset is deals with the prediction of weather-related values. Weather forecasting is known to be an exceedingly complex problem that requires extensive information to predict future instances with any degree of accuracy. Despite this complexity, weather forecasting is typically not aimed at long prediction horizons due to the inherent increase in error over time. Therefore, we consider this dataset to be unsuitable as a benchmark dataset for long-term time series forecasting. Additionally, the WTH dataset only provides hourly resolution, limiting the ability to extend the prediction horizon per instance without significantly increasing the risk of a high margin of error.

Lastly, let us analyse the ECL dataset, which represents the electricity consumption of 321 stations within the electricity grid over approximately two years with an hourly sampling frequency. While the scope of this dataset is relatively similar to that of the ETT dataset, certain distinctions can be observed. Notably, the dataset has a very low resolution of data collection. With only two years of data and an hourly sampling frequency to model a total of 321 variables representing all stations, the feature-to-instance ratio is excessively high. This imbalance makes it challenging to effectively address the problem using this dataset.

For these reasons, we believe that the currently used datasets, while useful for measuring model performance, are insufficient to establish a comprehensive benchmark for time series analysis in general, or for the LSTF problem in particular. Consequently, we recognise the necessity of obtaining new datasets that more accurately represent the LSTF problem to better evaluate model performance. This need will be addressed in the following section of this paper.

\section{Towards a suitable collection of datasets for LSTF benchmarking}
\label{sec:lstf-experimental-setup}

To evaluate the performance of \proposal\ in LSTF problems, we have highlighted a new set of experiments using datasets from various domains with real-world characteristics, allowing the exploration of inherent characteristics of transformers for LSTF. This section details the experimental framework, including the datasets and algorithms involved. Finally, we present and discuss the results obtained.

\subsection{Experimental Details}
\label{sec:lstf-experimental-setup:subsec:experimental-details}

\textbf{Datasets:} For this experiment, we selected four datasets: Individual Household Electric Power (IHEP)~\citep{misc_individual_household_electric_power_consumption_235}, New York Taxi (NYTaxi)~\citep{new_york_taxi}, Residential Power Battery (RPB)~\citep{residential_power_battery}, and Time-series Industrial Anomaly (TiNA)~\citep{tina_dasci_arcelor,mhbos}. The IHEP dataset contains over 200,000 samples with 9 features, representing household electricity consumption measurements over a 4-year period. The NYTaxi dataset includes over 2 million instances with 6 features, such as distance travelled, number of passengers, amount paid, and trip duration. These metrics are aggregated by summing all trips within the same hour, allowing the dataset to support predictions for subsequent hours, days, or weeks. The RPB dataset comprises more than 500,000 samples with 6 features, measuring the actual consumption of a population, solar power generation, and battery levels. Finally, the TiNA dataset includes over 38 million instances with 108 features, sampled every two seconds. This dataset is a real-world dataset provided by ArcelorMittal, reflecting data from an ore mining machine.

These datasets contain a large number of instances, allowing us to extend the prediction horizons substantially compared to the Informer-derived experimentation. Specifically, for the RPB and IHEP datasets, the window sizes are 180, 360, 720, 1440, and 4320, corresponding to 3 hours, 6 hours, 12 hours, 24 hours, and 3 days, respectively. For the NYTaxi dataset, the sequence sizes used were 120, 240, 720, 1440, and 2160, representing 2 days, 4 days, 12 days, 24 days, and 36 days, respectively. Finally, for the TiNA dataset, sequence sizes of 360, 720, 1440, 2880, and 5760 were used, corresponding to 3 hours, 6 hours, 12 hours, 24 hours, and 2 days, respectively. As shown, the predictions are long-term, aiming for the longest possible prediction horizon that is meaningful for each problem. The sequence size refers to both the input size of the network and the size of the prediction to be made.

\begin{table}[!hbt]
\centering
\begin{tabular}{llll}
\toprule
\textbf{Dataset} & \textbf{Instances} & \textbf{Features} & \textbf{Sampling}   \\
\midrule
IHEP   & 2,075,259     & 7        & 1 minute     \\
NYTaxi   & 2,091,896     & 6        & 1 hour     \\
RPB   & 614,880    & 4        & 1 minute \\
TiNA   & 38,000,000    & 103        & 1 second \\
\bottomrule
\end{tabular}
\caption{Datasets description. Number of instances, features and sampling rates are displayed in the table.}
\label{tab:datasets-uci-description}
\end{table}

\textbf{Multivariate prediction and error metrics:} 
Within this framework, the same target task and metrics outlined in the previous section are employed, alongside Bayesian tests. For validation, a data partitioning scheme of 70\% training data and 30\% testing data has been utilised.

\textbf{Algorithms:} In the experimentation provided in Section~\ref{sec:informer-experimental-setup}, Informer and LAM are the closest models in terms of performance. The other algorithms, including transformers such as LogTrans and Reformer, showed significantly poorer results. Consequently, we have decided to exclude these algorithms from this comparison to obtain concise and more interpretable results. Therefore, we have retained Informer and \proposal\, and added the vanilla transformer with its attention mechanism (FullAttention) to serve as a baseline.  For our proposed model and the vanilla attention mechanism, the optimal depth was determined through hyperparameter tuning. The transformer architecture employed is a fully connected, symmetric structure with an equal number of encoding and decoding layers. The depth was varied between 2, 3, 5, 7, and 10 layers, contingent upon memory capacity, to identify the best configuration. This parameter will be named as number of encoding-decoding layers for the rest of the section. By using the same architecture to baseline the vanilla attention mechanism, we ensure that the observed results can be solely attributed to our proposal and the network depth. The same standardisation and metrics introduced in the previous section are used for the present one. The train/validation/test split utilised in this study allocated 70\% of the data for training, with 15\% of this subset further designated for validation and the remaining 85\% for training. The remaining 30\% of the data was used as the test set.

\textbf{Hardware and Code:} The same hardware used in the experimentation of Section~\ref{sec:informer-experimental-setup} has been kept in order to obtain consistent results. The code is also hosted in the repository\footnote{https://github.com/ari-dasci/S-LAM} provided in previous sections.

\subsection{Results and Analysis}
\label{sec:lstf-experimental-setup:subsec:results-analysis}

In this section, we will analyse the results obtained from the datasets previously described in the framework explanation section. First, we refer to the results presented in Table~\ref{tab:lstf-multivariate-optimization}. A notable observation from this table is the presence of empty cells, which indicate execution problems due to memory limitations for certain parameter combinations. As evident, our proposed model is the only one capable of running in all scenarios. This is attributable to the low number of parameters in our model (shown in Section~\ref{sec:informer-experimental-setup}), resulting from its simplicity in the encoder-decoder architecture as opposed to Informer's architecture, and the $\Theta(n \log n)$ memory requirement for \proposal\ as opposed to the $\Theta(n^2)$ memory ussage of FullAttention. Out of the total 20 execution scenarios, Informer successfully executes in 17, while the vanilla transformer executes in 12. These findings highlight the superior performance of our model when dealing with large prediction horizons and highlights the importance of efficient attention mechanisms, both from a time and memory point of view.

In terms of performance, \proposal\ shows clear superiority. The table presents error results using MSE and MAE metrics. Out of the 40 cells that summarise both metrics, \proposal\ achieves the lowest error in 32 instances and ties in 2. Informer prevails in 4 instances, while the Vanilla transformer ties in 2 and wins in 2. When analysed by dataset, NYTaxi appears to be the easiest to address, as all three algorithms exhibit lower error rates for this dataset. Conversely, for the RPB dataset, Informer and our proposal yield similar results, indicating close performance. In the IHEP dataset, and particularly in the TiNA dataset, there is a significant performance gap favouring our proposal. Comparing with the Vanilla transformer, it is evident that while it runs in fewer instances, it generally performs better than Informer, except perhaps for the RPB dataset, where their results are comparable.

\begin{table}[!hbt]
\centering
\resizebox{!}{0.4\textwidth}{\begin{tabular}{c|c|ccc|cc|ccc}
\toprule
\multicolumn{2}{c}{Attention} & \multicolumn{3}{|c}{\proposal} & \multicolumn{2}{|c}{Informer} & \multicolumn{3}{|c}{Vanilla} \\
\midrule
\multicolumn{2}{c|}{Metric} & MSE & MAE & Layers & MSE & MAE & MSE & MAE & Layers \\
\midrule
\multirow{5}{*}{\rotatebox{90}{IHEP}} & 180 & \textbf{0.956} & \textbf{0.665} & 5 & 1.272 & 0.818 & 1.021 & 0.702 & 5\\
                                        & 360 & \textbf{0.942} & \textbf{0.670} & 7 & 1.295 & 0.828 & 0.979 & 0.706 & 5 \\
                                        & 720 & \textbf{0.973} & \textbf{0.678} & 5 & 1.244 & 0.817 & \textbf{0.973} & \textbf{0.678} & 3 \\
                                        & 1440 & \textbf{0.898} & \textbf{0.644} & 10 & - & - & - & - & - \\
                                        & 4320 & \textbf{0.977} & \textbf{0.698} & 3 & - & - & - & - & - \\
\midrule
\multirow{5}{*}{\rotatebox{90}{NYTaxi}} & 120 & \textbf{0.076} & \textbf{0.209} & 5 & 0.366 & 0.480 & 0.112 & 0.248 & 3 \\
                                        & 240 & 0.082 & 0.221 & 5 & 0.376 & 0.515 & \textbf{0.068} & \textbf{0.203} & 2 \\
                                        & 720 & \textbf{0.082} & \textbf{0.212} & 5 & 0.359 & 0.496 & 0.092 & 0.223 & 3\\
                                        & 1440 & \textbf{0.112} & \textbf{0.239} & 3 & 0.291 & 0.435 & - & - & - \\
                                        & 2160 & \textbf{0.103} & \textbf{0.223} & 10 & 0.4 & 0.505 & - & - & - \\
\midrule
\multirow{5}{*}{\rotatebox{90}{RPB}} & 180 & \textbf{2.999} & \textbf{1.321} & 3 & 3.402 & 1.431 & 3.123 & 1.335 & 2 \\
                                        & 360 & 2.985 & \textbf{1.304} & 7 & \textbf{2.90} & 1.344 & 3.015 & 1.327 & 3 \\
                                        & 720 & \textbf{2.996} & \textbf{1.319} & 3 & 3.383 & 1.419 & 3.054 & 1.334 & 2 \\
                                        & 1440 & 3.081 & \textbf{1.346} & 3 & \textbf{2.926}& 1.35 & 3.114 & 1.362 & 2 \\
                                        & 4320 & 3.027 & 1.325 & 2 & \textbf{2.822} & \textbf{1.314} & - & - & - \\
\midrule
\multirow{5}{*}{\rotatebox{90}{TiNA}}  & 360 & \textbf{1.843} & \textbf{0.858} & 3 & 2.044 & 0.954 & 1.854 & 0.863 & 3 \\
                                        & 720 & \textbf{1.849} & \textbf{0.863} & 7 & 2.148 & 0.976 & - & - & - \\
                                        & 1440 & \textbf{1.857} & \textbf{0.867} & 10 & 2.033 & 0.940 & - & - & - \\
                                        & 2880 & \textbf{1.827} & \textbf{0.850} & 10 & 2.020 & 0.944 & - & - & - \\
                                        & 5760 & \textbf{1.838} & \textbf{0.853} & 7 & - & - & - & - & - \\
\midrule
\multicolumn{2}{c}{Count}  & \multicolumn{3}{|c}{34} & \multicolumn{2}{|c}{4} & \multicolumn{3}{|c}{4}       \\
\multicolumn{2}{c}{Mean Accumulated}  & \textbf{1.475} & \textbf{0.768} &  & 1.722 & 0.915 & 1.582 & 0.816 &  \\
\bottomrule

\end{tabular}}

\caption{Multivariate forecasting results, MSE and MAE are presented in the table, the lower the better, best results in bold. Layers represent the best configuration of encoding-decoding layers used to obtain the presented results.}
\label{tab:lstf-multivariate-optimization}
\end{table}

Although the results presented in the aforementioned table demonstrate the validity and quality of our proposal, the influence of the number of layers within this comparison warrants further examination. The results may be highly sensitive to this configuration, potentially presenting a distorted image of the outcomes. As observed in Table~\ref{tab:lstf-multivariate-optimization}, there is a variety of optimal configurations for each dataset and sequence size. Therefore, we present the results with a fixed architecture for both Vanilla and \proposal in Table~\ref{tab:lstf-multivariate}. These results were obtained using the largest architecture that can run completely on all datasets for both models: 5 encoding-decoding layers for Vanilla and 7 encoding-decoding layers for \proposal. This approach provides a consistent performance comparison with a fixed architecture, allowing us to contrast these findings with the results in the aforementioned table.

As observed in Table~\ref{tab:lstf-multivariate}, \proposal~consistently emerges as the superior algorithm in this comparison. Specifically, it outperforms the other algorithms 31 times, whereas Informer obtains better results 5 times, and Vanilla wins 4 times. This demonstrates the clear superiority of our proposal over both the baseline and Informer models. Regarding the mean of the MSE and MAE across rows with valid values, Vanilla appears to have a slight edge. However, it is important to note that these results are significantly influenced by Vanilla's inability to run under numerous circumstances due to its computational demands. Consequently, these findings strongly establish the validity of our proposal, even when utilising a fixed architecture.

\begin{table}[!hbt]
\centering
\resizebox{!}{0.4\textwidth}{\begin{tabular}{c|c|cc|cc|cc}
\toprule
\multicolumn{2}{c}{Attention} & \multicolumn{2}{|c}{\proposal} & \multicolumn{2}{|c}{Informer} & \multicolumn{2}{|c}{Vanilla} \\
\midrule
\multicolumn{2}{c|}{Metric} & MSE & MAE & MSE & MAE & MSE & MAE \\
\midrule
\multirow{5}{*}{\rotatebox{90}{IHEP}} & 180 & \textbf{0.994} & \textbf{0.692} & 1.272 & 0.818 & 1.036 & 0.706 \\
                                        & 360 & \textbf{0.942} & \textbf{0.670} & 1.295 & 0.828 & 0.979 & 0.706 \\
                                        & 720 & \textbf{0.984} & \textbf{0.695} & 1.244 & 0.817 & 1.070 & 0.730 \\
                                        & 1440 & \textbf{1.030} & \textbf{0.739} & - & - & - & - \\
                                        & 4320 & \textbf{1.013} & \textbf{0.702} & - & - & - & - \\
\midrule
\multirow{5}{*}{\rotatebox{90}{NYTaxi}} & 120 & \textbf{0.147} & \textbf{0.285} & 0.366 & 0.480 & 0.162 & 0.286 \\
                                        & 240 & 0.119 & 0.263 & 0.376 & 0.515 & \textbf{0.105} & \textbf{0.248} \\
                                        & 720 & \textbf{0.112} & \textbf{0.244} & 0.359 & 0.496 & 0.119 & 0.248 \\
                                        & 1440 & \textbf{0.151} & \textbf{0.270} & 0.291 & 0.435 & - & - \\
                                        & 2160 & \textbf{0.182} & \textbf{0.312} & 0.4 & 0.505 & - & - \\
\midrule
\multirow{5}{*}{\rotatebox{90}{RPB}} & 180 & \textbf{2.977} & \textbf{1.312} & 3.402 & 1.431 & 3.196 & 1.387 \\
                                        & 360 & 2.985 & \textbf{1.304} & \textbf{2.90} & 1.344 & 3.024 & 1.325 \\
                                        & 720 & 3.168 & 1.378 & 3.383 & 1.419 & \textbf{3.153} & \textbf{1.372} \\
                                        & 1440 & 3.137 & 1.368 & \textbf{2.926} & \textbf{1.35} & - & - \\
                                        & 4320 & 3.093 & 1.342 & \textbf{2.822} & \textbf{1.314} & - & - \\
\midrule
\multirow{5}{*}{\rotatebox{90}{TiNA}}  & 360 & \textbf{1.862} & \textbf{0.870} & 2.044 & 0.954 & - & - \\
                                        & 720 & \textbf{1.849} & \textbf{0.863} & 2.148 & 0.976 & - & - \\
                                        & 1440 & \textbf{1.863} & \textbf{0.867} & 2.033 & 0.940 & - & - \\
                                        & 2880 & \textbf{1.851} & \textbf{0.860} & 2.020 & 0.944 & - & - \\
                                        & 5760 & \textbf{1.838} & \textbf{0.853} & - & - & - & - \\
\midrule
\multicolumn{2}{c}{Count}  & \multicolumn{2}{|c}{\textbf{31}} & \multicolumn{2}{|c}{5} & \multicolumn{2}{|c}{4}       \\
\multicolumn{2}{c}{Mean Accumulated}  & 1.514 & 0.794 & 1.722 & 0.915 & \textbf{1.427} & \textbf{0.778} \\
\bottomrule

\end{tabular}}

\caption{Multivariate forecasting results, MSE and MAE are presented in the table, the lower the better, best results in bold. \textbf{Fixed architectures} for \proposal \ and Vanilla have been employed, being 7 encoding-decoding layers for \proposal \ and 5 for Vanilla.}
\label{tab:lstf-multivariate}
\end{table}

To obtain an overall assessment, we utilised the mean cumulative error in both MAE and MSE as summary metrics. This average was calculated by considering only the rows with valid values for each algorithm, excluding those without results. This approach may inadvertently favour the Vanilla transformer and Informer, as they do not execute in scenarios with higher error rates (i.e., long prediction horizons), and therefore, lack results in those contexts. Despite this potential bias, our model still achieves the lowest average cumulative error in both MSE and MAE. The Vanilla transformer follows, with Informer exhibiting the highest average cumulative error.

\begin{figure}[H]
	\centering
	\begin{subfigure}{0.45\linewidth}
		\includegraphics[width=\linewidth]{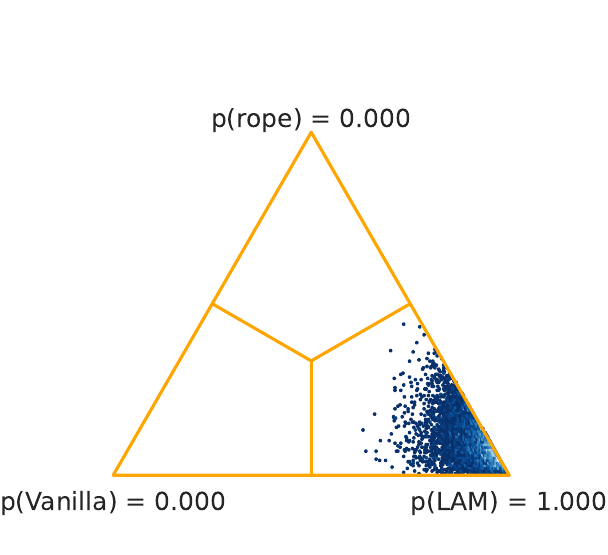}
		\caption{MSE Bayesian test.}
		\label{fig:mse-vanilla-lam-lstf}
	\end{subfigure}
	\begin{subfigure}{0.45\linewidth}
		\includegraphics[width=\linewidth]{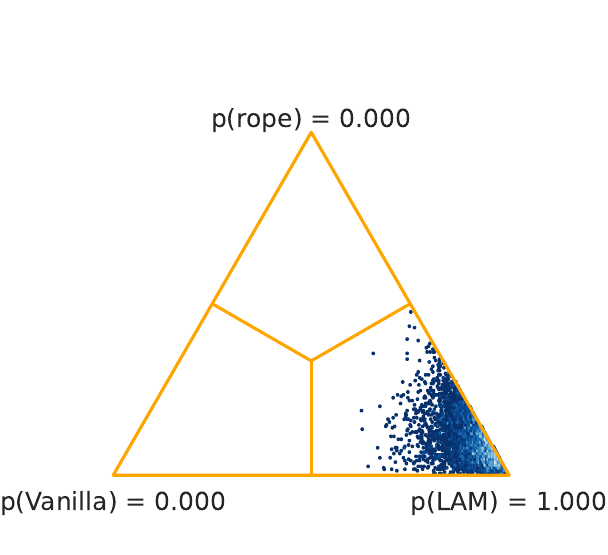}
		\caption{MAE Bayesian test.}
		\label{fig:mae-vanilla-lam-lstf}
	\end{subfigure}
	\caption{Bayesian tests for Vanilla vs \proposal.}
	\label{fig:vanilla-lam-lstf}
\end{figure}

\begin{figure}[H]
	\centering
	\begin{subfigure}{0.45\linewidth}
		\includegraphics[width=\linewidth]{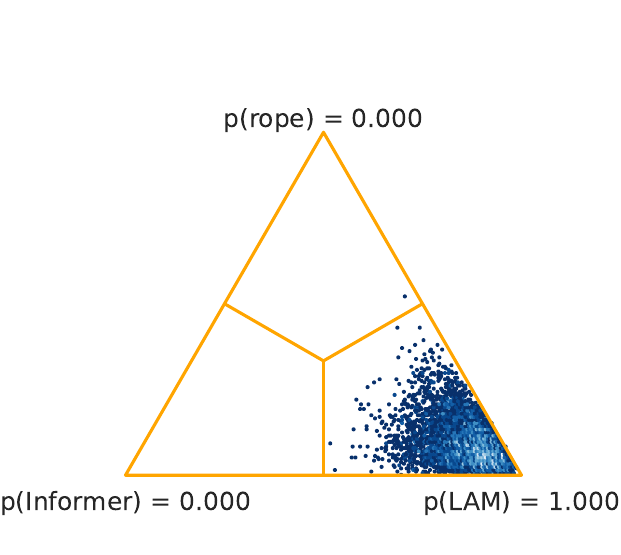}
		\caption{MSE Bayesian test.}
		\label{fig:mse-informer-lam-lstf}
	\end{subfigure}
	\begin{subfigure}{0.45\linewidth}
		\includegraphics[width=\linewidth]{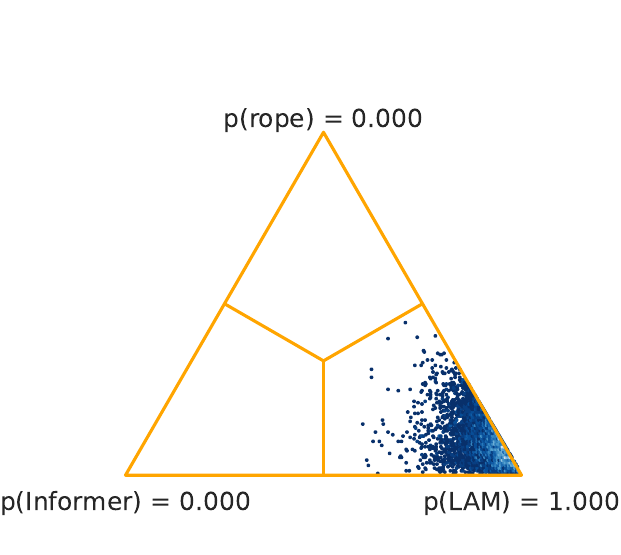}
		\caption{MAE Bayesian test.}
		\label{fig:mae-informer-lam-lsft}
	\end{subfigure}
	\caption{Bayesian tests for Informer vs \proposal.}
	\label{fig:informer-lam-lstf}
\end{figure}

To substantiate the argument that our algorithm outperforms both Informer and the Vanilla transformer, we conducted two Bayesian tests. The results, presented in Figure~\ref{fig:vanilla-lam-lstf} and Figure~\ref{fig:informer-lam-lstf}, clearly indicate statistically significant differences between the algorithms, favouring our proposal. These findings affirm the validity of our model.

In this experimentation, as mentioned at the beginning of this section, we varied the number of encoding-decoding layers to determine the optimal configuration. To study the effect of these architectural variations, the results are presented in bar charts, which can be viewed in Figure~\ref{fig:ihep-nytaxi-mse-layers} and Figure~\ref{fig:rpb-tina-mse-layers}, with a common legend in Figure~\ref{fig:legend-ihep-nytaxi-tina-rpb-layers}. It is important to note that, depending on the dataset size, some combinations of model layers may encounter memory issues and fail to execute, as observed in the TiNA dataset for sizes 1440, 2880, and 5760.

\begin{figure}[!hbt]
	\centering
	\begin{subfigure}{0.49\linewidth}
		\includegraphics[width=\linewidth]{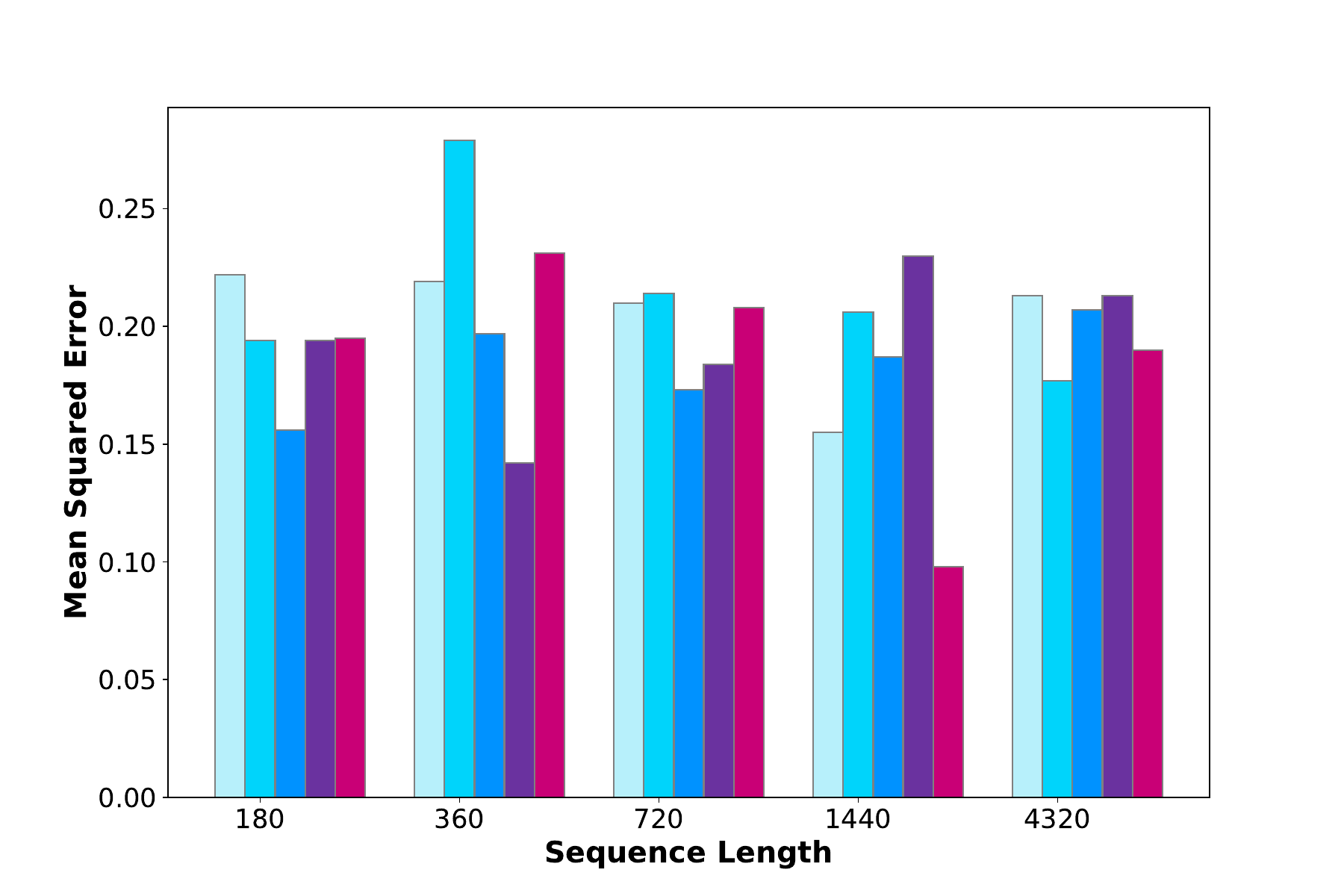}
		\caption{IHE dataset.}
		\label{fig:ihep-mse-layers}
	\end{subfigure}
	\begin{subfigure}{0.49\linewidth}
		\includegraphics[width=\linewidth]{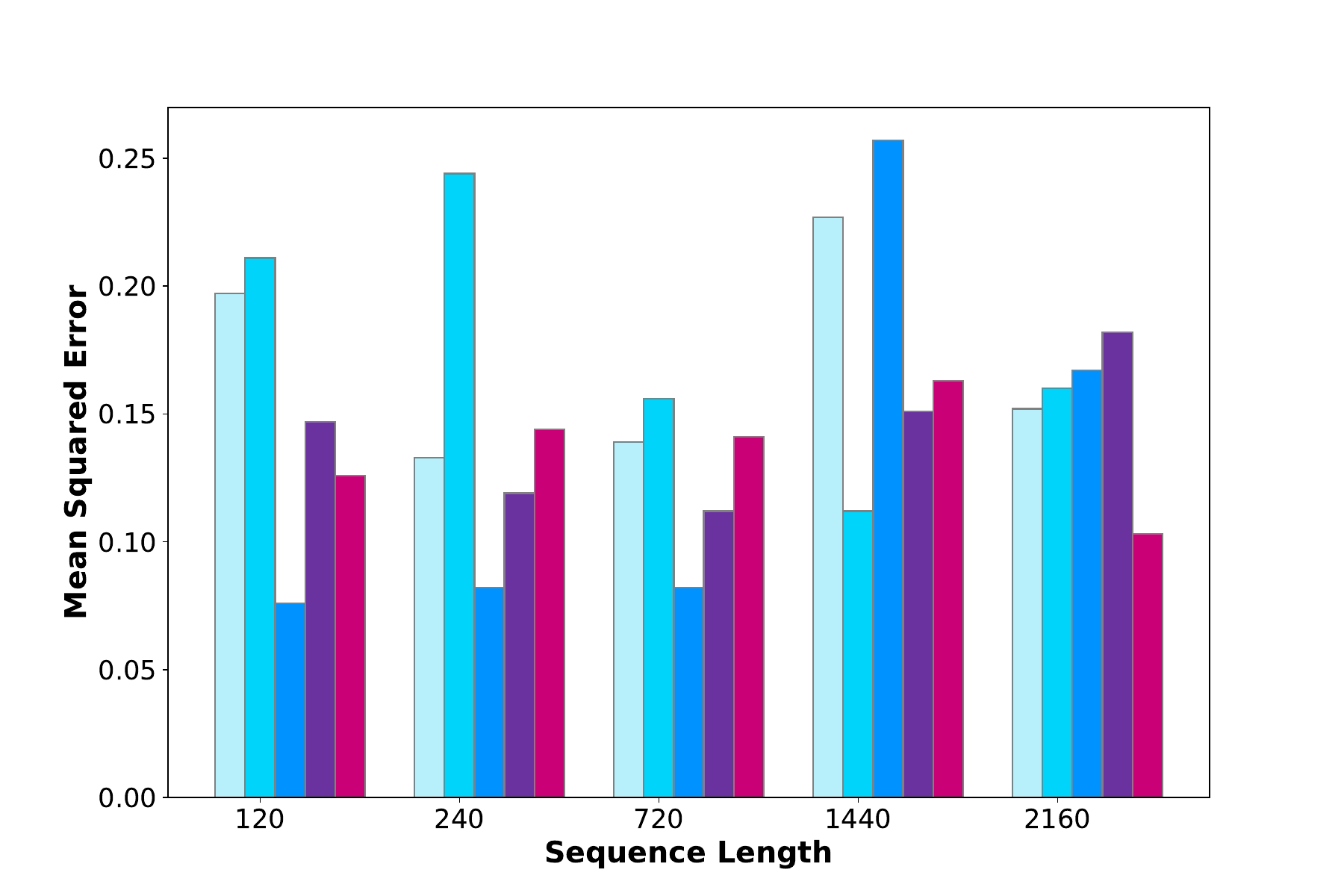}
		\caption{NYTaxi dataset.}
		\label{fig:nytaxi-mse-layers}
	\end{subfigure}
     \begin{subfigure}{\linewidth}
        \centering
        \includegraphics[scale=0.015]{figures/leyenda2.png}
        \caption{Legend for the barplots.}
        \label{fig:legend-ihep-nytaxi-tina-rpb-layers}
    \end{subfigure}
	\caption{Barplots for the MSE of \proposal\. MSE is presented in the barplot in the y-axis, sequence size (input and output) is presented in the x-axis.}
	\label{fig:ihep-nytaxi-mse-layers}
\end{figure}

The results exhibit some variation. Let us call by $N$-encoder-decoder the LAM model with $N$ encoding layers and $N$ decoding layers. For the IHEP dataset, the 2-encoder-decoder model is not optimal in any scenario. Specifically, the 5-encoder-decoder model performs best for size 180, the 7-encoder-decoder model for size 360, the 5-encoder-decoder model for size 720, the 10-encoder-decoder model for size 1440, and the 3-encoder-decoder model for size 4320. This indicates that there is no consistent trend across all sequence sizes for this dataset. However, the 10-encoder-decoder configuration yields favourable results for the longer sequence sizes of 1440 and 4320, suggesting that a higher number of layers may be beneficial for longer prediction horizons.

For the NYTaxi dataset, the most effective configurations generally consist of 2 and 3 encoding and decoding layers, with the 10-encoder-decoder architecture performing best for the longest prediction horizon. These results are expected, as the dataset does not represent a particularly complex problem, allowing it to be effectively addressed with a smaller architectural configuration in most cases.

For the RPB dataset, the optimal architecture is the 7-encoder-decoder model for sizes 180 and 360, the 3-encoder-decoder model for sizes 720 and 1440, and a tie between the 2-encoder-decoder and 4-encoder-decoder for size 4320. This dataset does not appear to represent a particularly complex problem over extended prediction horizons, as evidenced by the lack of improvement in results with an increased number of attention layers.

Finally, we present the results obtained from the TiNA dataset, which is the most complex due to its large number of instances and variables. Consequently, it is the dataset that stands to benefit the most from architectures with more parameters. For this dataset, the 10-encoder-decoder architecture is unable to execute for sizes 1440 and above. For size 360, the optimal architecture is the 3-encoder-decoder configuration, while the 7-encoder-decoder model performs best for all other cases. The TiNA dataset clearly demonstrates the performance advantages of higher architectural depth, particularly for the sequence size of 5760.

\begin{figure}[!hbt]
	\centering
	\begin{subfigure}{0.45\linewidth}
		\includegraphics[width=\linewidth]{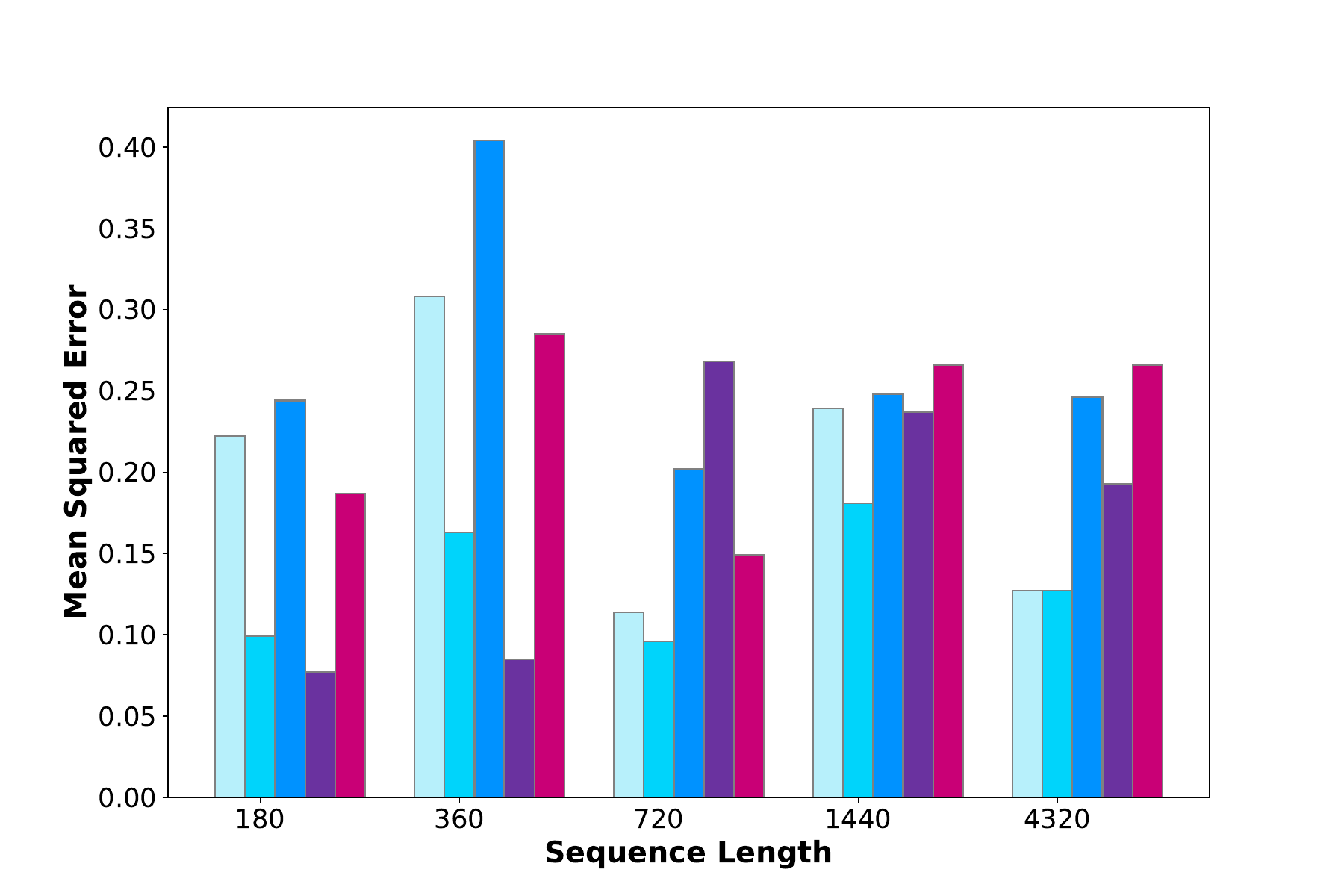}
		\caption{RPB dataset.}
		\label{fig:rpb-mse-layers}
	\end{subfigure}
	\begin{subfigure}{0.45\linewidth}
		\includegraphics[width=\linewidth]{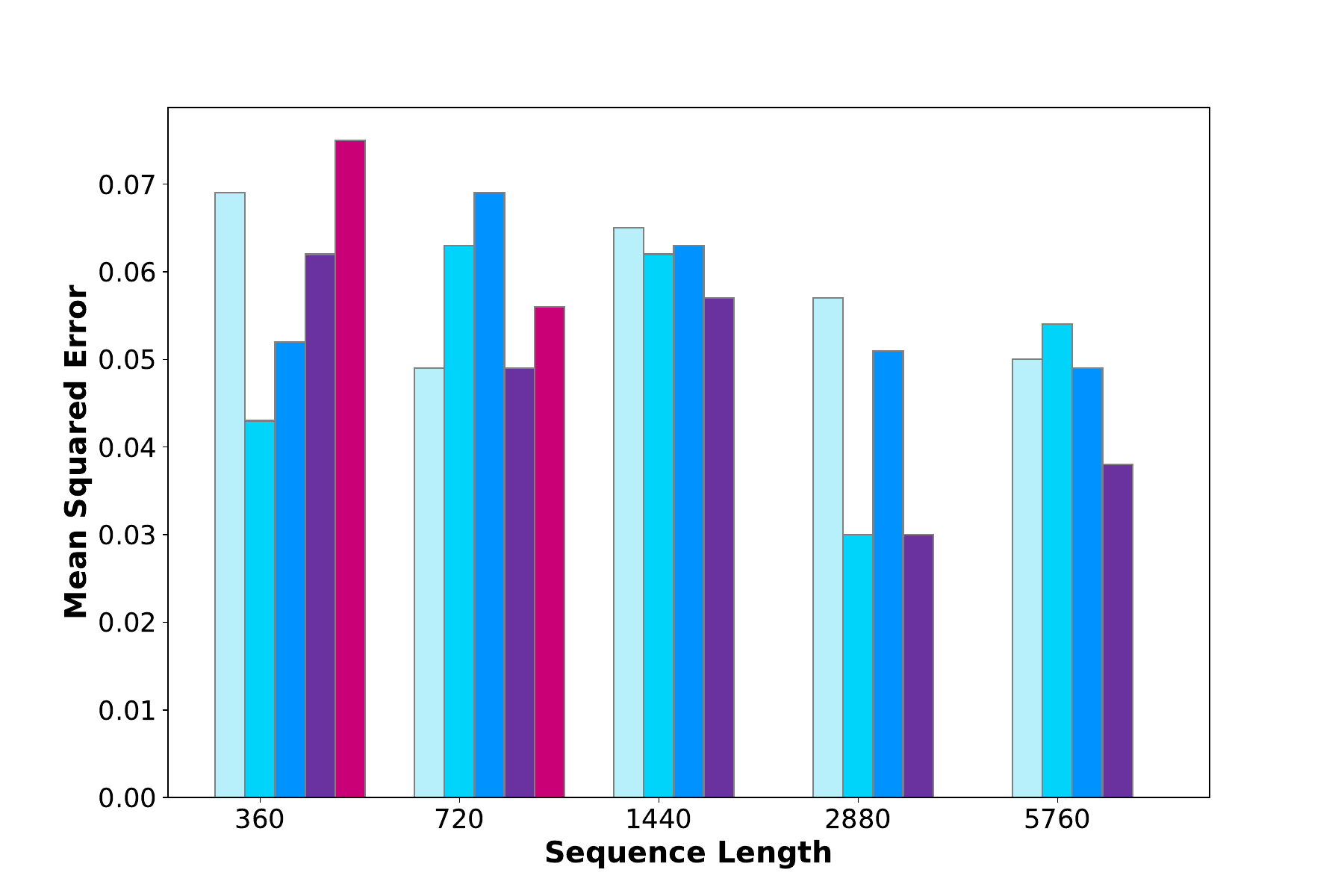}
		\caption{TiNA dataset.}
		\label{fig:tina-mse-layers}
	\end{subfigure}
    \begin{subfigure}{\linewidth}
        \centering
        \includegraphics[scale=0.015]{figures/leyenda2.png}
        \caption{Legend for the barplots.}
        \label{fig:legend-ihep-nytaxi-tina-rpb-layers2}
    \end{subfigure}
	\caption{Barplots for the MSE of \proposal\. MSE is presented in the barplot in the y-axis, sequence size (input and output) is presented in the x-axis.}
	\label{fig:rpb-tina-mse-layers}
\end{figure}

\section{Conclusions and future work}
\label{sec:conclusions-future-work}

This paper proposes a novel attention mechanism for transformers, specifically designed for long sequence time series forecasting. This mechanism leverages the locality of the data and maintains a theoretical efficiency of $\Theta(log(n))$. Our proposed mechanism has demonstrated superior performance compared to state-of-the-art attention-based mechanisms and models, including Informer~\citep{zhou2023informer}, LogTrans~\citep{li2019enhancing}, Reformer~\citep{kitaev2020reformer}, and Autoformer~\citep{wu2021autoformer}, as well as LSTM models. 

Additionally, we have highlighted the necessity for new and improved datasets for the evaluation of the LSTF problem, in contrast to existing benchmarks. To address this need, we have introduced new datasets that facilitate a more accurate assessment of model quality in these scenarios.

On larger and more challenging datasets, \proposal~has consistently outperformed not only Informer but also the baseline represented by the vanilla model across nearly all scenarios. Furthermore, due to the simplicity of the model accompanying our mechanism, our proposal is uniquely capable of executing effectively in all the studied scenarios. Additionally, we observed that the results exhibit low sensitivity to the depth of the architecture employed. By maintaining a fixed architecture size, we confirmed that the hypothesis remains valid.

Conducted experiments have also yielded a strong model performance and improved metrics when employing a larger number of attention layers for complex problems. This suggests that, for problems that are challenging to model or that involve long prediction horizons, the results benefit significantly from utilising a greater number of attention layers.

Finally, we identified several areas for improvement and further research to enhance the performance of future models and advance the field. Firstly, as an existing challenge, it is essential to establish a comprehensive benchmark that includes datasets, methodologies, and metrics to evaluate new proposals consistently. Maintaining an up-to-date table of state-of-the-art model results is also crucial. Additionally, our study exclusively proposes an attention mechanism without a dedicated architecture; thus, future work should focus on developing and integrating such architectures. Lastly, considering the promising results obtained with a basic model employing a novel attention mechanism, it is imperative to conduct rigorous ablation studies. These studies will help discern which components positively impact the results and which do not, thereby guiding the development of more effective models.

\subsubsection*{Acknowledgements}

This publication is part of the project ``Ethical, Responsible, and General Purpose Artificial Intelligence: Applications In Risk Scenarios'' (IAFER) Exp.:TSI-100927-2023-1 funded through the creation of university-industry research programs (Enia Programs), aimed at the research and development of artificial intelligence, for its dissemination and education within the framework of the Recovery, Transformation and Resilience Plan from the European Union Next Generation EU through the Ministry of Digital Transformation and the Civil Service. This work was also partially supported by Knowledge Generation Projects, funded by the Spanish Ministry of Science, Innovation, and Universities of Spain under the project PID2023-150070NB-I00. Ignacio Aguilera-Martos was supported by the Ministry of Science of Spain under the FPI programme PRE2021-100169.

\newpage
\appendix

\section{Theoretical proofs}
\label{appendix:theoretical-proofs}

\paragraph{\textbf{Proposition}} \label{prop:perm_appendix}
In the case of the full attention mechanism, for any positive integers $h$ and $d_a$ and any permutation matrix $P_\pi$, we have 
\begin{equation*}
    \operatorname{MultiHead}(P_\pi Q, P_\pi K, P_\pi V) =  P_\pi \operatorname{MultiHead}(Q, K, V).
\end{equation*}

\begin{proof}
We first prove the proposition for $h = 1$, that is, we show that
\begin{equation*}
    \operatorname{FullAttn}(P_\pi Q, P_\pi K, P_\pi V) =  P_\pi \operatorname{FullAttn}(Q, K, V)
\end{equation*}
 Let  $a_{i,j}$ and $\hat{a}_{i,j}$ be the attention scores for the original inputs and the permuted inputs, respectively. Note that $(P_\pi Q)(P_\pi K)^T = P_\pi Q   K^T P_\pi^T$. That is, $\hat{a}_{i,j} = a_{\pi(i),\pi(j)}$. Thus, the $i$-th row of $\operatorname{FullAttn}(P_\pi Q, P_\pi K, P_\pi V)$ is the vector $\sum_{j = 1}^n a_{\pi(i),\pi(j)} V_{\pi(j)} = \sum_{j = 1}^n a_{\pi(i),j} V_{j}$ as we wanted. Finally, we note that $\operatorname{Concat}(P_\pi head_1, \ldots, P_\pi head_h) W^O = P_\pi \operatorname{Concat}( head_1, \ldots, head_h) W^O$, as the tensors $head_1, head_2, \ldots, head_h$ are concatenated on the last dimension and $P_\pi$ permutes the second to last dimension.
\end{proof}

\paragraph{\textbf{Lemma}} \label{lem:TM_appendix}
 Let $r \in \{0,1,\ldots,s-1\}$. For each $i_1 \in \{0, 1, \ldots, L-1\}$ and  $j_1 \in \{0, 1, \ldots, 2L-2\}$, let $i = rL+i_1$ and $j = (r-1)L+1+j_1$. We have $i-L+1 \le j \le i$ if and only if $i_1 \le j_1 \le i_1+L-1$.

\begin{proof}
    We note that $j - i+L-1 = j_1 - i_1$ and $i-j = i_1+L-1-j_1$ by definition of $i$ and $j$. Thus, we have $j_1 - i_1 \ge 0$ if and only if $j-i + L-1 \ge 0$. Equivalently, $i_1 \le j_1$ if and only if $i-L+1 \le j$, which deals with the lower bounds of the statement. Analogously, $i-j \ge 0$ if and only if $i_1+L-1-j_1 \ge 0$ or, equivalently, $j \le i$ if and only if $j_1 \le i_1+L-1$, which finishes the proof.
\end{proof}

\paragraph{\textbf{Lemma}} \label{lem:TQ-TK_appendix}
    For each $r \in \{0, 1, \ldots, s-1\}$, the matrix defined as $T_A[r, \cdot, \cdot] = T_Q[r,\cdot,\cdot] T_K[r,\cdot,\cdot]^T$ has dimensions $L \times (2L-1)$. For each $i \in \{0,1,\ldots, n-1\}$ and $j$ with $i-L+1\le j \le i$, setting $i_1 = i \pmod L$, $r = (i -i_1)/L$ and $j_1 = j-(r-1)L-1$, we have $r \in \{0, 1, \ldots, s-1\}$, $i_1 \in \{0, 1, \ldots, L-1\}$ and $j_1 \in \{0,1,\ldots,2L-2\}$. Moreover, the equality $\langle Q_i, K_j \rangle = T_A[r, i_1, j_1]$ holds.

\begin{proof}
   Recall that $T_Q$ has dimension $s \times L \times d_q$ and $T_K$ has dimension $s \times (2L-1) \times d_q$, so $T_K[r,\cdot,\cdot]^T$ has dimensions $d_q \times (2L-1)$ and $T_Q[r,\cdot,\cdot] T_K[r,\cdot,\cdot]^T$ has dimensions $L \times (2L-1)$ as we wanted. For the second claim, first, we check that indeed $i_1 \in \{0,1,\ldots, L-1\}$ (which follows by the definition of $\mod L$) and $r \in \{0,1,\ldots,s-1\}$ (which follows from $s = n /L > (i-i_1)/L = r$). To prove that $j_1 \in \{0, 1, \ldots, 2L-2\}$, we use $i-L+1\le j \le i$, $i = i_1 + rL$ and $j_1 = j-(r-1)L-1$, obtaining
   \begin{equation*}
       0 \le j-i+L-1 = j-i_1-rL+L-1 = j_1 -i_1 \le j_1
   \end{equation*}
   and 
   \begin{equation*}
   \begin{aligned}
        j_1 & =  j-(r-1)L-1 \le i -(r-1)L-1 = i_1 + rL-(r-1)L-1    \\ & = i_1 +L-1 \le L-1+L-1 = 2L-2.       
   \end{aligned}
   \end{equation*}
   Finally, note that by definition of $T_Q$ and $T_K$, we have $Q_i = T_Q[r, i_1, \cdot]$ and $K_j = T_K[r, j_1, \cdot]$ as $(r-1)L+1+j_1 = j\ge 0$. Note that $T_A[r, i_1, j_1] = \langle T_Q[r, i_1, \cdot], T_K[r, j_1, \cdot] \rangle$ by definition of matrix multiplication. Thus, we conclude that $\langle Q_i, K_j \rangle = T_A[r, i_1, j_1]$.
\end{proof}

\paragraph{\textbf{Lemma}} \label{lem:T_S_appendix}
    The tensor defined as $T_S = \operatorname{softmax}((T_Q T_K^T + T_M) / \sqrt{d_q})$, where $\operatorname{softmax}$ is applied on the last dimension, has dimensions $s \times L \times (2L-1)$. Let $S$ be the matrix $\operatorname{softmax}\left( (QK^T + M)/\sqrt{d_q}\right)$. For each  $i \in \{0,1,\ldots, n-1\}$ and $j$ with $i-L+1\le j \le i$, setting $i_1 = i \pmod L$, $r = (i -i_1)/L$ and $j_1 = j-(r-1)L-1$, we have $S[i,j] = T_S[r, i_1, j_1]$. Moreover, all the other entries of $S$ and $T_A$ are zero.

\begin{proof}
    The dimensions of $T_S$ follow from the dimensions of $T_A = T_Q T_K^T$, see Lemma~\ref{lem:TQ-TK}. Let $i,j \in \{0,1,\ldots, n-1\}$. We consider the change of variables $i_1 = i \pmod L$, $r = (i -i_1)/L$ and $j_1 = j-(r-1)L-1$. For the rest of the proof we distinguish two cases:
    \begin{itemize}
        \item The case when $i-L+1\le j \le i$. In view of Lemma~\ref{lem:TQ-TK}, we have $r \in \{0, 1, \ldots, s-1\}$, $i_1 \in \{0, 1, \ldots, L-1\}$ and $j_1 \in \{0,1,\ldots,2L-2\}$, and, moreover, $\langle Q_i, K_j \rangle = T_A[r, i_1, j_1]$, where $T_A = T_Q T_K^T$.
        \item The case when the $i-L+1\le j \le i$ does not hold. In such case, $M[i,j] = -\infty$ by definition, so $(QK^T+M)[i,j] = -\infty$.  Moreover, either $j_1 \not \in \{0, 2, \ldots, 2L-2\}$ and $T_M[r,i_1,j_1]$ is not defined, or $j_1 \in \{0, 2, \ldots, 2L-2\}$. In the latter case, we are in the setting of Lemma~\ref{lem:TM}, so, as $i-L+1\le j \le i$ does not hold, the inequality $i_1 \le j_1 \le i_1+L-1$ does not hold. From the definition of $T_M$, we conclude that $(T_A + T_M)[r,i_1, j_1] = -\infty$. That is, we have shown that either $(T_A + T_M)[r,i_1, j_1]$ is not defined and $(QK^T +M)[i,j] = -\infty$, or $(T_A + T_M)[r,i_1, j_1] =-\infty$ and $(QK^T +M)[i,j] = -\infty$.
    \end{itemize}
    Applying the operator $\operatorname{softmax}$ to the $i$-th row of $(QK^T+M)$, leads to all entries equal to zero except for those entries indexed by $(i,j)$ with $i-L+1\le j \le i$ (a total of $L$ entries). Analogously, applying the operator $\operatorname{softmax}$ to $(T_A+T_M)[r, i_1, \cdot]$, leads to all entries equal to zero except for those entries indexed by $(i_1,j_1)$ with $i_1 \le j_1 \le i_1+L-1$ (a total of $L$ entries). Moreover, $L$ non-zero entries are the same in both cases. Summarising, we have shown that: 
    \begin{itemize}
        \item when $i-L+1\le j \le i$, we have $S[i,j] = T_S[r, i_1, j_1]$.
        \item when $i-L+1\le j \le i$ does not hold,  we have $S[i,j] = 0$ and either $T_S[r, i_1, j_1]$ is not defined (case $j_1 \not \in \{0,1,\ldots,2L-2\}$) or $T_S[r, i_1, j_1] = 0$. 
    \end{itemize}
    This dichotomy concludes the proof.
\end{proof}

\paragraph{\textbf{Lemma}} \label{lem:T_V_appendix}
    The tensor defined as $T_{LAM} = \operatorname{softmax}((T_Q T_K^T + T_M) / \sqrt{d_q})T_V$, where $\operatorname{softmax}$ is applied on the last dimension, has dimensions $s \times L \times d_{model}$. For each $i \in \{0,1,\ldots, n-1\}$, setting $i_1 = i \pmod L$ and $r = (i -i_1)/L$, we have $\proposal(Q,K,V)[i, \cdot] = T_{LAM}[r, i_1, \cdot]$.

\begin{proof}
    Recall that $T_S = \operatorname{softmax}((T_Q T_K^T + T_M) / \sqrt{d_q})$ has dimensions $s \times L \times (2L-1)$, see Lemma~\ref{lem:T_S}, and $T_V$ has dimensions $s \times (2L-1) \times d_{model}$. Thus, $T_{LAM}$ has dimensions $s \times L \times d_{model}$. Now, le t$i \in \{0,1,\ldots, n-1\}$ and  set $i_1 = i \pmod L$ and $r = (i -i_1)/L$. We note that $i_1 \in \{0,1,\ldots, L-1\}$ and $r \in \{0,1,\ldots, s-1\}$.  By Lemma~\ref{lem:T_S}, for all $j_1 \in \{0,1,\ldots,2L-2\}$, we have $T_S[r, i_1, j_1] = S[i, j]$ for $j = r(L-1)+1+j_1$, and $S[i,j] = 0$ for other values of $j$. Therefore, 
    we find that 
    \begin{equation*}
        T_{LAM}[r, i, \cdot] =  T_S[r, i_1, \cdot] T_V[r, \cdot, \cdot] = S[i, \cdot] V,
    \end{equation*}
    which equals $\proposal(Q,K,V)[i, \cdot]$, as we wanted to prove.
\end{proof}

\paragraph{\textbf{Theorem}} \label{thm:main_appendix}
    Algorithm 1 computes $\operatorname{\proposal}(Q, K, V)$ in time and memory $\Theta(n L)$, where time complexity is measured in the number of dot products of vectors performed.

\begin{proof}
    The fact that Algorithm~\ref{alg:local} does indeed compute the attention scores that we are interested in and that the output is that of $\proposal(Q,K,V)$ follows from Lemma~\ref{lem:T_V}. We have performed $s L \cdot (2L-1) =  (2L-1) n = \Theta(n \log n)$ dot products. This computation takes time and memory $\Theta(s L^2) = \Theta(L n)$. 
\end{proof}

\paragraph{\textbf{Corollary}}
    For $L = \lceil 4\log n \rceil$, Algorithm 1 computes $\operatorname{\proposal}(Q, K, V)$ in time and memory $\Theta(n \log n)$.

\begin{proof}
    This follows from instantiating $L$ in Theorem~\ref{thm:main}.
\end{proof}

\bibliographystyle{unsrt}
\bibliography{references}

\begin{thebibliography}{10}

\bibitem{lin2003symbolic}
Jessica Lin, Eamonn Keogh, Stefano Lonardi, and Bill Chiu.
\newblock A symbolic representation of time series, with implications for streaming algorithms.
\newblock In {\em Proceedings of the 8th ACM SIGMOD workshop on Research issues in data mining and knowledge discovery}, pages 2--11, 2003.

\bibitem{zhang2024deep}
Cheng Zhang, Nilam Nur~Amir Sjarif, and Roslina Ibrahim.
\newblock Deep learning models for price forecasting of financial time series: A review of recent advancements: 2020--2022.
\newblock {\em Wiley Interdisciplinary Reviews: Data Mining and Knowledge Discovery}, 14(1):e1519, 2024.

\bibitem{yan2024comprehensive}
Peng Yan, Ahmed Abdulkadir, Paul-Philipp Luley, Matthias Rosenthal, Gerrit~A Schatte, Benjamin~F Grewe, and Thilo Stadelmann.
\newblock A comprehensive survey of deep transfer learning for anomaly detection in industrial time series: Methods, applications, and directions.
\newblock {\em IEEE Access}, 12:3768--3789, 2024.

\bibitem{di2024hybrid}
Mario Di~Mauro, Giovanni Galatro, Fabio Postiglione, Wei Song, and Antonio Liotta.
\newblock Hybrid learning strategies for multivariate time series forecasting of network quality metrics.
\newblock {\em Computer Networks}, 243:110286, 2024.

\bibitem{gulay2024forecasting}
Emrah Gulay, Mustafa Sen, and Omer~Burak Akgun.
\newblock Forecasting electricity production from various energy sources in t{\"u}rkiye: A predictive analysis of time series, deep learning, and hybrid models.
\newblock {\em Energy}, 286:129566, 2024.

\bibitem{karevan2020transductive}
Zahra Karevan and Johan~AK Suykens.
\newblock Transductive lstm for time-series prediction: An application to weather forecasting.
\newblock {\em Neural Networks}, 125:1--9, 2020.

\bibitem{chen2023long}
Zonglei Chen, Minbo Ma, Tianrui Li, Hongjun Wang, and Chongshou Li.
\newblock Long sequence time-series forecasting with deep learning: A survey.
\newblock {\em Information Fusion}, 97:1--36, 2023.

\bibitem{arima}
George~EP Box, Gwilym~M Jenkins, Gregory~C Reinsel, and Greta~M Ljung.
\newblock {\em Time series analysis: forecasting and control}.
\newblock John Wiley \& Sons, 2015.

\bibitem{arima-limitations}
Chu-Cheng Lin, Aaron Jaech, Xin Li, Matthew~R Gormley, and Jason Eisner.
\newblock Limitations of autoregressive models and their alternatives.
\newblock {\em arXiv preprint arXiv:2010.11939}, 2020.

\bibitem{8489399}
Irena Koprinska, Dengsong Wu, and Zheng Wang.
\newblock Convolutional neural networks for energy time series forecasting.
\newblock In {\em 2018 International Joint Conference on Neural Networks (IJCNN)}, pages 1--8, 2018.

\bibitem{electronics8080876}
Renzhuo Wan, Shuping Mei, Jun Wang, Min Liu, and Fan Yang.
\newblock Multivariate temporal convolutional network: A deep neural networks approach for multivariate time series forecasting.
\newblock {\em Electronics}, 8(8), 2019.

\bibitem{HEWAMALAGE2021388}
Hansika Hewamalage, Christoph Bergmeir, and Kasun Bandara.
\newblock Recurrent neural networks for time series forecasting: Current status and future directions.
\newblock {\em International Journal of Forecasting}, 37(1):388--427, 2021.

\bibitem{10.1115/1.4056122}
Felipe~P. Marinho, Paulo A.~C. Rocha, Ajalmar R.~R. Neto, and Francisco D.~V. Bezerra.
\newblock {Short-Term Solar Irradiance Forecasting Using CNN-1D, LSTM, and CNN-LSTM Deep Neural Networks: A Case Study With the Folsom (USA) Dataset}.
\newblock {\em Journal of Solar Energy Engineering}, 145(4):1--11, 11 2022.

\bibitem{DUDUKCU2023109945}
Hatice~Vildan Dudukcu, Murat Taskiran, Zehra~Gulru {Cam Taskiran}, and Tulay Yildirim.
\newblock Temporal convolutional networks with rnn approach for chaotic time series prediction.
\newblock {\em Applied Soft Computing}, 133:1--14, 2023.

\bibitem{AGUILERAMARTOS2023223}
Ignacio Aguilera-Martos, Ángel M.~García-Vico, Julián Luengo, Sergio Damas, Francisco~J. Melero, José~Javier Valle-Alonso, and Francisco Herrera.
\newblock Tsfedl: A python library for time series spatio-temporal feature extraction and prediction using deep learning.
\newblock {\em Neurocomputing}, 517:223--228, 2023.

\bibitem{lipton2015critical}
Zachary~C Lipton, John Berkowitz, and Charles Elkan.
\newblock A critical review of recurrent neural networks for sequence learning.
\newblock {\em arXiv preprint arXiv:1506.00019}, 2015.

\bibitem{al2023lstm}
Safwan~Mahmood Al-Selwi, Mohd~Fadzil Hassan, Said~Jadid Abdulkadir, and Amgad Muneer.
\newblock Lstm inefficiency in long-term dependencies regression problems.
\newblock {\em Journal of Advanced Research in Applied Sciences and Engineering Technology}, 30(3):16--31, 2023.

\bibitem{vaswani2017attention}
Ashish Vaswani, Noam Shazeer, Niki Parmar, Jakob Uszkoreit, Llion Jones, Aidan~N Gomez, {\L}ukasz Kaiser, and Illia Polosukhin.
\newblock Attention is all you need.
\newblock {\em Advances in neural information processing systems}, 30, 2017.

\bibitem{zeng2023transformers}
Ailing Zeng, Muxi Chen, Lei Zhang, and Qiang Xu.
\newblock Are transformers effective for time series forecasting?
\newblock In {\em Proceedings of the AAAI conference on artificial intelligence}, number~9, pages 1--8, 2023.

\bibitem{zhou2023informer}
Haoyi Zhou, Jianxin Li, Shanghang Zhang, Shuai Zhang, Mengyi Yan, and Hui Xiong.
\newblock Expanding the prediction capacity in long sequence time-series forecasting.
\newblock {\em Artificial Intelligence}, 318:Paper No. 103886, 29, 2023.

\bibitem{bloomfield2004fourier}
Peter Bloomfield.
\newblock {\em Fourier analysis of time series: an introduction}.
\newblock John Wiley \& Sons, 2004.

\bibitem{lai2018modeling}
Guokun Lai, Wei-Cheng Chang, Yiming Yang, and Hanxiao Liu.
\newblock Modeling long-and short-term temporal patterns with deep neural networks.
\newblock In {\em The 41st international ACM SIGIR conference on research \& development in information retrieval}, pages 95--104, 2018.

\bibitem{hyndman2018forecasting}
Rob~J Hyndman and George Athanasopoulos.
\newblock {\em Forecasting: principles and practice}.
\newblock OTexts, 2018.

\bibitem{KhanZulfiqar}
Muhammad Sajjad, Zulfiqar~Ahmad Khan, Amin Ullah, Tanveer Hussain, Waseem Ullah, Mi~Young Lee, and Sung~Wook Baik.
\newblock {A novel CNN-GRU-based hybrid approach for short-term residential load forecasting}.
\newblock {\em IEEE Access}, 8:1--10, 2020.

\bibitem{KimTaeYoung}
Tae-Young Kim and Sung-Bae Cho.
\newblock {Predicting residential energy consumption using CNN-LSTM neural networks}.
\newblock {\em Energy}, 182:72--81, 2019.

\bibitem{li2019enhancing}
Shiyang Li, Xiaoyong Jin, Yao Xuan, Xiyou Zhou, Wenhu Chen, Yu-Xiang Wang, and Xifeng Yan.
\newblock Enhancing the locality and breaking the memory bottleneck of transformer on time series forecasting.
\newblock {\em Advances in neural information processing systems}, 32, 2019.

\bibitem{lim2021temporal}
Bryan Lim, Sercan~{\"O} Ar{\i}k, Nicolas Loeff, and Tomas Pfister.
\newblock Temporal fusion transformers for interpretable multi-horizon time series forecasting.
\newblock {\em International Journal of Forecasting}, 37(4):1748--1764, 2021.

\bibitem{zhou2022fedformer}
Tian Zhou, Ziqing Ma, Qingsong Wen, Xue Wang, Liang Sun, and Rong Jin.
\newblock Fedformer: Frequency enhanced decomposed transformer for long-term series forecasting.
\newblock In {\em International Conference on Machine Learning}, pages 1--19. PMLR, 2022.

\bibitem{watson1964smooth}
Geoffrey~S Watson.
\newblock Smooth regression analysis.
\newblock {\em Sankhy{\=a}: The Indian Journal of Statistics, Series A}, pages 359--372, 1964.

\bibitem{johnston1979smooth}
Gordon~J Johnston.
\newblock Smooth nonparametric regression analysis.
\newblock Technical report, North Carolina State University. Dept. of Statistics, 1979.

\bibitem{nguyen2022fourierformer}
Tan Nguyen, Minh Pham, Tam Nguyen, Khai Nguyen, Stanley Osher, and Nhat Ho.
\newblock Fourierformer: Transformer meets generalized fourier integral theorem.
\newblock {\em Advances in Neural Information Processing Systems}, 35:1--15, 2022.

\bibitem{feng2024attention}
Leo Feng, Frederick Tung, Hossein Hajimirsadeghi, Mohamed~Osama Ahmed, Yoshua Bengio, and Greg Mori.
\newblock Attention as an rnn.
\newblock {\em arXiv preprint arXiv:2405.13956}, 2024.

\bibitem{vuckovic2020mathematical}
James Vuckovic, Aristide Baratin, and Remi Tachet~des Combes.
\newblock A mathematical theory of attention.
\newblock {\em arXiv preprint arXiv:2007.02876}, 2020.

\bibitem{galassi2020attention}
Andrea Galassi, Marco Lippi, and Paolo Torroni.
\newblock Attention in natural language processing.
\newblock {\em IEEE Transactions on Neural Networks and Learning Systems}, 32(10):4291--4308, 2020.

\bibitem{beltagy2020longformer}
Iz~Beltagy, Matthew~E Peters, and Arman Cohan.
\newblock Longformer: The long-document transformer.
\newblock {\em arXiv preprint arXiv:2004.05150}, 2020.

\bibitem{benavoli2014bayesian}
Alessio Benavoli, Giorgio Corani, Francesca Mangili, Marco Zaffalon, and Fabrizio Ruggeri.
\newblock A bayesian wilcoxon signed-rank test based on the dirichlet process.
\newblock In {\em International Conference on Machine Learning}, pages 1026--1034. PMLR, 2014.

\bibitem{ariyo2014stock}
Adebiyi~A Ariyo, Adewumi~O Adewumi, and Charles~K Ayo.
\newblock Stock price prediction using the arima model.
\newblock In {\em 2014 UKSim-AMSS 16th international conference on computer modelling and simulation}, pages 106--112. IEEE, 2014.

\bibitem{taylor2018forecasting}
Sean~J Taylor and Benjamin Letham.
\newblock Forecasting at scale.
\newblock {\em The American Statistician}, 72(1):37--45, 2018.

\bibitem{bahdanau2014neural}
Dzmitry Bahdanau, Kyunghyun Cho, and Yoshua Bengio.
\newblock Neural machine translation by jointly learning to align and translate.
\newblock {\em arXiv preprint arXiv:1409.0473}, 2014.

\bibitem{salinas2020deepar}
David Salinas, Valentin Flunkert, Jan Gasthaus, and Tim Januschowski.
\newblock Deepar: Probabilistic forecasting with autoregressive recurrent networks.
\newblock {\em International Journal of Forecasting}, 36(3):1--11, 2020.

\bibitem{kitaev2020reformer}
Nikita Kitaev, {\L}ukasz Kaiser, and Anselm Levskaya.
\newblock Reformer: The efficient transformer.
\newblock {\em arXiv preprint arXiv:2001.04451}, 2020.

\bibitem{wu2021autoformer}
Haixu Wu, Jiehui Xu, Jianmin Wang, and Mingsheng Long.
\newblock Autoformer: Decomposition transformers with auto-correlation for long-term series forecasting.
\newblock {\em Advances in Neural Information Processing Systems}, 34:1--12, 2021.

\bibitem{misc_individual_household_electric_power_consumption_235}
Georges Hebrail and Alice Berard.
\newblock {Individual Household Electric Power Consumption}.
\newblock UCI Machine Learning Repository, 2012.

\bibitem{new_york_taxi}
New~York Taxi and Limousine Commission.
\newblock Tcl trip record data, July 2024.

\bibitem{residential_power_battery}
Christoph Bergmeir, Quang Bui, Frits de~Nijs, and Peter Stuckey.
\newblock Residential power and battery data, August 2023.

\bibitem{tina_dasci_arcelor}
Time-series industrial anomaly dataset, 2022.

\bibitem{mhbos}
Ignacio Aguilera-Martos, Marta García-Barzana, Diego García-Gil, Jacinto Carrasco, David López, Julián Luengo, and Francisco Herrera.
\newblock Multi-step histogram based outlier scores for unsupervised anomaly detection: Arcelormittal engineering dataset case of study.
\newblock {\em Neurocomputing}, 544:1--32, 2023.

\end{thebibliography}

\end{document}